\documentclass[preprint,12pt]{elsarticle}

\usepackage{amssymb}
\usepackage{amsmath}
\usepackage{amsthm}
\usepackage{pgffor}
\usepackage{multirow}
\usepackage{graphicx}
\usepackage{subfigure}
\usepackage{caption}
\usepackage{bm}
\usepackage{comment}
\usepackage{enumitem}
\usepackage[misc]{ifsym}
\usepackage[utf8]{inputenc}
\usepackage[T1]{fontenc}   
\usepackage[colorlinks,bookmarksopen,bookmarksnumbered]{hyperref}
\usepackage{url}           
\usepackage{booktabs}      
\usepackage{amsfonts}      
\usepackage{nicefrac}      
\usepackage{microtype}     
\usepackage{xcolor}        
\usepackage{algorithm}
\usepackage{algorithmic}
\usepackage[algo2e,algoruled,boxed,lined]{algorithm2e}
\usepackage{mathtools}
\usepackage{pifont}
\usepackage{natbib}


\newtheorem{thm}{Theorem}
\newtheorem{lem}[thm]{Lemma}
\newtheorem{prop}[thm]{Proposition}

\newtheorem{defn}[thm]{Definition}

\newtheorem{rem}[thm]{Remark}
\newcommand{\cmark}{\ding{51}}%
\newcommand{\xmark}{\ding{55}}%

\DeclareMathOperator{\sgn}{sgn}

\journal{Artificial Intelligence}

\begin{document}

\begin{frontmatter}

\title{One-shot Machine Teaching: Cost Very Few Examples to Converge Faster}

\address[inst1]{School of Artificial Intelligence, Jilin University, Changchun, 130012, China}

\address[inst2]{Centre for Frontier AI Research,  A*STAR, 138632, Singapore}

\author[inst1]{Chen Zhang}
\ead{zchen22@mails.jlu.edu.cn}

\author[inst1]{Xiaofeng Cao\corref{cor1}}
\ead{xiaofengcao@jlu.edu.cn}
\cortext[cor1]{Corresponding author: Dr. Xiaofeng Cao.}

\author[inst1]{Yi Chang}
\ead{yichang@jlu.edu.cn}

\author[inst2]{Ivor W Tsang}
\ead{ivor_tsang@ihpc.a-star.edu.sg}

\begin{abstract}

Artificial intelligence is to teach machines to take actions like humans. To achieve intelligent teaching, the machine learning community becomes to think about a promising topic named machine teaching where the teacher is to design the optimal (usually minimal) teaching set given a target model and a specific learner. However, previous works usually require numerous teaching examples along with large iterations to guide learners to converge, which is costly. In this paper, we consider a more intelligent teaching paradigm named one-shot machine teaching which costs fewer examples to converge faster. Different from typical teaching, this advanced paradigm establishes a tractable mapping from the teaching set to the model parameter. Theoretically, we prove that this mapping is surjective, which serves to an existence guarantee of the optimal teaching set. Then, relying on the surjective mapping from the teaching set to the parameter, we develop a design strategy of the optimal teaching set under appropriate settings, of which two popular efficiency metrics, teaching dimension and iterative teaching dimension are one. Extensive experiments verify the efficiency of our strategy and further demonstrate the intelligence of this new teaching paradigm.
\end{abstract}

\begin{keyword}
Machine teaching \sep New teaching paradigm   \sep One-shot teaching \sep Converge \sep Mapping \sep Teaching dimension
\end{keyword}

\end{frontmatter}

\section{Introduction}
\label{intro}

Artificial intelligence \cite{poole1998computational, nilsson1998artificial, luger2005artificial, russell2010artificial} is to teach machines to be equipped with the ability including understanding, synthesizing, and inferring like humans. To achieve such intelligent teaching \cite{musial2012comment, rivest1995being, cakmak2014eliciting}, the machine learning community follows the human teaching paradigm to ponder an inverse problem to machine learning, which is termed as machine teaching (MT) \cite{zhu2015machine, zhu2018overview}: searching the optimal (usually minimal) teaching set\footnote{Teacher collects a set of teaching examples to supervise the training of learner, and the teaching set is a training set for learner.} given a target model and a specific learner. In real world, MT as an intelligent paradigm has benefits in a wide variety of applications, e.g., crowd sourcing \cite{singla2013actively, singla2014near}, transfer learning \cite{pan2009survey}, education and personnel training of human \cite{patil2014optimal, zhu2015machine} and cyber-security \cite{mei2015using, alfeld2016data, alfeld2017explicit}.

Considering the interaction fashion between teachers and learners, MT is studied from either traditional \cite{goldman1996teaching,zhu2013machine} or iterative \cite{liu2017iterative} paradigms. Regarding learners to be non-gradient-based, traditional teachers invoke batch sampling with prior knowledge on the underlying parameter distribution \cite{zhu2013machine,zhu2015machine,liu2016teaching}, then give a teaching set to learners without further communication (leave training to learners own). It can be analogous to a real-world case that a zoologist only provides several representative animal images for a student once to teach how to distinguish the cheetah from the cat, and there is not following interaction between them. The minimal count of provided images is called teaching dimension \cite{goldman1995complexity}. Differently, capturing the iterative manner of model training \cite{liu2017iterative}, iterative teachers sequentially feed teaching sets to gradient-based learners, which is more applicable particularly in large-scale scenarios \cite{xu2021locality} as it produces a concrete and practicable iterative algorithm (teaching strategy). Recall the animal recognition instance, an iterative zoologist would provide images in rounds based on current mastery degree of learners.  The number of rounds is called iterative teaching dimension \cite{liu2017iterative}.

However, contemporary strategies require large iterations as well as numerous examples. This is far from one ultimate goal of MT from a perspective of teaching cost, one-shot iteration: takes one, or a few examples for one-iteration convergence \cite{polyak1964some, fei2006one}. Besides, this objective is also the requirement of high efficiency and low costs for diverse artificial intelligence applications, e.g., autonomous driving \cite{suchan2021commonsense, reece1995control, xie2022distributed} and smart robots \cite{lemaignan2017artificial, alomari2022online}, and serves as a stepping stone towards applying MT in more fields, e.g., few/one-shot learning \cite{ravi2016optimization, snell2017prototypical} and imitation learning \cite{duan2017one}.

\begin{figure}[t]
	\vskip -0.1in
	\centering\includegraphics[width=\columnwidth]{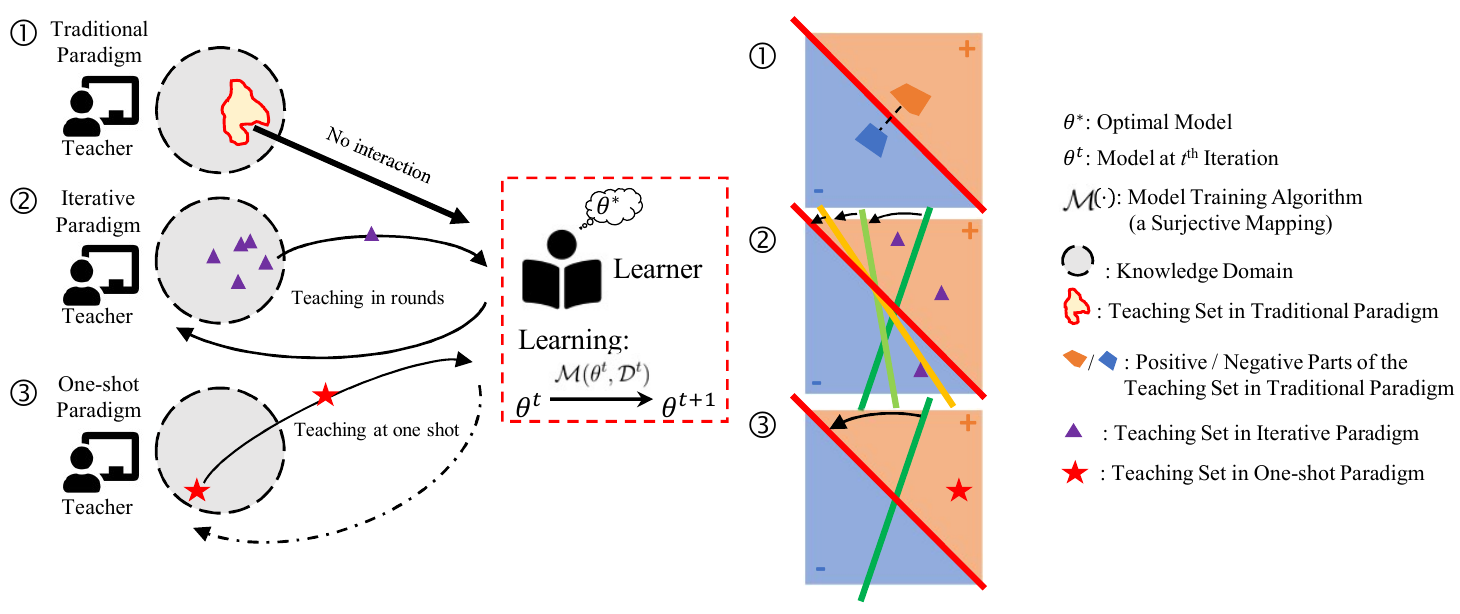}
	\caption{Machine teaching for a binary classification task. Traditional teacher feeds teaching sets to the learner unilaterally without further interaction; iterative teacher provides teaching sets iteratively and requires large iterations; one-shot teacher only provides a singleton to help learner converge within one iteration.}
	\label{OSTS_diff}
\end{figure}

To achieve that goal, we propose a new paradigm named one-shot machine teaching, which requires the optimal teaching set to be a singleton that can guide learners to converge after one iteration. A brief comparison among three paradigms is in Figure \ref{OSTS_diff}. This advanced paradigm establishes a mapping \cite{halmos1960naive, kunen2014set} from the teaching set to the model parameter, which allows us to analyze the essential relation between teaching sets and model parameters. Further, we theoretically prove that this mapping is surjective \cite{bourbaki2006theorie}, which indicates there exists a singleton (i.e., the optimal teaching set) such that the target model can be learnt after one iteration (mapping once).  Then, relying on the surjective mapping from teaching sets to model parameters, we present a strategy called one-shot teaching strategy to solve for the optimal teaching set, of which teaching dimension and iterative teaching dimension are one. By exploring chain expansion of gradient across examples and parameters \cite{bentler1978matrix}, our strategy analytically derives a canonical singleton as the optimal teaching set \cite{crammer2006online} under proper settings and tells the construction approach for teachers with different teaching set candidates (a.k.a. the knowledge domain). We eventually verify efficiency of our strategy with extensive experiments, which demonstrates the intelligence of one-shot machine teaching.

To summarize, the contributions of this work are as follows.
\begin{itemize}
	\vspace{-3pt}
	\item We study a new paradigm named One-shot Machine Teaching (OSMT), which is to search a singleton to achieve one ultimate goal of machine teaching: taking fewer examples to aggressively converge within one iteration. OSMT establishes a mapping proved to be surjective from teaching sets to model parameters, which brings convenience to analyze the fundamental relation between teaching sets and model parameters.
	\vspace{-3pt}
	\item We provide a solution called One-shot Teaching Strategy (OSTS) to design the optimal teaching set for different teachers under proper settings. Specifically, with surjective mapping from teaching sets to model parameters, OSTS exploits chain expansion of gradients and analytically derives a canonical singleton as the optimal teaching set, of which teaching dimension and iterative teaching dimension are one.
	\vspace{-3pt}
	\item We empirically validate the efficiency of OSTS: costs few examples and achieve one-iteration convergence. This demonstrates the fascinating evidence that intelligent one-shot machine teaching can strikingly improve the efficiency of teaching.
	\vspace{-8pt}
\end{itemize}

\section{Related Works}\label{rwk}

This paper is closely related to Machine Teaching (MT) especially Iterative Machine Teaching \cite{liu2017iterative}. We review the development of MT including traditional and iterative paradigms, and we situate this work in it. Besides, we also discuss the connection and difference between this work and some relevant one-shot machine learning works as they mention the same terminology, one-shot.

\subsection{Machine teaching}

As \citet{rivest1995being} pointed that being taught is faster than learning, MT considers an inverse problem to machine learning by following the human teaching paradigm. The promising application value of MT has been verified over diverse domains including human education \cite{khan2011humans, johns2015becoming}, cyber security \cite{barreno2010security, mei2015using, mei2015security} and human-machine interaction \cite{brochu2010bayesian, cakmak2011mixed, cakmak2014eliciting, meek2016analysis, schmit2018human}, reinforcement learning \cite{chuang2020using,zhang2020sample}. 

Traditional paradigms \cite{shinohara1991teachability, goldman1996teaching, ben1998self,zhu2013machine} towards lower teaching cost began an early exploration where teachers are assumed to unilaterally convey teaching sets once (without further communication) and know the underlying parameter distribution and feature space. The most popular quantity of teaching cost is the teaching dimension \cite{goldman1995complexity} which characterizes the size of the teaching set. Meanwhile, \citet{zilles2008teaching,  zilles2011models} and \citet{chen2018near} analyze the machine teaching problem in a theoretical sight as well. Moreover, \citet{balbach2009recent} summarize developments in the algorithmic teaching, and \citet{zhu2015machine} presents a general teaching framework closely connecting to curriculum learning \cite{bengio2009curriculum, khan2011humans, lessard2019optimal} and knowledge distillation \cite{hinton2015distilling}. The most related work in the traditional paradigm of this paper is \citet{liu2016teaching}. It discusses the teaching dimension of linear learners, but is not aware of the iterative interaction between teachers and learners, or produces a practical iterative teaching strategy, or considers pool-based teachers, all of which are considered in this work. Besides, Theorem \ref{ost} in this paper shows that we touch a tighter bound of teaching dimension.

Differently, capturing the iterative fashion of model training \cite{liu2017iterative}, iterative paradigms explore towards one-shot iteration further, in which teachers feed teaching sets to gradient-based learners step by step. The teaching cost is quantified by iterative teaching dimension \cite{liu2017iterative}. This type of paradigm is more practicable in complicated (e.g., large-scale) tasks \cite{xu2021locality} as it introduces an iterative algorithm (teaching strategy) to simplify and solve problems gradually \cite{liu2021iterative}. Besides, \citet{liu2018towards} propose cross-space teaching towards black-box \cite{angluin1997teachers} scenarios, and \citet{cicalese2020teaching} focus on weaker black-box learners and provide their insight into limited information teaching. At the meantime, \citet{hunziker2018teaching} and \citet{dasgupta2019teaching} study the forgetful case. Moreover, \citet{zhou2020crowd} use iterative machine teaching to analyze crowd teaching with imperfect labels, and \citet{mosqueira2021integrating} apply the combination of iterative machine teaching and active learning in machine learning loop, which implies the close relation between iterative machine teaching and active learning \cite{gopfert2020intuitiveness}. There has been a recent growth of interest in iterative paradigms, which covers diverse fields including end-to-end teaching \cite{fan2018learning}, inverse reinforcement learning \cite{kamalaruban2019interactive}, text classification \cite{mallinar2020iterative}, Internet-of-Things \cite{xu2021locality}, human-robot interaction \cite{losey2018responding,yeo2019iterative,han2019robust,sanchez2021people}, computer vision \cite{wang2021gradient,wang2021machine}. Compared with contemporary iterative works, our intelligent one-shot paradigm dramatically cuts down the teaching cost (lower iterative teaching dimension) and achieves one-shot iteration.

\subsection{One-shot machine learning}

One-shot learning attempts to train models from a limited number of examples \cite{vinyals2016matching,fei2006one,wang2020generalizing}, motivated by a human learning ability (learn new concepts with very few instances). Many works attempt to solve this challenge from diverse directions including encoder training \cite{pan2020teach,bajracharya2020mobile} and example generation \cite{rezende2016one,vinyals2016matching,tjandra2018machine}.

The encoder training methods \cite{schwartz2018delta, chen2019multi} attempt to train an encoder such that it can thoroughly learn the inside characters and structures of each example. Some works \cite{pan2020teach,bajracharya2020mobile} may call such encoder training as teaching. Specifically, teaching in \cite{pan2020teach} is to train a Bidirectional Long Short Term Memory encoder, and \citet{bajracharya2020mobile} let human use virtual reality to take some actions for training the encoder inside robots such that robots can reproduce these behaviors. Essential, such encoder training is to search the optimal parameter of encoders based on the training set with limited volume, which is a typical optimal model parameter searching (i.e., machine learning) procedure \cite{mitchell1997machine}. In contrast, MT is to find the optimal teaching set based on a target model, which is an inverse procedure of machine learning \cite{zhu2015machine}. As a teaching paradigm, proposed OSMT is to search the optimal teaching set with smaller cardinality such that learners can converge after one-iteration training.

The generative approaches \cite{salakhutdinov2012learning, lake2015human} attempt to enrich training sets based on very fewer available examples to meet the training requirement of modern models especially deep neural networks. \citet{rezende2016one} develop a machine learning systems to generate new examples such that there is not significant difference between generated and existent examples. \citet{vinyals2016matching} share the similar idea as augmentation, which attempts to map a small labeled support set and an unlabeled example to its labels. Besides, \citet{tjandra2018machine} combine Tacotron with DeepSpeaker to generate speeches from different speakers. However, whether each generated example is helpful for the convergence of a model is not discussed. Differently, OSTS with the prior knowledge of $\theta^*$ (usually come from the pretrained model or expert experience) generates examples such that they can help learners converge after one iteration, which can be viewed as an inverse procedure of one-shot machine learning \cite{zhu2015machine}.  Fundamentally, the difference comes from different settings between machine learning and machine teaching \cite{zhu2018overview}. Besides, the purpose of generation (e.g., linear combination and scaling) in OSTS for pool-based teachers is for the necessity of synthetic examples \cite{polyak1964some, crammer2006online} rather than similarity between synthetic and existing examples.

\section{Preliminaries}\label{fs}

\subsection{Notations}
Let $\mathcal{X}\subseteq\mathbb{R}^n$ be a $n$ dimensional example (a.k.a. feature and input) space and $\mathcal{Y}\subseteq\mathbb{R}\,(\text{Regression})\,\text{or}\,\mathcal{Y}=\left\{-1,1\right\}(\text{Classification})$ be a label (a.k.a. output) space. The standard basis of $\mathbb{R}^n$ is denoted as $\{e_i|1\leq i\leq n\}$ where $e_i$ means the vector with a 1 in the $i^\text{th}$ coordinate and 0's elsewhere. A teaching example (could be abbreviated as example) is a pair $(x,y)\in\mathcal{X}\times\mathcal{Y}$. The optimal teaching example is $(x^*,y^*)$. A size $k$ teaching set is defined as $\mathcal{D}=\{(x_1,y_1),\dots(x_k,y_k)\}=\{(x_i,y_i)\}_{i=1}^k$ with replacement and $\{(x_i,y_i)\}_{i=1}^0\coloneqq\emptyset$. The optimal one is $\mathcal{D}^*$. We omit the set symbol $\{\cdot\}$ for convenience if $\mathcal{D}$ is a singleton. The collection of teaching set candidates is denoted by $\mathbb{D}\ni\mathcal{D}$, which is also called knowledge domains of teachers \cite{liu2017iterative}. Parameter is denoted by $\theta\in\Theta\subseteq\mathbb{R}^n$, and optimal and initial parameters are $\theta^*$ and $\theta^0$, respectively. A model is a mapping $f(x,\theta): \mathcal{X}\times\Theta\to\mathcal{Y}$ assumed to be identified by its parameter. In this paper, we are mainly concerned with models of linear family $f(\theta,x)=\langle\theta,x\rangle$ (i.e., allow inner product $\langle\cdot,\cdot\rangle$ defined in Euclidean vector space across $\mathcal{X}$ and $\Theta$ since they essentially are the subspace of the Euclidean vector space), and connect it with the generalized linear model \cite{mccullagh2019generalized} by the typical convex loss function $\ell(f,y)$ (e.g., square, logistic and hinge loss). The result of this paper can be generalized to the reproducing kernel Hilbert space $\mathcal{H}$ equipped with $\langle\cdot,\cdot\rangle_\mathcal{H}$ by kernel methods \cite{shawe2004kernel, hofmann2008kernel}.

\subsection{Machine teaching}

Machine teaching considers the selection of teaching set for learners such that the model parameter $\theta^*$ can be learnt with fewer teaching examples, and teachers are only allowed to teach via examples rather than parameter clone directly \cite{zhu2015machine}. It has been framed into a bilevel optimization over $\mathbb{D}$ \cite{zhu2018overview}:
\begin{eqnarray}\label{eq1}
	\begin{aligned}
		&\mathcal{D}^*=\underset{\mathcal{D}\in\mathbb{D}}{\arg\min}\quad \mathcal{B}(\hat{\theta},\theta^*)+\tau \mathcal{C}(\mathcal{D})\\
		&\text{s.t.}\quad\hat{\theta}=\textbf{ML}(\mathcal{D})
	\end{aligned},
\end{eqnarray}
where $\mathcal{B}(\cdot,\theta^*)$ is a class of distance  (e.g., L-2 norm $\mathcal{B}(\hat{\theta},\theta^*)=\|\hat{\theta}-\theta^*\|_2$) measuring the learner bias w.r.t. $\theta^*$, $\mathcal{C}(\cdot)$ is the teaching cost (e.g.,  $\mathcal{C}(\mathcal{D})=\left\|\mathcal{D}\right\|_0\geq1$ with cardinality operator $\left\|\cdot\right\|_0$) regularized by a constant $\tau$, and  \textbf{ML} represents the model training (i.e., machine learning) procedure of learners. With prior knowledge about $\theta^*$, the traditional and iterative paradigms are provably more efficient than machine learning. However, from a perspective of teaching cost, one-shot iteration (taking a smaller teaching set $\mathcal{D}\in\mathbb{D}$ for converging to $\theta^*$ within one iteration) is still under exploration even tradition and iterative paradigms have made great progress towards it.

\subsection{Traditional paradigms}

\begin{figure}[ht]
	\subfigbottomskip=-6pt
	\centering
	\subfigure{\includegraphics[width=0.8\columnwidth]{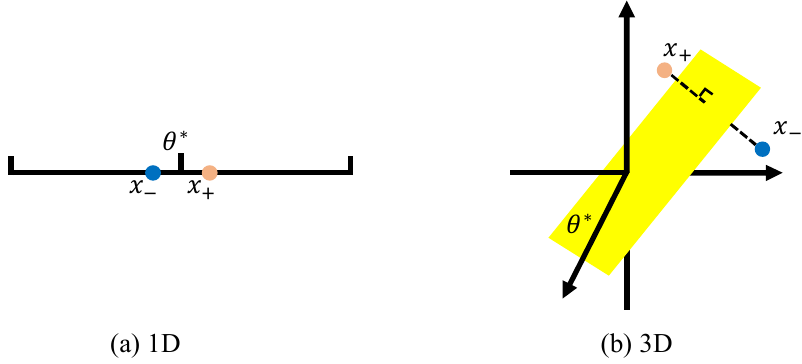}}
	\caption{1D and 3D cases of constructing $\mathcal{D}^*$ under traditional paradigms. $\theta^*$ is the midpoint of line segment $x_-x_+$ in 1D, and vertically bisects $x_-x_+$ in 3D case.}
	\label{traditionalTeacher}
	\vskip -0.1in
\end{figure}

Traditional paradigms where teachers only provide a teaching set without additional interaction with learners explore one-shot iteration via batch sampling with solid prior knowledge including concrete $\theta^*$, example distributions and feature spaces. Considering a 1D threshold classifier teaching, given $\theta^*\in\Theta=\mathbb{R}$ over uniform distribution, and 1D feature space $\mathcal{X}=\mathbb{R}$, teachers pick two teaching examples whose feature vectors satisfy \cite{zhu2018overview}
$$\frac{x_-+x_+}{2} =\theta^*$$
with the negative $x_-$ and the positive $x_+$. Analogously, when teaching a hard-margin support vector machine (SVM) of homogeneous version in high-dimensional space, e.g., 3D cases, let $\theta^*=(a^*,b^*,c^*)^T$, teachers pick $x_+=(a_+,b_+,c_+)^T$ and $x_-=(a_-,b_-,c_-)^T$ satisfying that vector $x_+-x_-$ is orthogonal to hyperplane corresponding to $\theta^*$, and the distance between $x_+$ and $\theta^*$ is the same as that between $x_-$ and $\theta^*$ \cite{zhu2018overview}, which can be expressed as
\begin{eqnarray}
	\begin{aligned}
		\left\{
		\begin{array}{lr}
			a^*(a_+-a_-)+b^*(b_+-b_-)+c^*(c_+-c_-)=0\\	\frac{a^*a_++b^*b_++c^*c_+}{\sqrt{{a^*}^2+{b^*}^2+{c^*}^2}}=-\frac{a^*a_-+b^*b_-+c^*c_-}{\sqrt{{a^*}^2+{b^*}^2+{c^*}^2}}
		\end{array}
		\right.		
	\end{aligned}.\nonumber
\end{eqnarray} 
Above instances are visualized in Figure \ref{traditionalTeacher}.

\subsection{Iterative paradigms}\label{imt}

In contrast, iterative paradigms only requires concrete $\theta^*$ to construct $\mathcal{D}^*$ via iteratively searching. With such weaker requirement and iterative manner, these type of paradigms are more practicable especially when it is computationally infeasible to train over the entire dataset \cite{xu2021locality}. Generally, the iterative paradigm constructs $\mathcal{D}^*$ with focus on a specified algorithm in training procedures of learners \cite{liu2021iterative}. In more detail, it considers $\hat{\theta}=\textbf{ML}(\mathcal{D})$ as
\begin{equation}\label{sgd}
	\hat{\theta}=\arg\underset {\theta\in\Theta}{\min}\mathbb{E}_{(x,y)\sim\mathbb{P}(x,y)}\left[\ell(f(x,\theta),y)\right],
\end{equation}
whose parameter is specifically learnt by mini-batch gradient descent algorithm \cite{li2014efficient,liu2017iterative}:
\begin{equation}\label{eq3}
	\theta^{t+1}\gets\theta^t-\frac{\eta^t}{k}\sum_{i=1}^k\nabla_\theta \ell\left(f(\theta^t,x^t_i), y^t_i\right),
\end{equation}
where $t=0,1,\dots,T$ is the iteration number, $\eta^t>0$ is learning rate (a small constant), $k\geq1$ is the batch size, $x^t_i$ denotes the $i^\text{th}$ example features at the $t^\text{th}$ iteration corresponding to label $y^t_i$ and $\ell(\cdot,\cdot)$ is the convex loss function. For succinctness, Eq.(\ref{eq3}) can be rewritten as 
\begin{equation}\label{scsgd}
	\theta^{t+1}\gets\theta^t-\eta^t\mathcal{G}(\theta^t;\mathcal{D}^t|\ell;f),
\end{equation}
where $\mathcal{G}(\cdot|\ell;f)$ denotes the gradient function when the family of both loss $\ell$ and model $f$ are given.

Specifically, we are concerned with one-shot teaching under applicable \cite{liu2021iterative, xu2021locality} iterative settings where the learner is gradient-based and its training algorithm is iterative. Besides, following the realistic fact that human teachers would teach familiar learners better \cite{feldman1976superior,shishavan2009characteristics}, we focus on the case learners are well-known for teachers, which means the teacher knows above factors (e.g., $\eta$ and $\theta^t$) of gradient-based learners \cite{liu2017iterative, liu2021iterative}. The unfamiliar learner case where teachers do not know some factors can be studied with surrogate methods \cite{queipo2005surrogate,liu2018towards}.

\section{One-shot Teaching}\label{pimt}

In this section, we firstly establish a tractable mapping from teaching sets to model parameters, which is proved to be surjective and corroborates the existence of the optimal teaching set. Since the teaching set is closely related to the knowledge domain of teachers, we, meanwhile, present further discussion on different types of teachers. Then, we give the highly-efficient one-shot teaching strategy (OSTS) for different teachers, which solves for the optimal teaching set analytically and derives a canonical singleton. Finally, we provide the case study about typical learner loss functions, which verifies generalizability of OSTS. 

\subsection{One-shot paradigm}

The one-shot paradigm aims to design a singleton as the optimal teaching set to steer learners to converge after one iteration. It attempts to achieve one-shot iteration by establishing a tractable mapping from teaching sets to model parameters, and only requires concrete $\theta^*$ as well. First of all, we present an expansion insight of the gradient across teaching sets and parameters in model training algorithm Eq.(\ref{scsgd}) based on chain rule.

\begin{prop}\label{epg}
	Given a specific teaching example $(x,y)$, the gradient of loss function $\ell$ w.r.t. the model $f(\theta,x)=\langle\theta,x\rangle$ can be expended as a constant times the example vector based on chain rule:
	\begin{equation}
		\mathcal{G}(\theta;(x,y)|\ell;f)=\nabla_f \ell(f(\theta,x) , y)\cdot x\coloneqq\zeta \cdot x,
	\end{equation}
	where it omits the set symbol $\{\cdot\}$ out of $\{(x,y)\}$ for convenience.
\end{prop}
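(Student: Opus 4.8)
The plan is to unfold the definition of $\mathcal{G}$ and apply the single-variable chain rule, exploiting that the model $f(\theta,x)=\langle\theta,x\rangle$ is scalar-valued and that the loss sees $\theta$ only through this scalar. First I would recall from Eq.~(\ref{scsgd}) that, for a singleton teaching set, $\mathcal{G}(\theta;(x,y)\,|\,\ell;f)=\nabla_\theta\,\ell\!\left(f(\theta,x),y\right)$, so the claim reduces to computing $\nabla_\theta\,\ell(f(\theta,x),y)$.

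Next I would observe that $\theta\mapsto\ell(f(\theta,x),y)$ is the composition of the (affine-in-$\theta$) scalar map $\theta\mapsto f(\theta,x)=\langle\theta,x\rangle$ with the convex loss $s\mapsto\ell(s,y)$. By the chain rule, $\nabla_\theta\,\ell(f(\theta,x),y)=\frac{d}{ds}\ell(s,y)\big|_{s=f(\theta,x)}\cdot\nabla_\theta f(\theta,x)$; writing $\nabla_f\ell(f(\theta,x),y)$ for the scalar derivative $\frac{d}{ds}\ell(s,y)\big|_{s=f(\theta,x)}$, it only remains to evaluate $\nabla_\theta f(\theta,x)$.

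Then I would compute $\nabla_\theta f(\theta,x)=\nabla_\theta\langle\theta,x\rangle=x$, the elementary fact that the gradient of a linear functional with respect to its argument is the defining vector (component-wise, $\partial\langle\theta,x\rangle/\partial\theta_i=x_i$). Substituting back gives $\mathcal{G}(\theta;(x,y)\,|\,\ell;f)=\nabla_f\ell(f(\theta,x),y)\cdot x$, which we denote $\zeta\,x$ with $\zeta\in\mathbb{R}$ because $f$ is scalar-valued; this is exactly the asserted decomposition. The same argument carries over verbatim to the RKHS case, with $\langle\cdot,\cdot\rangle$ replaced by $\langle\cdot,\cdot\rangle_{\mathcal{H}}$ and $x$ by the feature map of $x$.

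The only delicate point — and the step I would spend care on — is differentiability of $\ell$: for smooth losses (square, logistic) the display above is literal, but for the hinge loss $\ell$ is merely subdifferentiable, so $\zeta$ should be read as a selected element of the subdifferential $\partial_s\ell(s,y)\big|_{s=f(\theta,x)}$ and $\mathcal{G}$ as the corresponding subgradient of the composite map, which is legitimate by the subdifferential chain rule for a convex function composed with an affine map. With that reading, the factorization $\mathcal{G}=\zeta\,x$ holds for all typical convex learner losses considered in the paper.
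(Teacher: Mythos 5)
Your proposal is correct and follows essentially the same route the paper takes: the proposition is justified exactly by applying the chain rule to the composition $\theta\mapsto\ell(\langle\theta,x\rangle,y)$ and using $\nabla_\theta\langle\theta,x\rangle=x$, which is precisely the expansion the paper reuses in Eq.~(\ref{eq5}) of Theorem~\ref{ost}. Your extra remark on reading $\zeta$ as a subgradient for the hinge loss is a reasonable refinement (the paper instead restricts attention to examples with $1-yf(\theta^0,x)>0$ in Proposition~\ref{prop10}), but it does not change the argument.
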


The teacher is an agent who knows the target model and attempts to search examples for feeding learners. The Learner is an agent who update its parameter based on received teaching set via the model training algorithm Eq.(\ref{scsgd}). Fundamentally, Eq.(\ref{scsgd}) tells the relation between the teaching set (as the input) and the model parameter (as the output), thus we make use of this relation to establish a tractable mapping from teaching sets to model parameters: let $\eta$ and input $\theta$ be fixed, each input $\mathcal{D}$ yields a corresponding output $\theta'_{\mathcal{D}}$, and the assignment is characterized by Eq.(\ref{scsgd}) whose domain is $\mathbb{D}$ and codomain is $\Theta$. Similarly, let $\eta$ and input teaching set $\mathcal{D}$ be fixed, each input $\theta$ yields a corresponding output $\theta'_{\theta}$, and Eq.(\ref{scsgd}) defines the assignment manner, of which domain and codomain are $\Theta$ in this case. These derive Lemma \ref{smp}.

\begin{lem} \label{smp}
	Given a specific learner ($\ell$ and $\eta$ are fixed), the mini-batch gradient descent algorithm (Eq.(\ref{scsgd})) is a tractable mapping $\mathcal{M}(\theta,\mathcal{D}): \Theta\times\mathbb{D}\mapsto\Theta$ satisfying
	\begin{enumerate}[label={\rm (\alph*)}]
		\setlength\itemsep{0.0em}
		\item $\mathcal{M}$ is surjective when $\mathcal{D}$ is fixed: $\forall \theta' \in\Theta,\exists\theta\in\Theta$ such that $\mathcal{M}(\theta|\mathcal{D})=\theta'$
		\item $\mathcal{M}$ is surjective when input $\theta$ is fixed: $\forall \theta' \in\Theta,\exists\mathcal{D}\in\mathbb{D}$ such that $\mathcal{M}(\mathcal{D}|\theta)=\theta'$.
	\end{enumerate}
\end{lem}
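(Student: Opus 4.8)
The plan is to read $\mathcal{M}$ straight off the learner's update rule: define $\mathcal{M}(\theta,\mathcal{D})$ to be the right-hand side of Eq.(\ref{scsgd}) with the (fixed) learning rate $\eta$ and loss $\ell$. By Proposition \ref{epg} this has the closed form $\mathcal{M}(\theta,\mathcal{D})=\theta-\frac{\eta}{k}\sum_{i=1}^{k}\zeta_i x_i$ with $\zeta_i=\nabla_f\ell(\langle\theta,x_i\rangle,y_i)$, so every output is a concrete computable vector, which is exactly what we mean by $\mathcal{M}$ being \emph{tractable}; it then remains to verify the two surjectivity claims.

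For (a), I would fix $\mathcal{D}=\{(x_i,y_i)\}_{i=1}^{k}$ and study $g(\theta)\coloneqq\mathcal{M}(\theta\mid\mathcal{D})$ as a self-map of $\Theta\cong\mathbb{R}^{n}$. Convexity of $\ell$ in $f$ makes each $\zeta_i$ non-decreasing in $\langle\theta,x_i\rangle$, hence $(\zeta_i(\theta_1)-\zeta_i(\theta_2))\langle x_i,\theta_1-\theta_2\rangle\geq0$; for the smooth losses one additionally has $|\zeta_i(\theta_1)-\zeta_i(\theta_2)|\leq L\,|\langle x_i,\theta_1-\theta_2\rangle|$ with $L$ a bound on $\ell''$. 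Putting these together gives $\langle g(\theta_1)-g(\theta_2),\theta_1-\theta_2\rangle\geq(1-\eta L\bar{R}^{2})\|\theta_1-\theta_2\|^{2}$ with $\bar{R}$ bounding $\|x_i\|$, which is strictly positive because $\eta$ is a small constant; since a continuous, strongly monotone self-map of $\mathbb{R}^{n}$ is a bijection (Browder--Minty), $g$ is surjective. For the square loss this is even clearer, as $g$ is then affine, $g(\theta)=(I-\frac{\eta}{k}\sum_i x_i x_i^{T})\theta+\frac{\eta}{k}\sum_i y_i x_i$, with invertible linear part once $\eta\bar{R}^{2}<1$; the non-smooth hinge loss I would treat by a direct piecewise-affine argument, noting that $g$ is the identity where the margin is already satisfied and a fixed translation elsewhere, and the two image pieces together cover $\mathbb{R}^{n}$.

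For (b), I would fix $\theta$, take an arbitrary target $\theta'\in\Theta$, and look for $\mathcal{D}\in\mathbb{D}$ with $\mathcal{M}(\mathcal{D}\mid\theta)=\theta'$, equivalently $\mathcal{G}(\theta;\mathcal{D})=v$ where $v\coloneqq(\theta-\theta')/\eta$. It suffices to realise $v$ with a singleton $\mathcal{D}=\{(x,y)\}$, i.e.\ to solve $\nabla_f\ell(\langle\theta,x\rangle,y)\,x=v$. For $v\neq0$ this forces $x$ to be collinear with $v$, so write $x=c\,v$ and reduce to the scalar equation $c\,\nabla_f\ell\!\left(c\langle\theta,v\rangle,y\right)=1$; in the regression case I would solve it by reading off $y$ directly (choosing, say, $\zeta=1$, so $x=v$ and $y$ is whatever makes $\nabla_f\ell(\langle\theta,v\rangle,y)=1$), while in the classification case I would fix $y\in\{-1,1\}$, choose the sign of $c$, and invoke the intermediate value theorem using the monotonicity, range and limiting behaviour of $\nabla_f\ell$. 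The case $v=0$ is covered by the empty teaching set under the convention $\mathcal{M}(\theta\mid\emptyset)=\theta$, or by a stationary singleton such as $(x,\langle\theta,x\rangle)$ for the square loss.

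I expect (b) to be the main obstacle. Part (a) only uses the smallness of $\eta$ and generic convexity/smoothness of $\ell$, but (b) hinges on actually solving $c\,\nabla_f\ell(c\langle\theta,v\rangle,y)=1$, and here the classification constraint $y\in\{-1,1\}$ bites: for losses with bounded derivative (logistic, hinge) the product $c\,\nabla_f\ell(c s,y)$ has limited reach in one direction, so for some $(\theta,\theta')$ a single scaled example may not suffice and one must either enlarge the teaching set, restrict $(\theta,\theta')$, or impose more structure. This is also where the knowledge domain $\mathbb{D}$ enters — it must actually contain the example being constructed, which is the reason the paper then separates synthesis-based teachers from pool-based teachers, the latter needing $\mathbb{D}$ closed under scaling and linear combination; I would state that assumption explicitly and then verify the scalar equation case by case for square, logistic and hinge loss.
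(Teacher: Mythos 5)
Your part (a) takes a genuinely different route from the paper. The paper's own proof is a two-line explicit construction: it restricts to a singleton $(x,y)$, writes $\mathcal{M}(\theta\mid(x,y))=\theta-\eta\zeta x$ via Proposition \ref{epg}, and then simply sets $\theta_{\theta'}=\theta'+\eta\zeta x$ for (a) and $x_{\theta'}=(\theta-\theta')/(\eta\zeta)$ for (b), using only that $\eta$ and $\zeta$ are nonzero scalars. Your monotone-operator argument (strong monotonicity plus Browder--Minty, the affine computation for square loss, the piecewise argument for hinge) buys rigor that the paper's construction does not have: since $\zeta=\nabla_f\ell(\langle\theta,x\rangle,y)$ depends on $\theta$, the paper's assignment $\theta_{\theta'}=\theta'+\eta\zeta x$ is really an implicit fixed-point equation, and your condition $\eta L\bar R^{2}<1$ is exactly what makes it solvable --- for the square loss one needs $I-2\eta xx^{T}$ invertible, i.e.\ $2\eta\|x\|_2^{2}\neq1$, a corner case the paper never checks. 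The price is extra hypotheses (smoothness of $\ell$, smallness of $\eta$) that the lemma does not state but the paper assumes informally when it calls $\eta$ ``a small constant''.

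For part (b) you follow essentially the same reduction as the paper (take $x$ collinear with $v=(\theta-\theta')/\eta$ and solve the scalar equation), and the obstruction you flag is real rather than a misunderstanding: the paper divides through by $\eta\zeta$ as if $\zeta$ were a freely choosable nonzero constant, but $\zeta$ is a function of $(x,y)$, and for classification losses with $y\in\{-1,1\}$ it is bounded (logistic) or of fixed magnitude (hinge). Concretely, for an active hinge example the update gives $\theta'-\theta=\eta yx$ with $y\langle\theta,x\rangle<1$, so $\langle\theta,\theta'-\theta\rangle<\eta$ is forced (and averaging over a larger batch does not remove this), so surjectivity in $\mathcal{D}$ fails for some pairs $(\theta,\theta')$ unless one restricts them or treats $\zeta$ as prescribable, which is only genuinely true for regression-type losses where the real-valued label makes $\zeta$ arbitrary. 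The paper never confronts this because its case studies (Propositions \ref{prop9}--\ref{prop12}) take $\theta^{0}=0$, where the constraint is vacuous; your plan to state the needed assumption explicitly and verify the scalar equation loss by loss is the honest completion of the paper's argument, not a detour. One small correction: the knowledge-domain worry belongs to the complete-teacher reading of $\mathbb{D}$ that this lemma implicitly uses; the pool-based restrictions (closure under scaling or linear combination) are handled separately by the paper in Theorem \ref{snc}, so they need not be folded into the proof of this lemma.
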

\begin{proof} Without loss of generality, we are concerned with $\left\|\mathcal{D}\right\|_0=1$ case for simplicity, i.e., $\mathcal{D}$ has one element $(x,y)$. The proof when $\left\|\mathcal{D}\right\|_0>1$ can be derived analogously.
	
	(a): For proving $\mathcal{M}$ is surjective in its first arguments, we need to find $\theta_{\theta'}$ such that $\theta'=\mathcal{M}(\theta_{\theta'}|\mathcal{D})$. Since $\Theta$ and $\mathcal{X}$ fundamentally are the subset of a vector space $\mathbb{R}^n$, the operations of vector addition and scalar multiplication is well defined. With fixed $(x,y)$, specific $\ell$, $f$ and $\eta$, $\mathcal{M}$ can be specified as 
	\begin{eqnarray}
		\mathcal{M}(\theta|(x,y))=\theta-\eta\zeta x
	\end{eqnarray} based on Proposition \ref{epg}. Therefore, we can derive that $\forall \theta' \in\Theta$, there always exists $\theta_{\theta'}=\theta'+\eta\zeta x$ such that $\mathcal{M}(\theta_{\theta'}|(x,y))=\theta_{\theta'}-\eta\zeta x=\theta'$, which concludes the proof of (a).
	
	(b): Similarly, with fixed $\theta$, $\forall\theta'\in\Theta$, there always exists $x_{\theta'}=(\theta-\theta')/(\eta\zeta)$ (since $\eta$ and $\zeta$ are non-zero scalars) with its label $y_{\theta'}$ such that $\mathcal{M}((x_{\theta'},y_{\theta'})|\theta)=\theta-\eta\zeta x_{\theta'}=\theta'$.
\end{proof}

The Lemma \ref{smp} corroborates that this mapping is surjective in both arguments, which is important as it provides a existence guarantee of the optimal teaching set. Intuitively, the Lemma \ref{smp} tells that
\begin{enumerate}[label= (\alph*)]
	\item For any model $\theta'\in\Theta$ and fixed teaching set $\mathcal{D}$, there always exists a learner whose initial model is $\theta_{\theta'}$ such that this learner can learn $\theta'$ after one-iteration update with $\mathcal{D}$ following Eq.(\ref{scsgd}).
	\item For any model $\theta'\in\Theta$ and a learner with fixed model $\theta$, there always exists a teaching set $\mathcal{D}_{\theta'}$ such that this learner can converge to $\theta'$ after one-iteration evolution with $\mathcal{D}_{\theta'}$ via Eq.(\ref{scsgd}).
\end{enumerate}

The established mapping brings  convenience to study the relation among input $\theta$, $\mathcal{D}$ and output $\theta'$. With the perspective of surjective mapping based on (b) of Lemma \ref{smp} (existence guarantee), the problem has been changed to design the optimal teaching set $\mathcal{D}^*$ such that mapping once can achieve convergence from to pick the optimal teaching set in each iteration to enlarge per-iteration improvement, which is briefly illustrated in Figure \ref{IMTvsOSMT}.There is a natural concern how to construct $\mathcal{D}^*$ when $\theta^0$ and $\theta^*$ are fixed such that the learner can converge to $\theta^*$ after update once, i.e., $\mathcal{M}(\mathcal{D}^*|\theta^0) = \theta^*$. In another word, one-shot machine teaching is to find the singleton root of 
\begin{eqnarray}\label{root}
	\mathcal{F}(\mathcal{D}|\theta^0,\theta^*) \coloneqq \mathcal{M}(\mathcal{D}|\theta^0)-\theta^*,
\end{eqnarray}
and $\mathcal{D}^*=\mathcal{F}^{-1}(0|\theta^0,\theta^*)$ with $\|\mathcal{D}^*\|_0=1$.

\begin{figure}[t]  
	\vskip -0.1in
	\subfigbottomskip=-6pt
	\centering\includegraphics[width=.5\columnwidth]{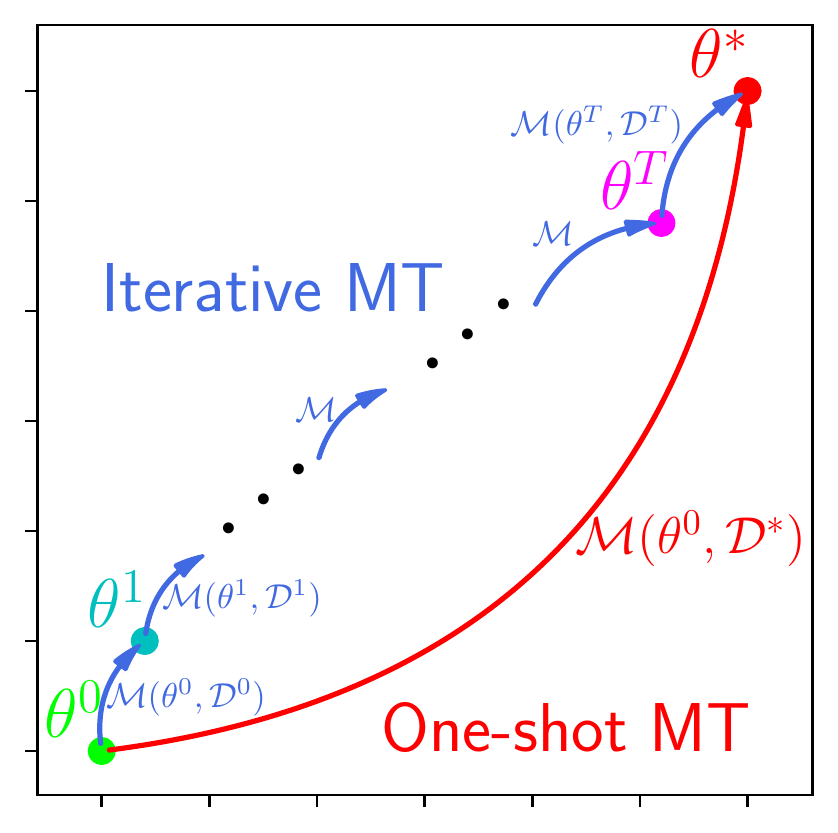}
	\vskip -0.1in
	\caption{Comparison between iterative MT and one-shot MT. Iterative MT feeds $\mathcal{D}^0,\mathcal{D}^1,\dots,\mathcal{D}^T$ gradually and the learner converge to $\theta^*$ iteratively. One-shot MT searches the optimal teaching set $\mathcal{D}^*$, via which the learner converge to $\theta^*$ after one iteration.}
	\label{IMTvsOSMT}
	\vskip -0.2in
\end{figure}

Since provided $\mathcal{D}$ depends on the specific $\mathbb{D}$ of teachers, we further discuss possible teachers with different $\mathbb{D}$. Different from \citet{liu2017iterative,liu2018towards}, we separate teachers into two types based on infinite and finite $\mathbb{D}$, which are the complete teacher and pool-based teachers, respectively.

\begin{defn}[Complete teacher]\label{ct}
	A teacher is called a complete teacher, if the knowledge domain $\mathbb{D}$ of this teacher is derived from the entire teaching example space, an infinite pool $\mathcal{P}=\{(x_i,y_i)|x_i\in\mathcal{X},y_i\in\mathcal{Y}\}$.
\end{defn}
It is trivial to derive that the complete teacher is equipped with addition and scalar multiplication. Let $\|\mathcal{P}\|_0$ in Definition \ref{ct} be finite, we derive the definition of the pool-based teacher:
\begin{defn}[Pool-based teacher]\label{pbt}
	A teacher is called a pool-based teacher, if the knowledge domain $\mathbb{D}$ of this teacher is derived from a pool $\mathcal{P}_N=\{(x_i,y_i)|x_i\in\mathcal{X}_N,y_i\in\mathcal{Y}_N,i\leq N<\infty\}\subsetneq\mathcal{X}\times\mathcal{Y}$.
\end{defn}
Let $\mathcal{X}_\mathcal{P}\times\mathcal{Y}_\mathcal{P}\subset\mathcal{X}\times\mathcal{Y}$ be the completion of $\mathcal{P}_N$ w.r.t. equipped operations. Further, we define three types of pool-based teachers based on such operations.
\begin{defn}[Combinable teacher]
	A pool-based teacher is a combinable teacher, if $\mathcal{X}_\mathcal{P}$ is closed under addition and scalar multiplication, i.e., $\mathcal{X}_\mathcal{P}={\rm span}(\mathcal{X}_N)$: it can provide linear combination of existing examples as new examples.
\end{defn}
\begin{defn}[Scalable teacher]
	A pool-based teacher is a scalable teacher, if $\mathcal{X}_\mathcal{P}$ is not closed under addition but closed under scalar multiplication, i.e., $\mathcal{X}_\mathcal{P}=\alpha\cdot\mathcal{X}_N,\alpha\in\mathbb{R}$: it can provide scaled examples as new examples.
\end{defn}
\begin{defn}[Naive teacher]
	A pool-based teacher is a naive teacher, if $\mathcal{X}_\mathcal{P}$ is not closed under addition or scalar multiplication, i.e., $\mathcal{X}_\mathcal{P}=\mathcal{X}_N$: it can only provide existing examples.
\end{defn}

\begin{table}[b]
	\caption{The relation among different teachers with their properties. "ScaMul" is the abbreviation of "scalar multiplication". \cmark$\,$ below "Complete $\mathbb{D}$" means this type of teacher has complete $\mathbb{D}$, and \xmark$\,$ means not. \cmark$\,$ below "Addition" and "ScaMul" means the example space of this teacher is closed under that operations, and \xmark$\,$ means not.}
	\label{fourt}
	\begin{center}
		\begin{small}
			\begin{tabular}{lcccc}
				\toprule
				Primary                     & Secondary       & Complete $\mathbb{D}$ & Addition& ScaMul \\
				\midrule
				Complete                     & -             & \cmark                & \cmark  & \cmark \\
				\multirow{3}{*}{Pool-based}  &Combinable     & \xmark                & \cmark  & \cmark \\
				&Scalable       & \xmark                & \xmark  & \cmark \\
				&Naive          & \xmark                & \xmark  & \xmark \\
				\bottomrule
			\end{tabular}
		\end{small}
	\end{center}
	\vskip -0.1in
\end{table}

Note that there are infinite potential teaching examples for combinable and scalable teacher as they can generate examples by linear combination or scaling, but the naive teacher has finite examples. The complete teacher can be viewed as an infinite version of the pool-based teacher. We list the property of above mentioned teachers and their relation in Table \ref{fourt}

\subsection{One-shot teaching strategy}\label{ostspf}

For different teachers, with established surjective mapping from teaching sets and parameters we design corresponding one-shot teaching strategies (OSTS), which attempts to analytically solve for a singleton root $\mathcal{D}^*=\{(x^*,y^*)\}$ where $(x^*,y^*)$ is the optimal teaching example. This means to find tighter bound of teaching dimension and iterative teaching dimension.

\begin{thm} [One-shot teachability] \label{ost}
	For a target $\theta^*$, there exists a singleton, on which a learner with learning rate $\eta\neq0$, initial $\theta^0$ and loss function $\ell(f,y)$ can converge to $\theta^*$ after one-iteration training (it can achieve one-shot iteration), if learner $\ell$ is differentiable w.r.t. $f$.
\end{thm}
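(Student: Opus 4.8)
The plan is to show that the root equation Eq.(\ref{root}) has a singleton solution by explicitly constructing one. By Proposition \ref{epg}, for a single teaching example $(x,y)$ the one-iteration update is $\mathcal{M}((x,y)|\theta^0)=\theta^0-\eta\zeta x$, where $\zeta=\nabla_f\ell(f(\theta^0,x),y)$ is a scalar depending on $\ell$, $f$, $\theta^0$ and $(x,y)$. Hence the convergence requirement $\mathcal{M}(\mathcal{D}^*|\theta^0)=\theta^*$ becomes $\theta^0-\eta\zeta x^*=\theta^*$, i.e.\ $\eta\zeta x^*=\theta^0-\theta^*$. The idea is to treat the \emph{direction} of $x^*$ as forced (it must be parallel to $\theta^0-\theta^*$) and to use the remaining scalar freedom in the magnitude of $x^*$ together with the freedom in the label $y^*$ to kill the coupling through $\zeta$.

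First I would dispose of the degenerate case $\theta^0=\theta^*$: then any example with $\zeta=0$ works (for differentiable convex losses such a $y^*$ exists, e.g.\ $y^*=f(\theta^0,x)$ for the square loss, or one may simply take $x^*=0$), so $\mathcal{D}^*$ is a singleton and the teaching dimension is one. For the main case $\theta^0\neq\theta^*$, set $x^*=c\,(\theta^0-\theta^*)$ for an unknown nonzero scalar $c$; then $f(\theta^0,x^*)=c\,\langle\theta^0,\theta^0-\theta^*\rangle$ is a known affine function of $c$, and the equation reduces to the scalar condition $\eta\, c\,\nabla_f\ell\!\left(c\langle\theta^0,\theta^0-\theta^*\rangle,\,y^*\right)=1$. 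Now I would invoke the surjectivity already established in Lemma \ref{smp}(b): it guarantees that some $(x_{\theta^*},y_{\theta^*})$ with $x_{\theta^*}=(\theta^0-\theta^*)/(\eta\zeta)$ solves the equation, which is exactly a singleton root of $\mathcal{F}$. So strictly speaking the theorem follows from Lemma \ref{smp}(b) by taking $\theta'=\theta^*$, $\theta=\theta^0$; the only point needing care is that $\zeta\neq0$ can be arranged, since otherwise $x_{\theta^*}$ is undefined.

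Therefore the substantive step — and the main obstacle — is showing that for a differentiable loss one can choose the label $y^*$ (and, for the scalable/naive teachers, a compatible magnitude) so that $\zeta=\nabla_f\ell(f(\theta^0,x^*),y^*)\neq0$ while simultaneously satisfying the scalar normalization $\eta\zeta c=1$. For the common losses I would do this by cases: for the square loss $\ell(f,y)=\tfrac12(f-y)^2$, $\zeta=f-y$, so picking $y^*$ makes $\zeta$ any desired nonzero value and the magnitude of $x^*$ is then pinned down by $\eta\zeta c=1$; for the logistic and hinge losses $\zeta$ is a bounded function of $yf$, so I would instead fix $y^*\in\{-1,1\}$, keep the magnitude $c$ free, and solve $\eta c\,\nabla_f\ell(c\langle\theta^0,\theta^0-\theta^*\rangle,y^*)=1$ for $c$ by an intermediate-value argument (the left side is continuous in $c$, vanishes at $c=0$, and grows unboundedly in magnitude as $|c|\to\infty$ because $|\nabla_f\ell|$ is bounded below by a positive constant on the relevant half-line). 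In all these cases the resulting singleton $\{(x^*,y^*)\}$ is a valid optimal teaching set, so both the teaching dimension and the iterative teaching dimension equal one, which is the claimed tight bound. I would close by remarking that for a combinable or complete teacher the magnitude constraint is automatically met since $\mathcal{X}_\mathcal{P}$ is closed under scaling, so no further work is needed there.
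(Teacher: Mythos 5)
Your core route is the same as the paper's proof of Theorem \ref{ost}: expand the one-step update via Proposition \ref{epg}, observe that $\eta\zeta x^*=\theta^0-\theta^*$ forces $x^*$ to lie along $\theta^0-\theta^*$, and reduce everything to pinning down the scalar coefficient. Where you go further than the paper is in making the circularity explicit: $\zeta$ depends on $(x^*,y^*)$, so the ``closed form'' Eq.(\ref{eq9}) is really the fixed-point condition $\eta c\,\nabla_f\ell\bigl(c\langle\theta^0,\theta^0-\theta^*\rangle,y^*\bigr)=1$, which the paper hides inside the constant $\xi_{\eta;\theta^0}$ (itself defined through $x^*,y^*$) and then treats $y^*$ as free, verifying concrete losses only under $\theta^0=0$ in Section \ref{gl}. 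Your remark that the theorem ``follows from Lemma \ref{smp}(b)'' cannot carry the load on its own, since that lemma's proof constructs $x_{\theta'}=(\theta-\theta')/(\eta\zeta)$ with the same implicit $\zeta$; you do, however, correctly identify $\zeta\neq0$ and the consistency of the scalar equation as the one point needing care, which is exactly the step the paper glosses over.

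The genuine gap is in your intermediate-value argument for the hinge and logistic losses. The claim that $|\nabla_f\ell|$ is bounded below by a positive constant on the relevant half-line is false: for the hinge loss the derivative is exactly $0$ once $y^*f\geq 1$, and for the logistic loss $|\nabla_f\ell(cA,y^*)|=1/\bigl(1+\exp(y^*cA)\bigr)$ with $A=\langle\theta^0,\theta^0-\theta^*\rangle$ decays exponentially in $|c|$ whenever $y^*cA>0$. Hence $\eta c\,\nabla_f\ell(cA,y^*)$ need not grow unboundedly, and when $A<0$ (e.g.\ $\theta^*=2\theta^0\neq 0$ gives $A=-\|\theta^0\|_2^2$) it can fail to reach $1$ for either label: for the hinge loss with $\eta>0$ a root exists only if $A>-\eta$, and for the logistic loss the quantity is bounded by a constant times $\eta/|A|$ on the admissible half-line, so for small $\eta$ the scalar equation has no root and no single example yields one-shot convergence. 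So ``in all these cases'' is too strong: your construction is complete for the square loss, and for hinge/logistic it works when $\theta^0=0$ (then $A=0$ and the gradient factor is a nonzero constant in $c$, which is precisely the setting of Propositions \ref{prop10} and \ref{prop12}), but not for arbitrary $\theta^0$. To close the argument you would need an additional hypothesis, e.g.\ $\theta^0=0$ or $\langle\theta^0,\theta^0-\theta^*\rangle\geq 0$, or a condition on $\ell$ ensuring that for some admissible label the derivative stays bounded away from zero along the forced direction.
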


\begin{proof}To prove the existence of the singleton that can help the learner converge after one iteration, we need to solve for associated $(x^*,y^*)$ in terms of $\theta^*$ in closed form.
	
	With the mapping view in Lemma \ref{smp}, it fixes $\theta^0$ and $\theta^*$ to find a teaching example $(x,y)$ such that the learner touch $\theta^*$ after mapping once:
	\begin{eqnarray}\label{optm}
		\mathcal{M}((x,y)|\theta^0)=\theta^*.
	\end{eqnarray}
	It follows from Eq.(\ref{root}) that it is equivalent to find the singleton root:
	\begin{eqnarray}\label{sroot}
		(x^*,y^*) = \mathcal{F}^{-1}(\mathcal{D}|\theta^0,\theta^*) \qquad \text{s.t.}\,\|\mathcal{D}\|_0 = 1,
	\end{eqnarray}
	where we omit the set symbol $\{\cdot\}$ for convenience. Recall Proposition \ref{epg}, when the first-order partial derivatives of $\ell(f,y)$ w.r.t. $f$ exists, it follows from the chain rule that the gradient at one example can be expanded as
	\begin{eqnarray}\label{eq5}
		\mathcal{G}(\theta;(x,y)|\ell;f)=\nabla_\theta\ell(\left\langle \theta,x\right\rangle , y)=\nabla_{\left<\theta,x\right>} \ell(\left\langle \theta,x\right\rangle , y)x.
	\end{eqnarray}
	Therefore, analytically solving for $(x^*,y^*)$ equals to find the root of following equation
	\begin{equation}\label{eq6}
		\theta^0-\eta\nabla_{\left<\theta,x\right>} \ell\left(\left\langle \theta^0,x\right\rangle , y\right)x-\theta^*=0.
	\end{equation}	
	Since $\theta^0\neq\theta^*$, we deduce that each term of the product $\eta\cdot\nabla_{\left<\theta,x\right>} \ell(\left\langle \theta^0,x\right\rangle , y)\cdot x$ is a non-zero scalar or vector. Thus, from Eq.(\ref{eq6}) we can divide through by $-\eta\nabla_{\left<\theta,x\right>} \ell(\left\langle \theta^0,x\right\rangle , y)$ and rearrange, then obtain
	\begin{eqnarray}\label{eq7}
		x^*=\xi_{\eta;\theta^0}(\theta^*-\theta^0),
	\end{eqnarray}
	where constant $\xi_{\eta;\theta^0}\coloneqq-1/\left(\eta\nabla_{\left<\theta,x\right>} \ell(\left\langle \theta^0,x^*\right\rangle , y^*)\right)$. $y^*$ is a free variable and usually can be derived by $f^*(x^*)$.
	Thereby, we have analytically find the singleton root expressed by $\theta^*$
	\begin{equation}\label{eq9}
		(x^*,y^*)=\left(\xi_{\eta;\theta^0}(\theta^*-\theta^0),f^*(\xi_{\eta;\theta^0}(\theta^*-\theta^0))\right),
	\end{equation}
	namely the universal optimal teaching example. This completes the proof.
\end{proof}

One-shot teachability is a weaker condition compared with Lipschitz smoothness and strongly convexity since differentiable is weaker than these regularities. For instance, absolute loss is one-shot teachable but not Lipschitz smooth and strongly convex. Moreover, as a side effect, it derives that the teaching dimension and iterative teaching dimension of those strategies who provide concrete construction methods for existent singleton in Theorem \ref{ost} are one, which is tighter than that of exponential teachability \cite{liu2017iterative,liu2018towards,liu2021iterative}.

Since $\mathbb{D}$ of the complete teacher is derived from the entire teaching example space, it is straightforward to derive OSTS for this type of teacher:

\begin{prop}[OSTS for the complete teacher]\label{costs}
	For a learner with $\eta\neq0$ and $\theta^0$, the complete teacher should provide the universal optimal teaching example 
	\begin{eqnarray}
		(x^*,y^*)=\left(\xi_{\eta;\theta^0}(\theta^*-\theta^0),f^*(\xi_{\eta;\theta^0}(\theta^*-\theta^0))\right),
	\end{eqnarray} with learner-specific constant $\xi_{\eta;\theta^0}$ to the learner.
\end{prop}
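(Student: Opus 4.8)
The plan is to observe that Proposition~\ref{costs} is essentially a corollary of Theorem~\ref{ost} once we account for the defining feature of the complete teacher, namely that its knowledge domain $\mathbb{D}$ is induced by the infinite pool $\mathcal{P}$ covering the entire example space $\mathcal{X}\times\mathcal{Y}$. First I would recall that Theorem~\ref{ost} already produced, in closed form, a singleton $\{(x^*,y^*)\}$ with
\begin{equation}
	(x^*,y^*)=\left(\xi_{\eta;\theta^0}(\theta^*-\theta^0),\,f^*(\xi_{\eta;\theta^0}(\theta^*-\theta^0))\right),\qquad \xi_{\eta;\theta^0}\coloneqq-\frac{1}{\eta\,\nabla_{\langle\theta,x\rangle}\ell(\langle\theta^0,x^*\rangle,y^*)},\nonumber
\end{equation}
such that a one-iteration update of Eq.(\ref{scsgd}) sends $\theta^0$ to $\theta^*$. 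The only gap between that statement and Proposition~\ref{costs} is \emph{feasibility}: we must check that this particular example actually lies in the complete teacher's knowledge domain, so that the teacher is permitted to provide it. This is exactly where Definition~\ref{ct} is used.

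The key steps, in order, would be: (i) invoke Theorem~\ref{ost} to obtain the universal optimal teaching example $(x^*,y^*)$ and the learner-specific constant $\xi_{\eta;\theta^0}$; (ii) note that $\xi_{\eta;\theta^0}(\theta^*-\theta^0)\in\mathcal{X}\subseteq\mathbb{R}^n$ because $\theta^*-\theta^0\in\mathbb{R}^n$, $\xi_{\eta;\theta^0}$ is a (finite, nonzero) scalar, and $\mathcal{X}$ is assumed to be the whole ambient space for the complete teacher — so $x^*$ is a legitimate feature vector; (iii) observe that the accompanying label $y^*=f^*(x^*)$ is the value dictated by the target model, hence $(x^*,y^*)\in\mathcal{P}$ by Definition~\ref{ct}, so $\mathcal{D}^*=\{(x^*,y^*)\}\in\mathbb{D}$; (iv) conclude, by the surjectivity/root characterization of Lemma~\ref{smp}(b) and Eq.(\ref{root})--(\ref{eq9}), that $\mathcal{M}(\mathcal{D}^*|\theta^0)=\theta^*$, i.e.\ the learner converges in one iteration, establishing teaching dimension and iterative teaching dimension equal to one for this strategy.

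I would also spend a sentence handling the well-definedness of $\xi_{\eta;\theta^0}$: since $\theta^0\neq\theta^*$ (otherwise teaching is vacuous), Eq.(\ref{eq6}) forces $\eta\,\nabla_{\langle\theta,x\rangle}\ell(\langle\theta^0,x^*\rangle,y^*)\neq 0$, so the division leading to Eq.(\ref{eq7}) is valid and $\xi_{\eta;\theta^0}$ is a genuine nonzero constant; the complete teacher, knowing $\eta$, $\theta^0$, $\theta^*$, $\ell$ and $f$, can compute it. If one wants to be careful, one notes that $\xi_{\eta;\theta^0}$ appears implicitly (it depends on $x^*$ through $\nabla_{\langle\theta,x\rangle}\ell$ evaluated at $\langle\theta^0,x^*\rangle$), so strictly speaking Eq.(\ref{eq7}) is a fixed-point equation for $x^*$; for the linear models and the standard loss families considered here this resolves to an explicit scalar (worked out in the subsequent case studies), and for the purposes of this proposition it suffices that \emph{some} finite nonzero $\xi_{\eta;\theta^0}$ solving it exists, which Theorem~\ref{ost} already guarantees.

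The main obstacle, such as it is, is precisely this implicit-constant subtlety in (iii)/the well-definedness remark: one must argue that the self-referential definition of $\xi_{\eta;\theta^0}$ does not obstruct membership of $(x^*,y^*)$ in $\mathcal{P}$. Because the complete teacher's pool is the \emph{entire} space, however, this is immediate — whatever finite vector the fixed-point equation yields is automatically in $\mathcal{X}$, and its label is automatically in $\mathcal{Y}$ — so no genuine difficulty arises. (The analogous statements for combinable, scalable, and naive teachers, where $\mathbb{D}$ is restricted, are where the real work lies, and I would expect those to be treated separately; for the complete teacher the proof is short.)
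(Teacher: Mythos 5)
Your proposal is correct and matches the paper's treatment: the paper gives no separate proof of Proposition~\ref{costs}, simply asserting that it follows from Theorem~\ref{ost} because the complete teacher's knowledge domain is the entire example space, which is exactly your feasibility argument. Your extra remarks on the well-definedness and implicit (fixed-point) nature of $\xi_{\eta;\theta^0}$ go slightly beyond what the paper writes here, but they are consistent with how the paper later resolves $\xi_{\eta;\theta^0}$ explicitly in the loss-specific case studies.
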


Proposition \ref{costs} tells that teachers need to teach learners in accordance with their aptitude, and the optimal teaching example closely depends on learners themselves. Specifically, $(x^*,y^*)$ is picked based on $\theta^0$ and $\eta$ of learners. If a learner is well-equipped with \textit{primary knowledge} on $\theta^*$, i.e., its $\theta^0$ is very close to $\theta^*$, then $x^*$ of sampled $(x^*,y^*)$ should be very close to $\vec{0}$, that is, an indistinguishable one. But for a learner with poor \textit{primary knowledge}, a simple and distinguishable $x^*$ should be provided. This is consistent with teaching strategy in curriculum learning \cite{bengio2009curriculum, khan2011humans, lessard2019optimal}, where easy examples for learners at first, and difficult ones for learners as they approaches $\theta^*$. Moreover, for learners with different $\eta$, the magnitude of $x^*$ will be scaled relatively. For instance, an unresponsive learner with a small $\eta$ should be given a distinguishable $x^*$ in relatively large scale.

For pool-based teachers, the accessibility of the universal optimal teaching example in Theorem \ref{ost} should be considered additionally. We following present teacher-specific OSTS based on teacher-equipped operations.

\begin{prop}[OSTS for the combinable teacher]\label{comosts}
	For a learner with $\eta\neq0$ and $\theta^0$, the optimal teaching example $(x^*,y^*)$ for the combinable teacher with $\mathcal{P}_N=\mathcal{X}_N\times\mathcal{Y}_N$ is
	\begin{eqnarray}
		\begin{aligned}
			&x^*=\underset{x^\dagger\in\mathcal{X}_{\mathcal{P}}}{\arg\min}\quad \left\|x^\dagger-\xi_{\eta;\theta^0}(\theta^*-\theta^0)\right\|_2^2\\
			&\text{s.t.}\quad x^\dagger = \langle\beta,(x_1,\dots,x_N)^T\rangle,\qquad \beta\in\mathbb{R}^N,x_i\in\mathcal{X}_N
		\end{aligned}
	\end{eqnarray}
	where $\beta$ is called the generative coefficient vector, and $y^*=f^*(x^*)$.
\end{prop}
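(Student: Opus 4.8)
The plan is to reduce the bilevel objective Eq.(\ref{eq1}) to the displayed projection program by plugging the explicit one-iteration map of Lemma \ref{smp} into the learner-bias term. First I would observe that, because the combinable teacher is closed under addition and scalar multiplication, its accessible feature set is the subspace $\mathcal{X}_{\mathcal{P}}=\mathrm{span}(\mathcal{X}_N)=\{\langle\beta,(x_1,\dots,x_N)^T\rangle : \beta\in\mathbb{R}^N\}$, so the only admissible singletons are $\{(x^\dagger,f^*(x^\dagger))\}$ with $x^\dagger\in\mathcal{X}_{\mathcal{P}}$. If the universal optimal feature $\xi_{\eta;\theta^0}(\theta^*-\theta^0)$ of Theorem \ref{ost} already lies in $\mathcal{X}_{\mathcal{P}}$, it is directly teachable and makes the learner bias vanish; in general it need not, and the teacher must choose the admissible $x^\dagger$ that moves the learner as close to $\theta^*$ as possible.

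For the reduction itself, by Proposition \ref{epg} the post-update parameter at a singleton $(x^\dagger,y^\dagger)$ is $\hat\theta=\mathcal{M}((x^\dagger,y^\dagger)|\theta^0)=\theta^0-\eta\zeta x^\dagger$, hence
\begin{equation}
	\mathcal{B}(\hat\theta,\theta^*)=\big\|\theta^0-\eta\zeta x^\dagger-\theta^*\big\|_2=|\eta\zeta|\,\big\|x^\dagger-\xi_{\eta;\theta^0}(\theta^*-\theta^0)\big\|_2,
\end{equation}
using $\xi_{\eta;\theta^0}=-1/(\eta\zeta)$ with $\eta\zeta\neq0$ (as in Theorem \ref{ost}, since $\theta^0\neq\theta^*$). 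Since the cost term $\tau\mathcal{C}(\mathcal{D})=\tau\|\mathcal{D}\|_0=\tau$ is constant over this family and $|\eta\zeta|>0$ is a fixed scalar, minimizing Eq.(\ref{eq1}) over the combinable teacher's $\mathbb{D}$ is equivalent to minimizing $\|x^\dagger-\xi_{\eta;\theta^0}(\theta^*-\theta^0)\|_2^2$ over $x^\dagger\in\mathcal{X}_{\mathcal{P}}$; writing $x^\dagger=\langle\beta,(x_1,\dots,x_N)^T\rangle$ gives exactly the stated program, with the label pinned by $y^*=f^*(x^*)$ as in Theorem \ref{ost}. A minimizer exists because this is the orthogonal projection of a fixed vector onto a finite-dimensional subspace (and is unique when the $x_i$ are linearly independent).

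The step I expect to be the main obstacle is the implicit dependence of the constant $\xi_{\eta;\theta^0}$ on the variable $x^\dagger$, since $\zeta=\nabla_f\ell(\langle\theta^0,x^\dagger\rangle,y^\dagger)$ in general varies with the chosen example, so the objective is not literally a quadratic in $\beta$. I would handle this exactly as Theorem \ref{ost} does --- freezing $\xi_{\eta;\theta^0}$ at the target argument $(x^*,y^*)$ --- and defer the justification (or the verification that $\zeta$ is in fact constant) to the per-loss case study that follows, e.g.\ for square, logistic, and hinge losses. One further point I would state explicitly: when the projection residual is nonzero the combinable teacher cannot realize exact one-shot convergence, so the proposition characterizes the best admissible teaching example rather than asserting one-shot teachability for every combinable pool.
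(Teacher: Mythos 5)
Your proposal is correct, and in fact it supplies more justification than the paper does: the paper states Proposition \ref{comosts} without proof, treating it as the immediate restriction of the universal optimal example of Theorem \ref{ost} (equivalently, of Proposition \ref{costs}) to the combinable teacher's accessible set $\mathcal{X}_{\mathcal{P}}=\mathrm{span}(\mathcal{X}_N)$, and then only remarks that $\beta$ can be found by a linear two-layer MLP. Your route is genuinely different in that you derive the projection program from the bilevel objective Eq.(\ref{eq1}): using $\hat\theta=\theta^0-\eta\zeta x^\dagger$ from Proposition \ref{epg} you show $\mathcal{B}(\hat\theta,\theta^*)=|\eta\zeta|\,\|x^\dagger-\xi_{\eta;\theta^0}(\theta^*-\theta^0)\|_2$, so that minimizing learner bias over admissible singletons is \emph{equivalent} to the stated least-squares problem, rather than merely a plausible surrogate for it. This buys an actual optimality guarantee (the paper's phrasing only asserts ``closeness to the universal optimal example'' as the criterion) and an existence/uniqueness remark via orthogonal projection. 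You also correctly flag the one real subtlety the paper silently elides: $\xi_{\eta;\theta^0}$ contains $\zeta=\nabla_f\ell(\langle\theta^0,x\rangle,y)$, which in general varies with the candidate example, so the objective is only literally quadratic in $\beta$ when $\zeta$ is constant over candidates (as it is for hinge loss, and for square/logistic loss only after fixing the free label $y^*$ and, for logistic, exploiting $\theta^0=0$ or a fixed-point argument); deferring this to the per-loss case study in Section \ref{gl} is the same resolution the paper implicitly adopts. Your closing caveat --- that a nonzero projection residual means the combinable teacher attains only the best admissible example, not exact one-shot convergence --- is consistent with Theorem \ref{snc}(a), which gives the exact condition $\theta^0-\theta^*\in\mathrm{span}(\mathcal{X}_N)$ under which the residual vanishes.
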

$\beta$ can be trained by a two-layer multilayer perceptron (MLP) illustrated in Figure \ref{nn} without nonlinear activation function, of which input and output are $(x_1,\dots,x_N)^T$ and $\langle\beta,(x_1,\dots,x_N)^T\rangle$, respectively. 

\begin{figure}[t]  
	\vskip -0.1in
	\subfigbottomskip=-6pt
	\centering\includegraphics[width=.5\columnwidth]{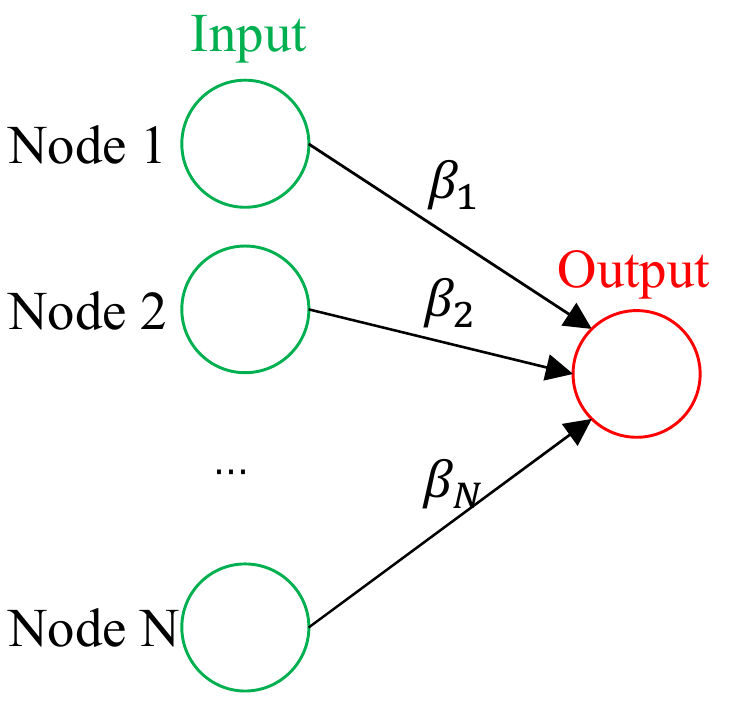}
	\vskip -0.1in
	\caption{The illustration of multilayer perceptron (MLP) structure. The input is feature vectors of $N$ teaching examples $x_1,\dots,x_N$ in pool $\mathcal{X}_N$. The connection weights are precisely generative coefficients. The output is the weighted sum of the input connections without nonlinear activation.}
	\label{nn}
	\vskip -0.2in
\end{figure}

\begin{algorithm}[th]
	\caption{One-shot teaching strategy (OSTS)}
	\label{aosts}
	{\bfseries Input:} Optimal $\theta^*$, initial $\theta^0$, learning rate $\eta$, small constant $\epsilon>0$ and maximal iteration number $T$. Small positive constant $\delta$ and $\mathcal{P}_N$ for pool-based teachers.
	\BlankLine
	\uIf{\rm teacher is complete}{
		Return $(x^*,y^*)=\left(\xi_{\eta;\theta^0}(\theta^*-\theta^0),f^*(\xi_{\eta;\theta^0}(\theta^*-\theta^0))\right)$ directly;}
	\uElseIf{\rm teacher is combinable}{
		Denote $\mathcal{L}(\beta)\coloneqq\left\|\langle\beta,(x_1,\dots,x_N)^T\rangle-\xi_{\eta;\theta^0}(\theta^*-\theta^0)\right\|_2^2,\qquad x_i\in\mathcal{X}_N$;
		
		Random initial $\beta^0$, set $\beta^t\leftarrow \beta^0$ and $t=0$;
		
		\While{$t\leq T$ {\rm and} $\mathcal{L}(\beta^t)\geq\epsilon$}{
			\BlankLine
			$\beta^t\leftarrow \beta^t-\delta\nabla_\beta\mathcal{L}(\beta^t)$;
			
			Set $t\leftarrow t+1$;
		}
		
		Return $(x^*,y^*)=\left(\langle\beta^t,(x_1,\dots,x_N)^T\rangle,f^*(\langle\beta^t,(x_1,\dots,x_N)^T\rangle)\right)$;
	}
	\uElseIf{\rm teacher is scalable}{
		Search $x^\dagger$ by
		$x^\dagger =\underset{x_i\in\mathcal{X}_N}{\arg\min}\quad\min\left(\left\|\frac{x_i}{\left\|x_i\right\|_2}-\frac{\theta^*-\theta^0}{\left\|\theta^*-\theta^0\right\|_2}\right\|_2^2,\left\|\frac{x_i}{\left\|x_i\right\|_2}+\frac{\theta^*-\theta^0}{\left\|\theta^*-\theta^0\right\|_2}\right\|_2^2\right)$;
		
		Return $x^* =\kappa x^\dagger$ and $y^*=f^*(x^*)$, in which $\kappa = \sgn\left(\left\|\frac{x_i}{\left\|x_i\right\|_2}+\frac{\theta^*-\theta^0}{\left\|\theta^*-\theta^0\right\|_2}\right\|_2^2-\left\|\frac{x_i}{\left\|x_i\right\|_2}-\frac{\theta^*-\theta^0}{\left\|\theta^*-\theta^0\right\|_2}\right\|_2^2\right)\cdot\xi_{\eta;\theta^0}\left\|\theta^*-\theta^0\right\|_2 /\left\|x^\dagger\right\|_2
		$;
	}
	\ElseIf{\rm teacher is naive}{
		Return $x^*=\underset{x_i\in\mathcal{X}_N}{\arg\min}\quad \left\|x_i-\xi_{\eta;\theta^0}(\theta^*-\theta^0)\right\|_2^2$
		and $y^*=f^*(x^*)$.
	}
\end{algorithm}

\begin{prop}[OSTS for the scalable teacher]\label{sosts}
	For a learner with $\eta\neq0$ and $\theta^0$, the optimal teaching example $(x^*,y^*)$ for the scalable teacher with $\mathcal{P}_N=\mathcal{X}_N\times\mathcal{Y}_N$ is 
	\begin{eqnarray}
		x^* =\kappa x^\dagger 
	\end{eqnarray}
	 and $y^*=f^*(x^*)$, in which $\kappa = \sgn\left(\left\|\frac{x_i}{\left\|x_i\right\|_2}+\frac{\theta^*-\theta^0}{\left\|\theta^*-\theta^0\right\|_2}\right\|_2^2-\left\|\frac{x_i}{\left\|x_i\right\|_2}-\frac{\theta^*-\theta^0}{\left\|\theta^*-\theta^0\right\|_2}\right\|_2^2\right)\cdot\xi_{\eta;\theta^0}\left\|\theta^*-\theta^0\right\|_2 /\left\|x^\dagger\right\|_2
	 $ and 
	 \begin{eqnarray}
	 	x^\dagger =\underset{x_i\in\mathcal{X}_N}{\arg\min}\quad\min\left(\left\|\frac{x_i}{\left\|x_i\right\|_2}-\frac{\theta^*-\theta^0}{\left\|\theta^*-\theta^0\right\|_2}\right\|_2^2,\left\|\frac{x_i}{\left\|x_i\right\|_2}+\frac{\theta^*-\theta^0}{\left\|\theta^*-\theta^0\right\|_2}\right\|_2^2\right)
	 \end{eqnarray}
\end{prop}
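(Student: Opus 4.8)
The plan is to reduce Proposition~\ref{sosts} to the universal optimal teaching example furnished by Theorem~\ref{ost}, and then carry out the constrained minimization forced by the scalable teacher's feasible set, exactly as in Proposition~\ref{comosts} but over a union of lines instead of a subspace. Theorem~\ref{ost} shows that the only example yielding one-shot convergence is $x^{\mathrm{univ}} := \xi_{\eta;\theta^0}(\theta^*-\theta^0)$; moreover, by Proposition~\ref{epg} and Eq.(\ref{scsgd}) the post-update parameter obeys $\theta^1-\theta^* = -\eta\zeta\,(x-x^{\mathrm{univ}})$ with $\zeta=\nabla_f\ell(\langle\theta^0,x\rangle,y)$, so (treating $\xi_{\eta;\theta^0}$ as the fixed learner-specific constant, as elsewhere in this section) pushing $\|\theta^1-\theta^*\|_2$ toward zero is tantamount to pushing $\|x-x^{\mathrm{univ}}\|_2$ toward zero over the examples the teacher can actually produce. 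For the scalable teacher that set is $\mathcal{X}_\mathcal{P}=\{\alpha x_i:\alpha\in\mathbb{R},\,x_i\in\mathcal{X}_N\}$, a union of $N$ lines through the origin; hence $x^{\mathrm{univ}}$ is reproducible exactly iff $\theta^*-\theta^0$ is collinear with some $x_i$, and otherwise we look for the best point on this union of lines.

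The search splits into choosing a line and then a scalar on it. For a fixed $x_i$, orthogonal projection gives $\min_{\alpha}\|\alpha x_i-x^{\mathrm{univ}}\|_2^2=\|x^{\mathrm{univ}}\|_2^2\bigl(1-\cos^2\angle(x_i,x^{\mathrm{univ}})\bigr)$, so the line is chosen by maximizing $|\cos\angle(x_i,\theta^*-\theta^0)|$ over $x_i\in\mathcal{X}_N$ (the scalar $\xi_{\eta;\theta^0}$ drops out of the cosine). Writing $\hat w:=(\theta^*-\theta^0)/\|\theta^*-\theta^0\|_2$ and using $\|\hat a\pm\hat b\|_2^2=2(1\pm\langle\hat a,\hat b\rangle)$ for unit vectors, maximizing $|\cos|$ is the same as minimizing $\min\bigl(\|x_i/\|x_i\|_2-\hat w\|_2^2,\ \|x_i/\|x_i\|_2+\hat w\|_2^2\bigr)$, which is precisely the stated rule defining $x^\dagger$; note $x^\dagger$ is collinear with $\theta^*-\theta^0$ whenever any $x_i$ is.

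It remains to fix the scalar along $\mathrm{span}(x^\dagger)$ and the label. I would pick $\kappa$ from two requirements: the step should point the right way, $\langle\kappa x^\dagger,x^{\mathrm{univ}}\rangle\ge0$, and have the right length, $\|\kappa x^\dagger\|_2=\|x^{\mathrm{univ}}\|_2=|\xi_{\eta;\theta^0}|\,\|\theta^*-\theta^0\|_2$ (equivalently $\|\theta^1-\theta^0\|_2=\|\theta^*-\theta^0\|_2$, using $|\eta\zeta\,\xi_{\eta;\theta^0}|=1$). These give $|\kappa|=|\xi_{\eta;\theta^0}|\,\|\theta^*-\theta^0\|_2/\|x^\dagger\|_2$ and $\sgn\kappa=\sgn\langle x^\dagger,\theta^*-\theta^0\rangle\cdot\sgn\xi_{\eta;\theta^0}$; substituting $\sgn\langle x^\dagger,\theta^*-\theta^0\rangle=\sgn\bigl(\|x^\dagger/\|x^\dagger\|_2+\hat w\|_2^2-\|x^\dagger/\|x^\dagger\|_2-\hat w\|_2^2\bigr)$ recovers the announced closed form, and we set $x^*=\kappa x^\dagger$, $y^*=f^*(x^*)$ as in Theorem~\ref{ost}. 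The argument closes with a check of the realizable case: if $x^\dagger=\lambda(\theta^*-\theta^0)$ with $\lambda\neq0$, then $\kappa=\xi_{\eta;\theta^0}/\lambda$, so $x^*=x^{\mathrm{univ}}$ and one-shot teachability holds exactly; otherwise $x^*$ is the best correctly-oriented, norm-consistent surrogate on the teacher's union of lines.

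The real hurdle is conceptual, not computational: because $x^{\mathrm{univ}}$ generically lies off $\mathcal{X}_\mathcal{P}$, ``the optimal teaching example'' must be made precise through a surrogate, and I would be explicit that the proof optimizes the \emph{direction} exactly (projection-optimal across the $N$ lines) while fixing the \emph{magnitude} by step-length consistency rather than by a second least-squares projection --- the payoff being that exact realizability drops out transparently when some $x_i$ is collinear with $\theta^*-\theta^0$, consistent with Propositions~\ref{costs} and~\ref{comosts}. The secondary subtlety, inherited from Theorem~\ref{ost}, is that $\xi_{\eta;\theta^0}$ depends through $\zeta=\nabla_f\ell(\langle\theta^0,x^*\rangle,y^*)$ on the example being designed; as in the complete-teacher case I would treat $\xi_{\eta;\theta^0}$ as a given constant rather than unfold that fixed point. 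What is left is routine sign- and norm-bookkeeping when assembling $\kappa$.
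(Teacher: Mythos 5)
Your proposal is correct and, in substance, follows the route the paper intends — but note that the paper offers no formal proof of Proposition~\ref{sosts} at all, only the short heuristic paragraph after its statement (compare normalized directions, use the $\min\left(\left\|\cdot-\cdot\right\|_2^2,\left\|\cdot+\cdot\right\|_2^2\right)$ operator to neutralize orientation, then rescale). Your orthogonal-projection computation $\min_\alpha\|\alpha x_i-x^{\mathrm{univ}}\|_2^2=\|x^{\mathrm{univ}}\|_2^2\left(1-\cos^2\angle(x_i,x^{\mathrm{univ}})\right)$ combined with the identity $\|\hat a\pm\hat b\|_2^2=2(1\pm\langle\hat a,\hat b\rangle)$ supplies exactly the missing justification for why the stated normalized-distance criterion picks the best of the $N$ lines, and your check that $\kappa=\xi_{\eta;\theta^0}/\lambda$ when $x^\dagger=\lambda(\theta^*-\theta^0)$ confirms exact recovery of the universal example in the realizable case, consistent with Theorem~\ref{snc}(b). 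Your sharpest observation is also accurate and worth keeping explicit: the paper's $\kappa$ fixes the magnitude by norm matching, $\|\kappa x^\dagger\|_2=|\xi_{\eta;\theta^0}|\,\|\theta^*-\theta^0\|_2$, rather than by the projection coefficient $\langle x^\dagger,x^{\mathrm{univ}}\rangle/\|x^\dagger\|_2^2$; the two differ by a factor $|\cos\angle(x^\dagger,\theta^*-\theta^0)|$ and coincide only in the collinear case, and under the least-squares surrogate used for the combinable teacher in Proposition~\ref{comosts} the projection scalar would in fact be the optimal one — a choice the paper never discusses, so your ``step-length consistency'' reading is a reasonable reconstruction rather than a deviation. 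The only residual caveats are the ones you already flag yourself: the self-referential dependence of $\xi_{\eta;\theta^0}$ on the example being designed (which the paper likewise suppresses) and the implicit assumptions $\theta^0\neq\theta^*$ and $x_i\neq 0$ needed for the normalizations to be well defined.
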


For the scalable teacher, the fundamental difference among examples is their normalized vector derived by operator $\frac{\cdot}{\left\|\cdot\right\|_2}$ since the magnitude can be scaled correspondingly. Besides, the operator $\min\left(\left\|\cdot-\cdot\right\|_2^2,\left\|\cdot+\cdot\right\|_2^2\right)$ is to avoid the influence of opposite direction between $x_i$ and $\theta^*-\theta^0$. Therefore, the scalable teacher should pick the example whose
normalized vector is close to that of the universal optimal teaching example when ignoring the direction. For the naive teacher, it is straightforward to derive associated OSTS.

\begin{prop}[OSTS for the naive teacher]\label{nosts}
	For a learner with $\eta\neq0$ and $\theta^0$, the optimal teaching example $(x^*,y^*)$ for the naive teacher with $\mathcal{P}_N=\mathcal{X}_N\times\mathcal{Y}_N$ is
	\begin{eqnarray}
		&x^*=\underset{x_i\in\mathcal{X}_N}{\arg\min}\quad \left\|x_i-\xi_{\eta;\theta^0}(\theta^*-\theta^0)\right\|_2^2,
	\end{eqnarray}
	and $y^*=f^*(x^*)$.
\end{prop}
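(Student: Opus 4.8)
The plan is to obtain the naive-teacher rule as a direct specialization of Theorem \ref{ost}: since a naive teacher cannot recreate the universal optimal teaching example $\xi_{\eta;\theta^0}(\theta^*-\theta^0)$, it should hand the learner the pool element that best approximates it. I would begin by unwinding the bilevel program Eq.(\ref{eq1}) for this teacher. Its knowledge domain consists only of singletons whose feature vector lies in $\mathcal{X}_N$ (because $\mathcal{X}_\mathcal{P}=\mathcal{X}_N$ is closed under neither addition nor scaling), so the cost term $\tau\mathcal{C}(\mathcal{D})=\tau\|\mathcal{D}\|_0=\tau$ is the same for every admissible $\mathcal{D}$; the problem therefore reduces to minimizing the learner bias $\mathcal{B}(\hat\theta,\theta^*)=\|\hat\theta-\theta^*\|_2$ over $x_i\in\mathcal{X}_N$. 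Unlike for the complete teacher, the surjectivity of $\mathcal{M}$ (Lemma \ref{smp}(b)) is unavailable here because $\mathbb{D}$ is finite, so in general $\theta^*$ cannot be reached exactly and one genuinely settles for the nearest attainable parameter.

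Next I would make $\hat\theta$ explicit. By Proposition \ref{epg} and one step of Eq.(\ref{scsgd}) on the singleton $(x_i,y_i)$, $\hat\theta=\theta^0-\eta\zeta_i x_i$ with $\zeta_i:=\nabla_f\ell(\langle\theta^0,x_i\rangle,y_i)$, which is nonzero (a vanishing derivative would leave $\hat\theta=\theta^0\neq\theta^*$). Factoring out the nonzero scalar,
\begin{equation*}
\mathcal{B}(\hat\theta,\theta^*)=\bigl\|\theta^0-\theta^*-\eta\zeta_i x_i\bigr\|_2=|\eta\zeta_i|\,\bigl\|x_i-\xi_{\eta;\theta^0}(\theta^*-\theta^0)\bigr\|_2,
\end{equation*}
with $\xi_{\eta;\theta^0}=-1/(\eta\zeta_i)$ exactly the learner-specific constant of Theorem \ref{ost}. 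When the prefactor $|\eta\zeta_i|$ is taken common to all candidates — the convention under which $\xi_{\eta;\theta^0}$ is a single scalar, as in the loss-function case study — minimizing the bias coincides with minimizing $\|x_i-\xi_{\eta;\theta^0}(\theta^*-\theta^0)\|_2^2$, and since $\xi_{\eta;\theta^0}(\theta^*-\theta^0)$ is precisely the vector a complete teacher would teach (Proposition \ref{costs}), the pool element nearest to it is optimal among all pool singletons. The label is then pinned by the free-variable convention $y^*=f^*(x^*)$ carried over from Eq.(\ref{eq9}), and well-definedness of the $\arg\min$ is immediate because $\mathcal{X}_N$ is finite and nonempty.

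The main obstacle is the dependence of $\zeta_i=\nabla_f\ell(\langle\theta^0,x_i\rangle,y_i)$ on the candidate $x_i$: strictly, the prefactor $|\eta\zeta_i|$ varies over the pool, so the Euclidean-nearest-neighbour rule is the tractable surrogate for, rather than literally identical to, exact bias minimization. I would dispatch this by (i) adopting the paper's convention of treating $\xi_{\eta;\theta^0}$ as one learner-dependent scalar — exact whenever $\nabla_f\ell(\langle\theta^0,\cdot\rangle,\cdot)$ is constant on the pool — and (ii) observing that, regardless of this approximation, feeding any single example already yields $\|\mathcal{D}^*\|_0=1$, so the teaching dimension and iterative teaching dimension are one, in line with Theorem \ref{ost}.
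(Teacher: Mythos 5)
Your proposal is correct and follows essentially the route the paper intends: the paper states Proposition \ref{nosts} without an explicit proof, remarking only that it is ``straightforward to derive'' from the universal optimal teaching example of Theorem \ref{ost}, i.e., the naive teacher, whose $\mathcal{X}_\mathcal{P}=\mathcal{X}_N$ admits neither addition nor scaling, simply hands over the pool element nearest in Euclidean norm to $\xi_{\eta;\theta^0}(\theta^*-\theta^0)$, with $y^*=f^*(x^*)$ as in Eq.(\ref{eq9}). What you add beyond the paper is worthwhile: the identity $\|\hat\theta-\theta^*\|_2=|\eta\zeta_i|\,\bigl\|x_i-\xi_{\eta;\theta^0}(\theta^*-\theta^0)\bigr\|_2$ makes explicit why the Euclidean criterion is the right proxy for the learner bias in the bilevel objective Eq.(\ref{eq1}), a link the paper leaves implicit. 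Your caveat is also accurate and not a flaw in your argument: since $\zeta_i=\nabla_f\ell(\langle\theta^0,x_i\rangle,y_i)$ can in principle vary across candidates, the nearest-neighbour rule coincides with exact bias minimization only under the paper's own convention that $\xi_{\eta;\theta^0}$ is a single learner-specific constant (which holds for the square and hinge losses, and for the logistic loss with $\theta^0=0$, where the gradient factor depends only on the free label $y$), so flagging this is a refinement of, rather than a departure from, the paper's reasoning.
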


One can observe that pool-based teachers need more workload to construct the optimal teaching example. Specifically, different from the complete teacher, OSTS requires scalable (combinable) teachers to do further scaling (linear combination) to match the universal optimal teaching example Eq.(\ref{eq9}). Packing OSTS for the complete and pool-based teachers up, the pseudo code is summarized in Alg.\ref{aosts}.

Limited knowledge knowledge leads to constrained capability of teachers. Therefore, the condition of learners is discussed as follows, satisfying which they can converge after one iteration with the help of a pool-based teacher. In another word, pool-based teachers can construct the universal optimal teaching example for the learner who satisfies such condition.

\begin{thm}\label{snc}
	For a learner with $\eta\neq0$ and $\theta^0$, the pool-based teachers are able to construct the universal optimal teaching example, if $\theta^0$ satisfies following teacher-specific necessary and sufficient condition.
	\begin{enumerate}[label= {\rm (\alph*)}]
		\item $\theta^0-\theta^*\in\text{\rm span}(\mathcal{X}_N)$ for the combinable teacher;
		
		\item $\theta^0-\theta^*\in \alpha\cdot\mathcal{X}_N, \alpha\in\mathbb{R}$ for the scalable teacher;
		
		\item $\xi_{\eta;\theta^0}(\theta^0-\theta^*)\in \mathcal{X}_N$ for the naive teacher.
	\end{enumerate}
\end{thm}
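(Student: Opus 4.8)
The plan is to push everything back to the closed form of the universal optimal teaching example established in Theorem~\ref{ost}: its feature vector must equal $x^{*}=\xi_{\eta;\theta^{0}}(\theta^{*}-\theta^{0})$ for the nonzero scalar $\xi_{\eta;\theta^{0}}$, the scalar being nonzero because $\eta\neq 0$ and, since $\theta^{0}\neq\theta^{*}$, the factor $\nabla_{\langle\theta,x\rangle}\ell$ in Eq.~(\ref{eq6}) cannot vanish along a solution. A pool-based teacher can realize an example exactly when its feature vector lies in the completion $\mathcal{X}_{\mathcal{P}}$ of $\mathcal{X}_{N}$ under that teacher's admissible operations (the label is then pinned to $y^{*}=f^{*}(x^{*})$ and carries no extra constraint), so ``the teacher can construct the universal optimal teaching example'' is just ``$x^{*}\in\mathcal{X}_{\mathcal{P}}$''. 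The theorem then follows by inserting $x^{*}=\xi_{\eta;\theta^{0}}(\theta^{*}-\theta^{0})$ and reading off what $\mathcal{X}_{\mathcal{P}}$ is in each of the three cases: $\mathrm{span}(\mathcal{X}_{N})$ for the combinable teacher, $\{\alpha x_{i}:\alpha\in\mathbb{R},\,x_{i}\in\mathcal{X}_{N}\}$ for the scalable teacher, and $\mathcal{X}_{N}$ itself for the naive teacher.

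I would prove the two directions uniformly, teacher by teacher. For sufficiency, assume the stated membership and exhibit the generating data that Proposition~\ref{comosts}, Proposition~\ref{sosts}, Proposition~\ref{nosts} ask for --- a coefficient vector $\beta$ with $\langle\beta,(x_{1},\dots,x_{N})^{T}\rangle=x^{*}$ (combinable), a pair $(\alpha,x_{i})$ with $\alpha x_{i}=x^{*}$ (scalable), or an index with $x_{i}=x^{*}$ (naive) --- and then invoke Theorem~\ref{ost} to confirm $\mathcal{M}((x^{*},f^{*}(x^{*}))\,|\,\theta^{0})=\theta^{*}$. For necessity, suppose the teacher realizes some $(x,y)$ with $\mathcal{M}((x,y)|\theta^{0})=\theta^{*}$; by Proposition~\ref{epg} this reads $\theta^{0}-\eta\zeta x=\theta^{*}$ with scalar $\zeta=\nabla_{\langle\theta,x\rangle}\ell(\langle\theta^{0},x\rangle,y)\neq 0$, forcing $x=(\eta\zeta)^{-1}(\theta^{0}-\theta^{*})$, a nonzero scalar multiple of $\theta^{*}-\theta^{0}$. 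Combining this with $x\in\mathcal{X}_{\mathcal{P}}$ and using that $\mathrm{span}(\mathcal{X}_{N})$ and $\{\alpha x_{i}:\alpha\in\mathbb{R},x_{i}\in\mathcal{X}_{N}\}$ are closed under nonzero scaling and negation gives conditions (a) and (b) after absorbing $\xi_{\eta;\theta^{0}}$; for (c) the naive teacher has no scaling freedom, so the scalar cannot be absorbed and the condition remains the literal membership of $\xi_{\eta;\theta^{0}}(\theta^{*}-\theta^{0})$ (equivalently, up to the sign of $\xi$, of $\xi_{\eta;\theta^{0}}(\theta^{0}-\theta^{*})$) in $\mathcal{X}_{N}$.

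The routine algebra --- turning $\xi_{\eta;\theta^{0}}(\theta^{*}-\theta^{0})\in\mathrm{span}(\mathcal{X}_{N})$ into $\theta^{0}-\theta^{*}\in\mathrm{span}(\mathcal{X}_{N})$ and the analogous rewriting for the scalable teacher --- is immediate once $\xi_{\eta;\theta^{0}}\neq 0$ is established. The genuinely delicate point, and the one I expect to cost the most care, is the naive case: there $\xi_{\eta;\theta^{0}}$ is itself defined through $x^{*}$ (it involves $\nabla\ell$ evaluated at $(\langle\theta^{0},x^{*}\rangle,y^{*})$), so the membership ``$\xi_{\eta;\theta^{0}}(\theta^{*}-\theta^{0})\in\mathcal{X}_{N}$'' has to be interpreted as the existence of an $x_{i}\in\mathcal{X}_{N}$ that is a self-consistent fixed point of the map $x\mapsto -(\eta\,\nabla_{\langle\theta,x\rangle}\ell(\langle\theta^{0},x\rangle,f^{*}(x)))^{-1}(\theta^{0}-\theta^{*})$; I would need to check carefully that this fixed-point reading is exactly the obstruction, i.e.\ that Eq.~(\ref{eq6}) has no solutions with $x$ outside $\mathrm{span}(\theta^{*}-\theta^{0})$ and that it matches the $\arg\min$ construction of Proposition~\ref{nosts} attaining value zero. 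For the combinable and scalable teachers the scaling freedom swallows $\xi_{\eta;\theta^{0}}$ outright, so no such circularity appears and the equivalences are clean.
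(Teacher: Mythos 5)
Your proposal takes essentially the same route as the paper's proof: both reduce constructibility to membership of the closed-form example $x^{*}=\xi_{\eta;\theta^{0}}(\theta^{*}-\theta^{0})$ in the teacher's reachable set and prove both directions by absorbing the nonzero scalar $\xi_{\eta;\theta^{0}}$ into the generating coefficients (the paper sets $\beta_i=-\xi_{\eta;\theta^{0}}\lambda_i$ for the combinable teacher and $\beta=-\xi_{\eta;\theta^{0}}\alpha$ for the scalable teacher, and divides by $\xi_{\eta;\theta^{0}}$ for the converses). The naive-teacher circularity you flag (that $\xi_{\eta;\theta^{0}}$ is itself evaluated at $(x^{*},y^{*})$) is a legitimate subtlety, but the paper simply omits case (c) as trivial, so your caveat supplements rather than diverges from its argument.
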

\begin{proof} The proof of the naive teacher (c) is trivial so we omit it.
	
"$\Rightarrow$":

	\begin{enumerate}[label= {\rm (\alph*)}]
		\item Since $\theta^0-\theta^*\in\text{\rm span}(\mathcal{X}_N)$, we can derive $\exists\, \lambda_1,\lambda_2,\dots,\lambda_N \in\mathbb{R}$ not all zero, such that
		\begin{eqnarray}
			\theta^0-\theta^*=\sum_{i=1}^{N}\lambda_i x_i, \qquad x_i\in\mathcal{X}_N.
		\end{eqnarray} Hence, we can obtain $\xi_{\eta;\theta^0}(\theta^*-\theta^0)=-\xi_{\eta;\theta^0}\sum_{i=1}^{N}\lambda_i x_i$. Let $\beta_i=-\xi_{\eta;\theta^0}\lambda_i$, then the combinable teacher can derive the universal optimal teaching example.
		
		\item Similarly, $\because\theta^0-\theta^*\in \alpha\cdot\mathcal{X}_N,\,\therefore\exists\alpha\neq0\in\mathbb{R},x_i\in\mathcal{X}_N$ such that $\theta^0-\theta^*=\alpha x_i$. Then the scalable teacher can obtain the universal optimal teaching example by setting $\beta=-\xi_{\eta;\theta^0}\alpha\cdot e_i$.
	\end{enumerate}

"$\Leftarrow$":

	\begin{enumerate}[label= {\rm (\alph*)}]
		\item Since the combinable teacher can derive the universal optimal teaching example, we can derive $\exists\, \beta\in\mathbb{R}^n$, such that
		\begin{eqnarray}
			\xi_{\eta;\theta^0}(\theta^*-\theta^0)=\langle\beta,(x_1,\dots,x_N)^T\rangle, \qquad x_1,\dots,x_N\in\mathcal{X}_N.
		\end{eqnarray}
		Since the constant $\xi_{\eta;\theta^0}$ is non-zero, we have
		\begin{eqnarray}
			\theta^0-\theta^*=\sum_{i=1}^{N}-\frac{\beta_i}{\xi_{\eta;\theta^0}} x_i, \qquad x_i\in\mathcal{X}_N, 
		\end{eqnarray} 
		from which we can conclude $\theta^0-\theta^*\in\text{\rm span}(\mathcal{X}_N)$.

		\item Analogously, we can derive $\exists\, \beta\in\mathbb{R}$, such that \begin{eqnarray}
			\xi_{\eta;\theta^0}(\theta^*-\theta^0)=\beta\cdot x_i, \qquad x_i\in\mathcal{X}_N.
		\end{eqnarray}
		For a non-zero constant $\xi_{\eta;\theta^0}$, we have 
		\begin{eqnarray}
			\theta^0-\theta^*=-\frac{\beta}{\xi_{\eta;\theta^0}} x_i,
		\end{eqnarray} 
		i.e., $\theta^0-\theta^*\in \alpha\cdot\mathcal{X}_N, \alpha\in\mathbb{R}$.
	\end{enumerate}
\end{proof}

Theorem \ref{snc} tells that the teaching ability of pool-based teachers depends on equipped operation. Specifically, the combinable teacher is better qualified than other pool-based teachers, who can construct the universal optimal teaching example if the difference between $\theta^0$ of these learners and $\theta^*$ is expressible in terms of $\mathcal{X}_N$. For the scalable teacher, the capability is worse. They are only able to generate the universal optimal teaching example for those learners whose $\theta^0$ satisfy $\theta^0-\theta^*\in\alpha\mathcal{X}_N$. Moreover, for the naive teacher, they only can do that when the universal optimal teaching example is in their pools.

\begin{rem}
	The specific task for model $\theta^*$ can be not only regression but also classification. Besides above analysis of OSTS is mainly on binary classification, based on which K-class cases can be extend straightforward since a K-class classification task usually can be separated into K binary classification tasks.
\end{rem}

\subsection{Learner loss functions}\label{gl}

We mainly discuss three typical loss functions (square loss, hinge loss and logistic loss) under the complete teacher, which can be extended to the pool-based teacher following Proposition \ref{comosts}, \ref{sosts}, \ref{nosts}. The learners with these loss functions are called least square regression (LSR), support vector machine (SVM) and logistic regression (LR) learners, respectively. Without loss of generality, we consider the worse case where the learner has no primary knowledge, i.e., $\theta^0=0$.

\begin{prop}\label{prop9}
	Square loss, $\ell(f, y)=\left(f - y\right)^2$ is one-shot teachable for the complete teacher.
\end{prop}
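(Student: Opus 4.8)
The plan is to instantiate the general machinery of Theorem~\ref{ost} for the concrete case $\ell(f,y)=(f-y)^2$ under a complete teacher, and simply verify that all hypotheses are met and that the resulting optimal teaching example is well-defined. The only real requirement of one-shot teachability (Theorem~\ref{ost}) is that $\ell$ be differentiable with respect to $f$, so the first step is to compute $\nabla_f \ell(f,y) = 2(f-y)$ and note it exists everywhere. By Proposition~\ref{epg}, the gradient at a single example expands as $\mathcal{G}(\theta;(x,y)\,|\,\ell;f) = 2(\langle\theta,x\rangle - y)\,x$, i.e. $\zeta = 2(\langle\theta,x\rangle - y)$.

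Next I would set up the one-iteration update equation $\theta^0 - \eta\,\zeta\, x - \theta^* = 0$ with $\theta^0 = 0$ (the worst-case no-primary-knowledge setting stated just before the proposition), giving $-\,\eta\cdot 2(\langle 0,x\rangle - y)\,x = \theta^*$, hence $2\eta y\, x = \theta^*$. Since for the complete teacher the label is fixed by the target model, $y^* = f^*(x^*) = \langle\theta^*, x^*\rangle$, one substitutes this in and solves the resulting (scalar-coupled) vector equation for $x^*$; the key observation is that $x^*$ must be parallel to $\theta^*$, so writing $x^* = c\,\theta^*$ reduces everything to a scalar equation $2\eta (c\|\theta^*\|_2^2) c = 1$, i.e. $c^2 = 1/(2\eta\|\theta^*\|_2^2)$, which has a real solution precisely when $\eta>0$. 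This exhibits an explicit singleton $\mathcal{D}^* = \{(x^*, y^*)\}$ with $\|\mathcal{D}^*\|_0 = 1$ on which the LSR learner converges to $\theta^*$ in one step, matching the closed form $(x^*,y^*) = (\xi_{\eta;\theta^0}(\theta^*-\theta^0), f^*(\xi_{\eta;\theta^0}(\theta^*-\theta^0)))$ of Eq.(\ref{eq9}) with $\xi_{\eta;\theta^0} = -1/(\eta\nabla_f\ell(\langle\theta^0,x^*\rangle,y^*))$, so that the constant $\xi$ is itself consistent with the derived $x^*$.

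The one subtlety — and the step I expect to be the main obstacle — is the implicit/self-referential nature of $\xi_{\eta;\theta^0}$: it is defined in terms of $\nabla_f\ell$ evaluated at the as-yet-unknown $(x^*,y^*)$, so one must check that the fixed-point equation for $\xi$ (equivalently for $c$ above) is actually solvable and yields a finite, non-zero scalar. For square loss this is clean because $\nabla_f\ell$ is affine in $f$, so the self-reference collapses to the quadratic $c^2 = 1/(2\eta\|\theta^*\|_2^2)$ rather than a transcendental equation; I would make this explicit and note that $\eta\neq 0$ together with $\theta^*\neq\theta^0$ guarantees $\xi$ is well-defined and non-zero, so the division step in the proof of Theorem~\ref{ost} is legitimate. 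With that checked, the conclusion that square loss is one-shot teachable for the complete teacher follows directly from Theorem~\ref{ost} and Proposition~\ref{costs}.
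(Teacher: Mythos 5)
Your proposal is correct, and its skeleton (instantiate Theorem~\ref{ost} via the chain-rule expansion of Proposition~\ref{epg}, then exhibit the singleton explicitly) is the same as the paper's; the interesting difference is in how the label is handled. The paper's proof does \emph{not} impose $y^*=f^*(x^*)$: it treats $y^*$ as a genuinely free variable (any nonzero label in $\mathcal{Y}$, which is legitimate here because the complete teacher's pool is all of $\mathcal{X}\times\mathcal{Y}$), so with $\theta^0=0$ the gradient is $-2y^*x$, the constant is read off immediately as $\xi_{\eta;\theta^0}=1/(2\eta y^*)$, and $x^*=(\theta^*-\theta^0)/(2\eta y^*)$ with no fixed-point equation to solve and no sign restriction beyond $\eta\neq 0$, $y^*\neq 0$. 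You instead insist on the model-consistent label $y^*=\langle\theta^*,x^*\rangle$, which couples $x^*$ and $y^*$ and forces you to resolve the self-reference in $\xi_{\eta;\theta^0}$ via the ansatz $x^*=c\,\theta^*$ and the quadratic $c^2=1/(2\eta\|\theta^*\|_2^2)$. What your route buys is exactly the canonical example of Eq.~(\ref{eq9}) together with an explicit verification that the implicitly defined $\xi_{\eta;\theta^0}$ is solvable and consistent --- a point the paper glosses over --- at the price of slightly less generality: your solution needs $\eta>0$ (and $\theta^*\neq\theta^0$), whereas the paper's free-label device works for any $\eta\neq 0$; since the paper's setting already assumes $\eta^t>0$ and $\theta^0=0$, this costs nothing in context, but it is worth noting that the label freedom is what lets the paper avoid the fixed-point issue entirely.
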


\begin{proof}
	It follows from Theorem \ref{ost} that the first-order differentiable learner loss is one-shot teachable, and it is evident that square loss $\left(f - y\right)^2$ has first-order partial derivatives w.r.t. $f$, which justifies the existence of the optimal teaching example. Hence, to prove square loss is one-shot teachable, it needs to solve for $\xi_{\eta;\theta^0}$.
	
	Following Proposition \ref{epg}, we have 
	\begin{eqnarray*}
		\mathcal{G}\left(\theta^0;(x,y)|\left(f - y\right)^2;\left\langle \theta,x\right\rangle\right)=2\left(\left\langle \theta^0,x\right\rangle-y\right)x.
	\end{eqnarray*}
	Thus, we obtain 
	\begin{eqnarray*}
		\xi_{\eta;\theta^0}=1/(2\eta y).
	\end{eqnarray*}
	Therefore, the optimal teaching example $(x^*,y^*)$ is 
	\begin{equation*}
		(x^*,y^*)=\left((\theta^*-\theta^0)/(2\eta y^*),y^*\right)
	\end{equation*}
	where $y^*$ is a free variable, which can be arbitrarily chosen in $\mathcal{Y}$ at first.
\end{proof}

We see that for the dynamic learner equipped with variable learning rate $\eta^t, t=0,1,\dots,T$, the teacher merely requires the initial $\eta^0$ to construct the optimal teaching example. For hinge loss, we mainly consider homogeneous $\theta$ (without a bias term), and the inhomogeneous result can be derived similarly. 

\begin{prop}\label{prop10}
	Hinge loss, $\ell(f,y)=\max\left(1-y\cdot f,0\right)$ is one-shot teachable for the complete teacher.
\end{prop}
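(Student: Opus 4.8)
The plan is to mirror the proof of Proposition~\ref{prop9}, but the first thing I must confront is that hinge loss is \emph{not} differentiable at the kink $y\cdot f = 1$, so Theorem~\ref{ost} does not apply verbatim. The resolution is that differentiability is only needed \emph{locally}, at the single point $\langle\theta^0,x^*\rangle$ that the learner actually evaluates in its one update. So first I would restrict attention to the active region $\{(x,y):y\langle\theta^0,x\rangle<1\}$, on which $\ell(f,y)=1-y\cdot f$ is smooth with $\nabla_f\ell(f,y)=-y$. Taking the worst case $\theta^0=0$ (as fixed in Section~\ref{gl}), we get $y\langle\theta^0,x\rangle=0<1$ for every $x$ and every $y\in\{-1,1\}$, so the hinge is automatically active at initialization, and Proposition~\ref{epg} gives $\mathcal{G}(\theta^0;(x,y)\mid\ell;f)=-y\,x$.

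Next I would substitute this gradient into Eq.(\ref{eq6}) and solve exactly as in the proof of Theorem~\ref{ost}: dividing through by $-\eta\nabla_f\ell = \eta y$ yields $\xi_{\eta;\theta^0}=-1/(\eta\cdot(-y))=1/(\eta y)$, hence the candidate optimal teaching example
\begin{equation*}
	(x^*,y^*)=\left((\theta^*-\theta^0)/(\eta y^*),\, y^*\right)=\left(\theta^*/(\eta y^*),\, y^*\right),
\end{equation*}
with $y^*$ a free label. Substituting back into Eq.(\ref{scsgd}) confirms the one-step update $\theta^0-\eta(-y^*)x^*=\eta y^*x^*=\theta^*$, so one iteration reaches the target and the existence claim follows.

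Then I would dispatch two consistency checks. First, the (sub)gradient I used must be the one genuinely selected at the visited point: since $\theta^0=0$, the point $y^*\langle\theta^0,x^*\rangle=0<1$ lies strictly inside the active region, so $-y^*x^*$ is the actual gradient there and the computation is legitimate despite the global non-smoothness; more generally the argument goes through for any $\theta^0$ with $y^*\langle\theta^0,x^*\rangle<1$. Second, the free label should be compatible with $f^*$, i.e. $y^*=\sgn\langle\theta^*,x^*\rangle$; computing $\langle\theta^*,x^*\rangle=\|\theta^*\|_2^2/(\eta y^*)$ gives $\sgn\langle\theta^*,x^*\rangle=\sgn(\eta y^*)=y^*$ whenever $\eta>0$, so a consistent choice of $y^*$ exists (for $\eta<0$ one flips the sign of $y^*$ and rescales $x^*$ accordingly, which the ``free variable'' clause permits).

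The main obstacle is precisely this non-differentiability: the whole argument rests on verifying that the single point the learner evaluates during its one update sits in the smooth part of the hinge, so that Proposition~\ref{epg} and the machinery of Theorem~\ref{ost} can be invoked there even though they fail on all of $\mathcal{X}\times\Theta$. Once that is established, solving for $\xi_{\eta;\theta^0}$, writing down $(x^*,y^*)$, and checking the label consistency are routine and parallel the square-loss case.
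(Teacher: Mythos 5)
Your proposal is correct and follows essentially the same route as the paper's proof: the paper likewise restricts to examples with $1-y\,f(\theta^0,x)>0$ (so the hinge is active and the gradient is $-yx$), solves for $\xi_{\eta;\theta^0}=1/(\eta y)$, and arrives at the same optimal example $(x^*,y^*)=\bigl((\theta^*-\theta^0)/(\eta y^*),\,y^*\bigr)$ with $y^*$ free. Your additional checks --- that the constructed example indeed lies in the active region at $\theta^0=0$, that the one-step update lands on $\theta^*$, and that $y^*=\sgn\langle\theta^*,x^*\rangle$ is self-consistent for $\eta>0$ --- are careful touches the paper leaves implicit, but they do not change the argument.
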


\begin{proof}
	The hinge loss, $\text{max}\left(1-y\cdot f,0\right)$, is also first-order differentiable w.r.t. $f$. Besides, $(x^*,y^*)$ should be one of the teaching examples $(x,y)$ who satisfy $1-y\cdot f(\theta^0,x)>0$, otherwise the gradient at $(x^*,y^*)$ is zero and $\theta^0$ will not be updated. Therefore, we have 
	\begin{eqnarray*}
		\mathcal{G}\left(\theta^0;(x,y)|1-y\cdot f;\left\langle \theta,x\right\rangle\right)=-yx.
	\end{eqnarray*}
	Consequently, we have
	\begin{eqnarray*}
		\xi_{\eta;\theta^0}=1/(\eta y),
	\end{eqnarray*}
	which yields the optimal teaching example
	\begin{equation*}
		(x^*,y^*)=\left((\theta^*-\theta^0)/(\eta y^*),y^*\right)
	\end{equation*}
	where $y^*$ is a free variable as well. This completes the proof.
\end{proof}

\begin{prop}\label{prop12}
	Logistic loss, $\ell(f,y)=\log\left(1+\exp\left(-y\cdot f\right)\right)$ is one-shot teachable for the complete teacher.
\end{prop}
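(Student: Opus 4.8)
The plan is to invoke Theorem \ref{ost} for the existence part and then carry out the one remaining computation: evaluating the learner-specific constant $\xi_{\eta;\theta^0}$ for the logistic loss. Since $\ell(f,y)=\log(1+\exp(-y\cdot f))$ is smooth (in particular first-order differentiable w.r.t.\ $f$), Theorem \ref{ost} immediately guarantees that a singleton optimal teaching example exists; what is left is to exhibit it in closed form by computing $\nabla_f\ell$ and substituting into Eq.(\ref{eq9}).

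First I would differentiate: $\nabla_f\ell(f,y)=\frac{-y\exp(-y\cdot f)}{1+\exp(-y\cdot f)}=\frac{-y}{1+\exp(y\cdot f)}$, which we may write compactly as $-y\,\sigma(-y\cdot f)$ with $\sigma$ the logistic sigmoid. Then, by Proposition \ref{epg}, the gradient at an example $(x,y)$ is $\mathcal{G}(\theta^0;(x,y)|\ell;f)=\big(-y\,\sigma(-y\langle\theta^0,x\rangle)\big)\,x$, so from the definition $\xi_{\eta;\theta^0}=-1/\big(\eta\nabla_f\ell(\langle\theta^0,x^*\rangle,y^*)\big)$ we get $\xi_{\eta;\theta^0}=\frac{1}{\eta y^*\,\sigma(-y^*\langle\theta^0,x^*\rangle)}=\frac{1+\exp(y^*\langle\theta^0,x^*\rangle)}{\eta y^*}$. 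Plugging $\theta^0=0$ (the worst-case, no-primary-knowledge setting used for the other two losses) collapses $\sigma(0)=\tfrac12$, giving the clean value $\xi_{\eta;\theta^0}=2/(\eta y^*)$ and the optimal teaching example $(x^*,y^*)=\big((\theta^*-\theta^0)/(2\eta y^*),\,y^*\big)$ with $y^*$ a free variable; for general $\theta^0$ one records the implicit-but-still-closed form above and notes $x^*=\xi_{\eta;\theta^0}(\theta^*-\theta^0)$ as in Eq.(\ref{eq7}).

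The one subtlety worth flagging — the ``hard part,'' to the extent there is one — is that for general nonzero $\theta^0$ the constant $\xi_{\eta;\theta^0}$ depends on $\langle\theta^0,x^*\rangle$, and $x^*$ in turn depends on $\xi_{\eta;\theta^0}$, so Eq.(\ref{eq7}) is formally a fixed-point equation in $x^*$ rather than an explicit formula; I would point out that substituting $x^*=\xi(\theta^*-\theta^0)$ back in yields a scalar equation $\xi=\big(1+\exp(\xi\,y^*\langle\theta^0,\theta^*-\theta^0\rangle)\big)/(\eta y^*)$ for the single unknown $\xi$, which is solvable (e.g.\ it is exactly the $\theta^0=0$ case that the proposition is stated for, where it trivializes). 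Following the pattern of Propositions \ref{prop9} and \ref{prop10}, I would present the $\theta^0=0$ computation as the clean statement and remark that it extends to variable learning rates $\eta^t$ (only $\eta^0$ is needed) and, via Propositions \ref{comosts}, \ref{sosts}, \ref{nosts}, to pool-based teachers. This completes the proof.
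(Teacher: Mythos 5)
Your route is the same as the paper's: cite Theorem \ref{ost} for existence, differentiate the logistic loss to get $\nabla_f\ell(f,y)=-y/(1+\exp(y f))$, apply Proposition \ref{epg}, and read off $\xi_{\eta;\theta^0}$ under the section's convention $\theta^0=0$; your computation $\xi_{\eta;\theta^0}=2/(\eta y^*)$ agrees with the paper. However, your final formula contradicts your own constant: if $\xi_{\eta;\theta^0}=2/(\eta y^*)$ and $x^*=\xi_{\eta;\theta^0}(\theta^*-\theta^0)$, then $x^*=2(\theta^*-\theta^0)/(\eta y^*)$, which is what the paper states — not $(\theta^*-\theta^0)/(2\eta y^*)$, which is the square-loss example from Proposition \ref{prop9} and appears to have been carried over by mistake. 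This is not a harmless typo in context: with $\theta^0=0$ the one-step update on your stated example gives $\theta^1=\theta^0+\eta\frac{y^*}{2}x^*=\theta^0+\tfrac{1}{4}(\theta^*-\theta^0)\neq\theta^*$, so the example as written does not achieve one-shot convergence; replacing it with $x^*=2(\theta^*-\theta^0)/(\eta y^*)$ fixes the proof and makes it coincide with the paper's.

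One further remark: your observation that for general $\theta^0\neq 0$ the defining relation becomes a scalar fixed-point equation $\xi=\bigl(1+\exp(\xi\,y^*\langle\theta^0,\theta^*-\theta^0\rangle)\bigr)/(\eta y^*)$ is more explicit than anything in the paper, which silently relies on $\theta^0=0$ (so that the sigmoid factor is $1/2$). That is a genuinely useful clarification, but your claim that this equation ``is solvable'' is asserted rather than shown — when $y^*\langle\theta^0,\theta^*-\theta^0\rangle>0$ the right-hand side grows exponentially in $\xi$ and a root need not exist for all parameter values — so either restrict the statement to the $\theta^0=0$ setting the proposition actually lives in (as the paper does), or drop the solvability claim.
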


\begin{proof}
	The logistic loss, $\text{log}\left(1+\text{exp}\left(-y\cdot f\right)\right)$, is first-order differentiable w.r.t. $f$ as well. 
	It follows from Proposition \ref{epg} that
	\begin{eqnarray*}
		&&\mathcal{G}\left(\theta^0;(x,y)|\text{log}\left(1+\text{exp}\left(-y\cdot f\right)\right);\left\langle \theta,x\right\rangle\right)\\
		&=&\frac{-y\text{exp}\left(-y\left\langle \theta^0,x\right\rangle\right)}{1+\text{exp}\left(-y\left\langle \theta^0,x\right\rangle\right)}x=\frac{-y}{1+\text{exp}\left(y\left\langle \theta^0,x\right\rangle\right)}x.
	\end{eqnarray*}
	Thus, we have 
	\begin{eqnarray*}
		\xi_{\eta;\theta^0}=2/(\eta y).
	\end{eqnarray*}
	It derives the optimal teaching example
	\begin{equation*}
		(x^*,y^*)=\left(2(\theta^*-\theta^0)/(\eta y^*),y^*\right)
	\end{equation*}
	with free variable $y^*$, which is the desired result.
\end{proof}

As expected, these first-order differentiable loss functions are one-shot teachable, which justifies that one-shot teachability is not a strict condition and demonstrates the general applicability of OSTS.

\section{Experiments}\label{exmp}

In this section, we compare OSTS with machine learning and iterative strategies on synthetic and real-world data, which demonstrates the efficiency of OSTS.

Specifically, we derive $\theta^*$ by predefining (synthetic data) or pretraining (real-world dataset). L-2 norm is set to measure learner bias $\mathcal{B}(\theta^t,\theta^*)=\|\theta^t-\theta^*\|_2$. Without
particular emphasis, experiments are implemented under
the complete teachers setting.
We teach linear model binary classification tasks and nonlinear neural networks (convolutional neural network) 10-class classification tasks.

\subsection{Comparing OSTS with machine learning strategies}

\begin{figure}[t]
	\vskip -0.1in
	\subfigbottomskip=-6pt
	\subfigcapskip=0pt
	\centering
	\subfigure[MNIST 0 VS. 1]{\includegraphics[width = 0.32\linewidth]{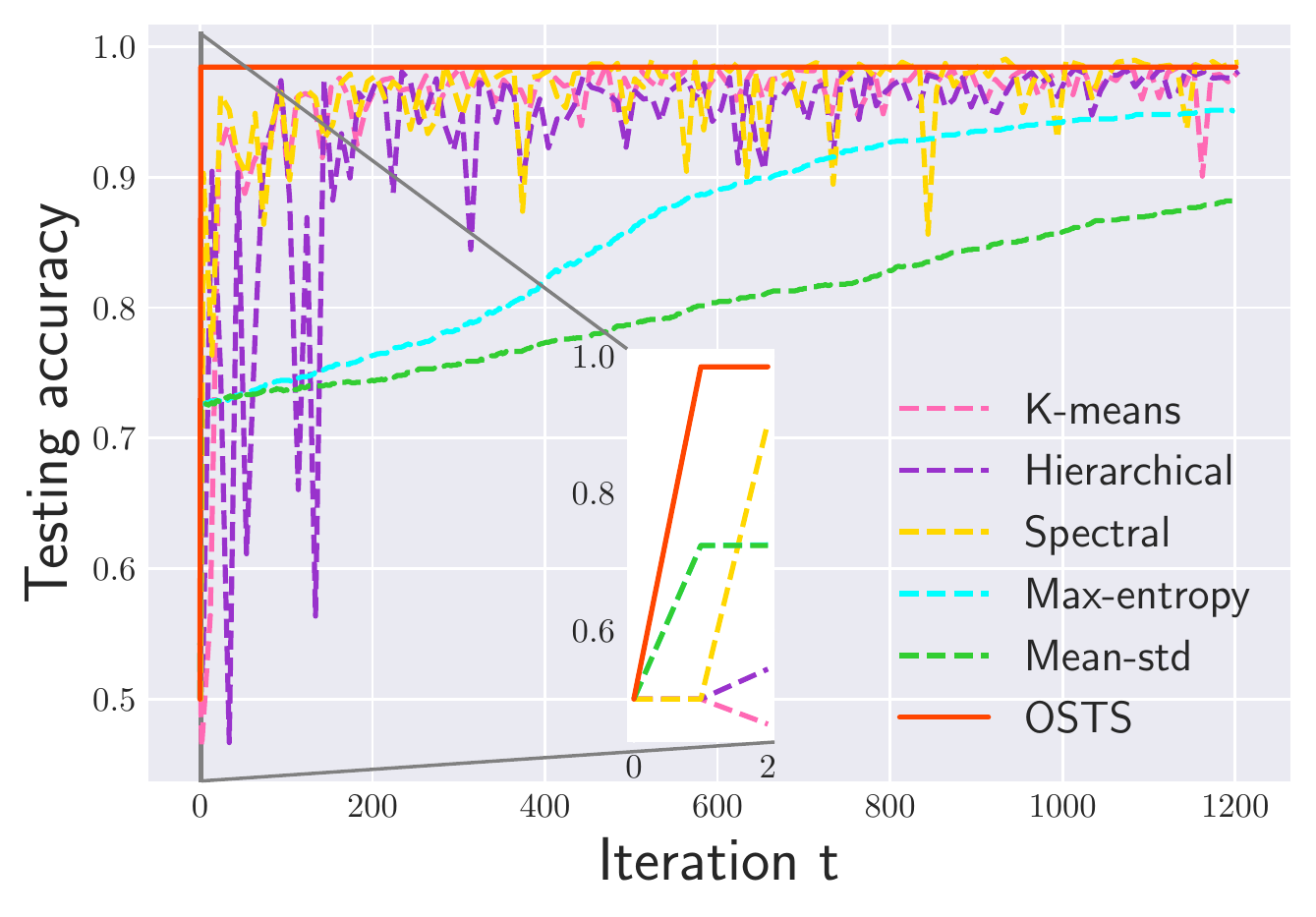}}
	\subfigure[MNIST 3 VS. 8]{\includegraphics[width = 0.32\linewidth]{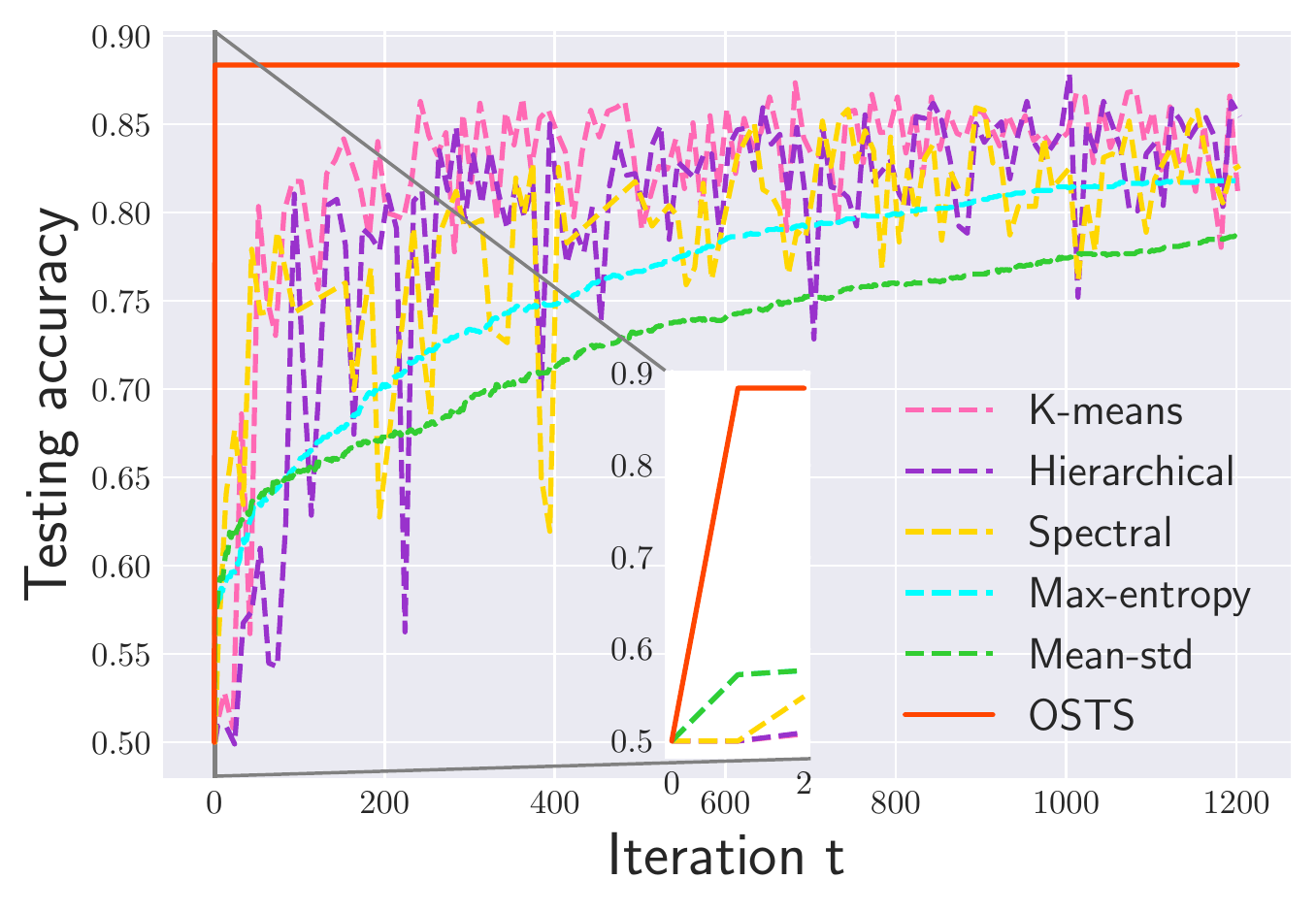}}
	\subfigure[CIFAR10]{\includegraphics[width = 0.32\linewidth]{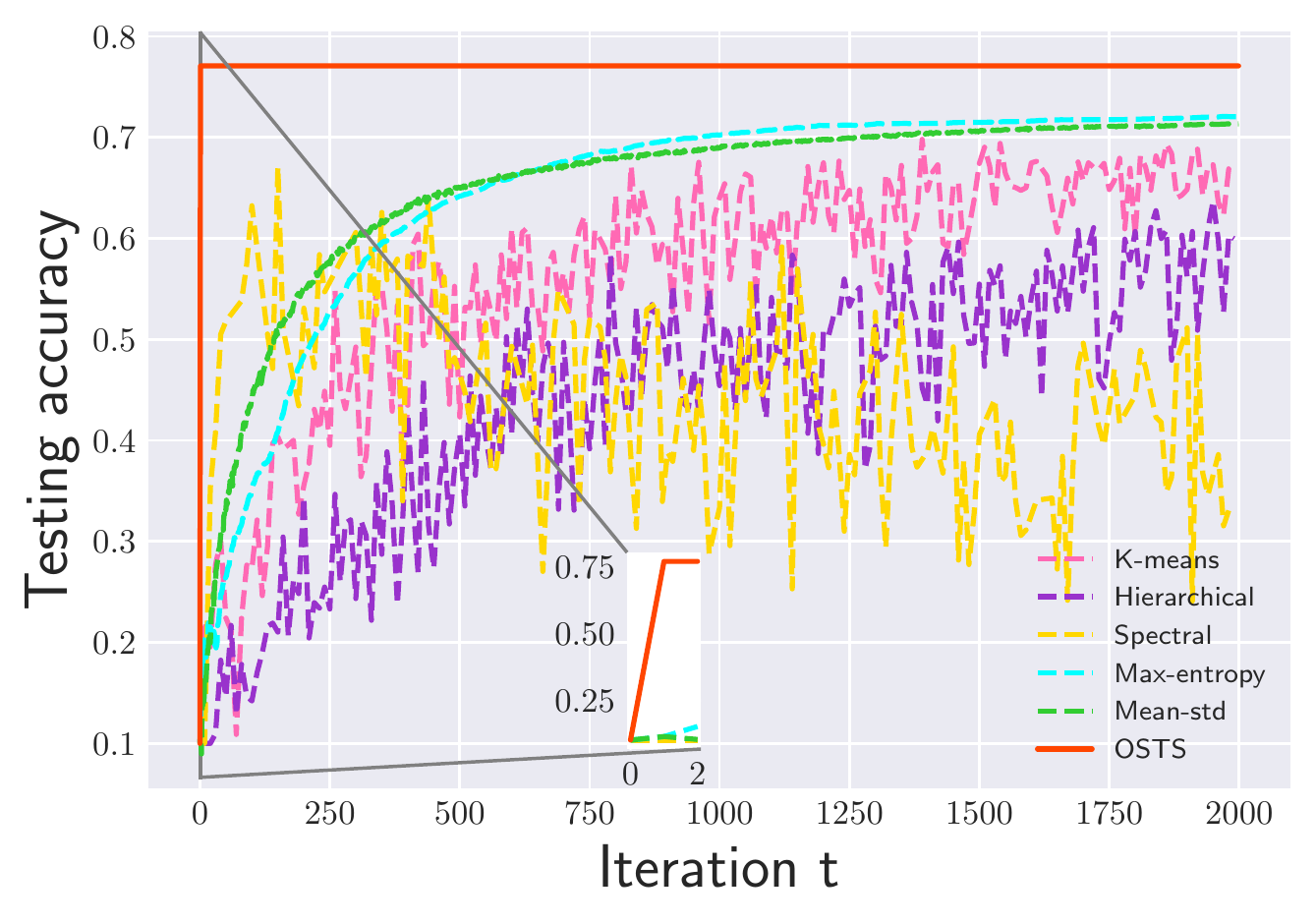}}
	\caption{OSTS vs. machine learning strategies. LR learners following OSTS converge faster and have higher testing accuracy than that guided by other five machine learning strategies on both binary and 10-class classification tasks.}
	\label{ml}
	\vskip -0.2in
\end{figure}

\begin{figure*}[t]
	\vskip -0.1in
	\subfigbottomskip=-6pt
	\subfigcapskip=0pt
	\subfigure[LSR leaner]{
		\begin{minipage}[c]{\linewidth}
			\includegraphics[width=0.32\linewidth]{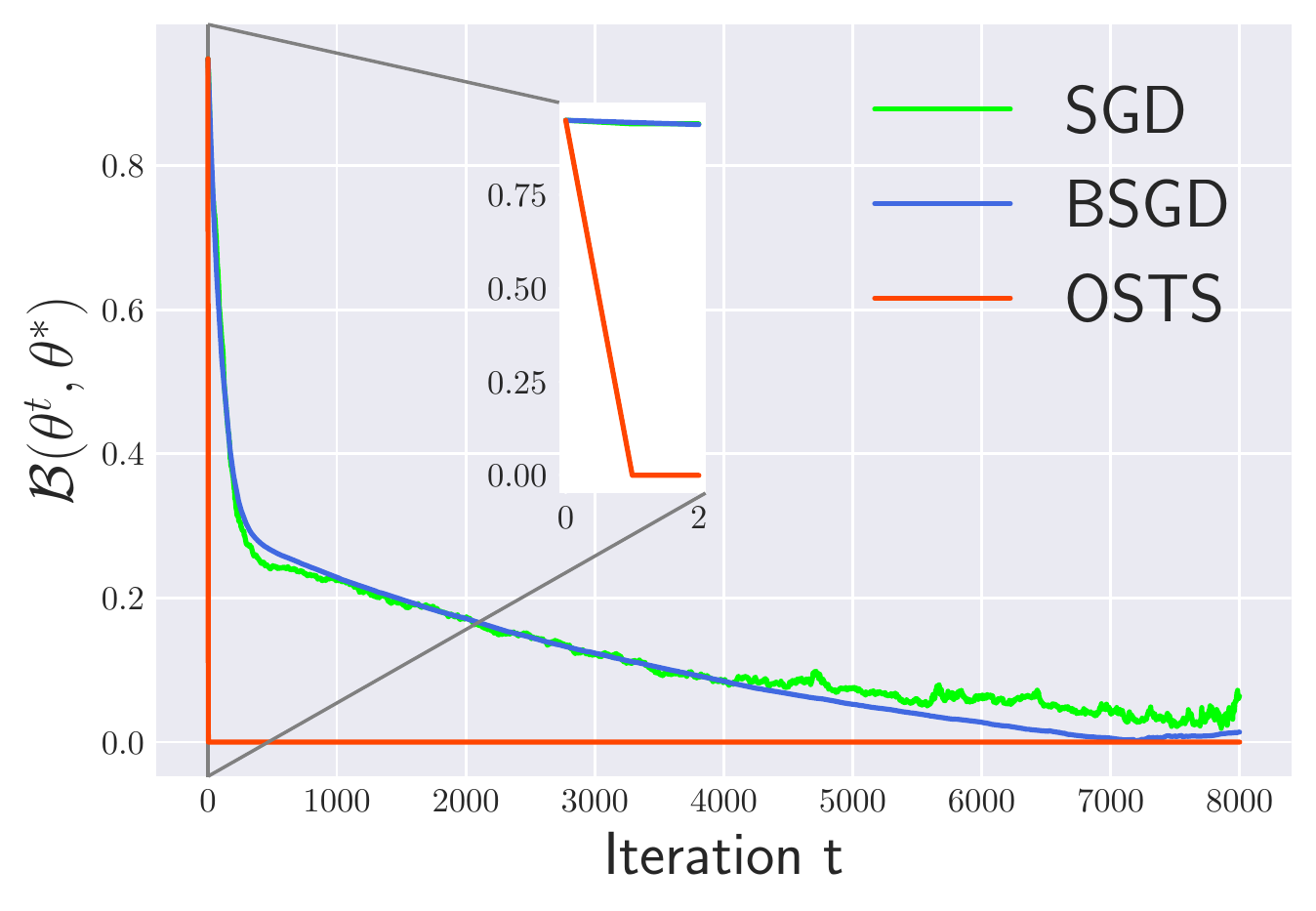}
			\includegraphics[width=0.32\linewidth]{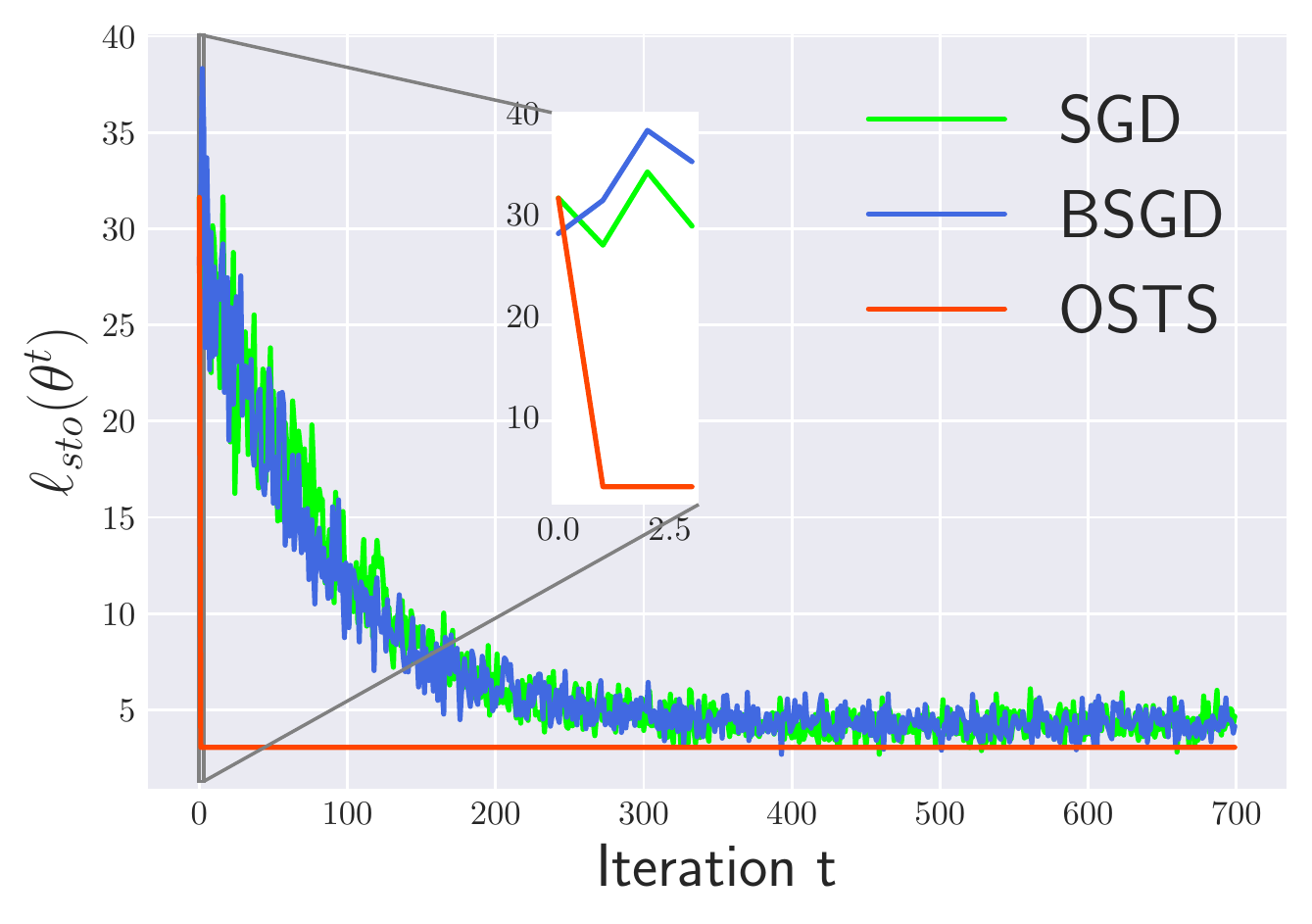}
			\includegraphics[width=0.32\linewidth]{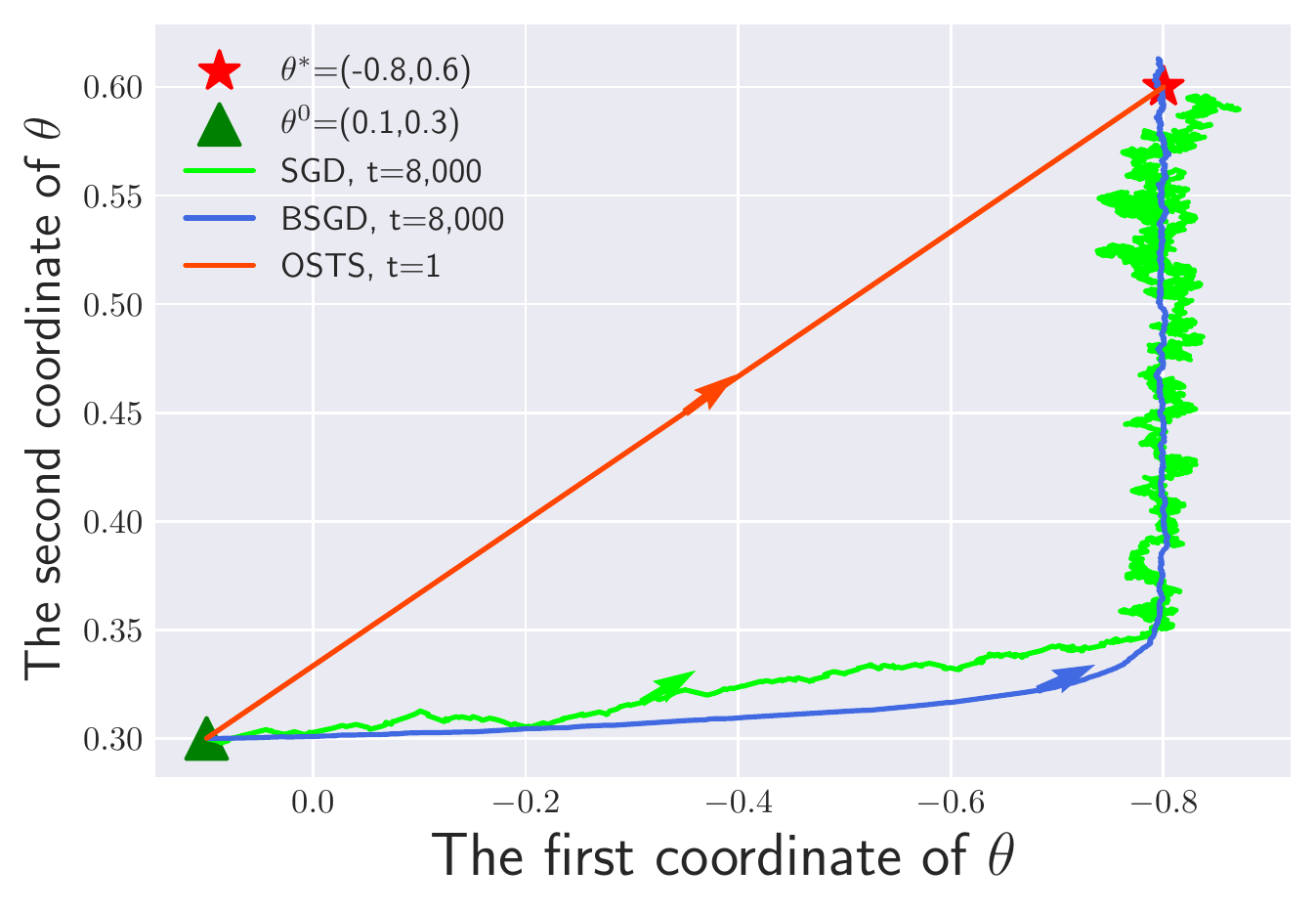}
		\end{minipage}
	}\\
	\subfigure[SVM leaner]{
		\begin{minipage}[c]{\linewidth}
			\includegraphics[width=0.32\linewidth]{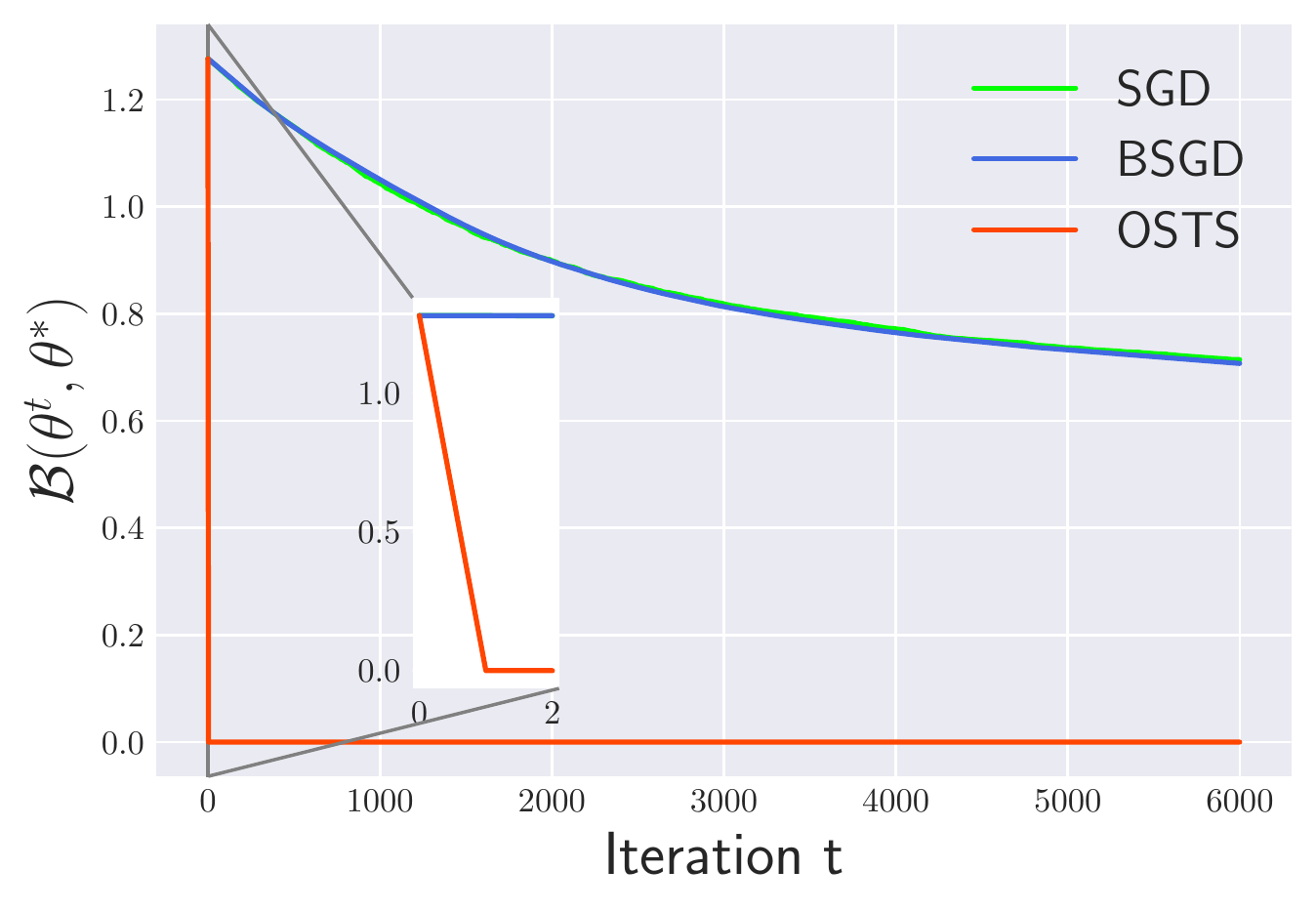}
			\includegraphics[width=0.32\linewidth]{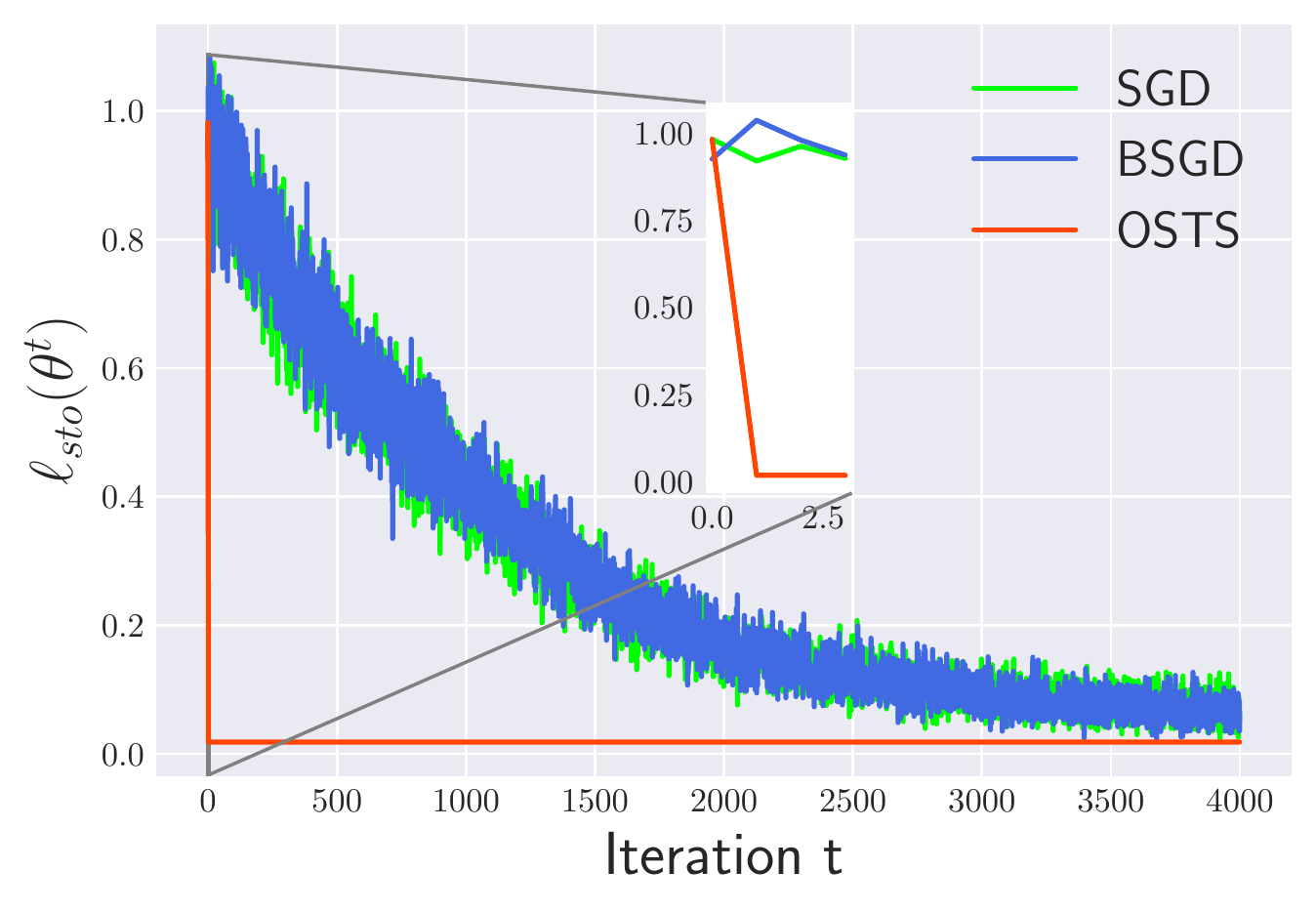}
			\includegraphics[width=0.3\linewidth]{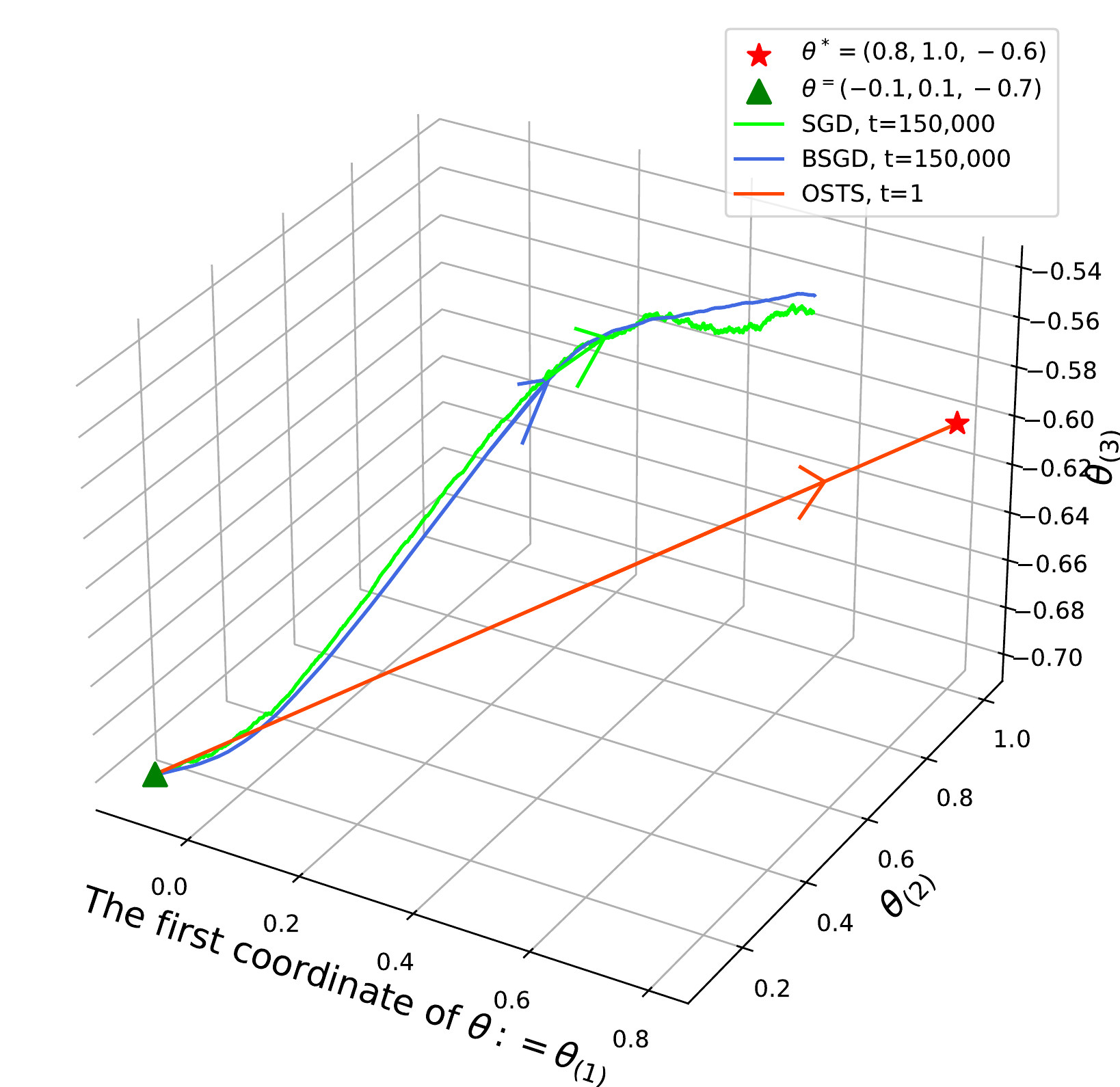}
		\end{minipage}
	}\\
	\subfigure[LR leaner]{
		\begin{minipage}[c]{\linewidth}
			\includegraphics[width=0.32\linewidth]{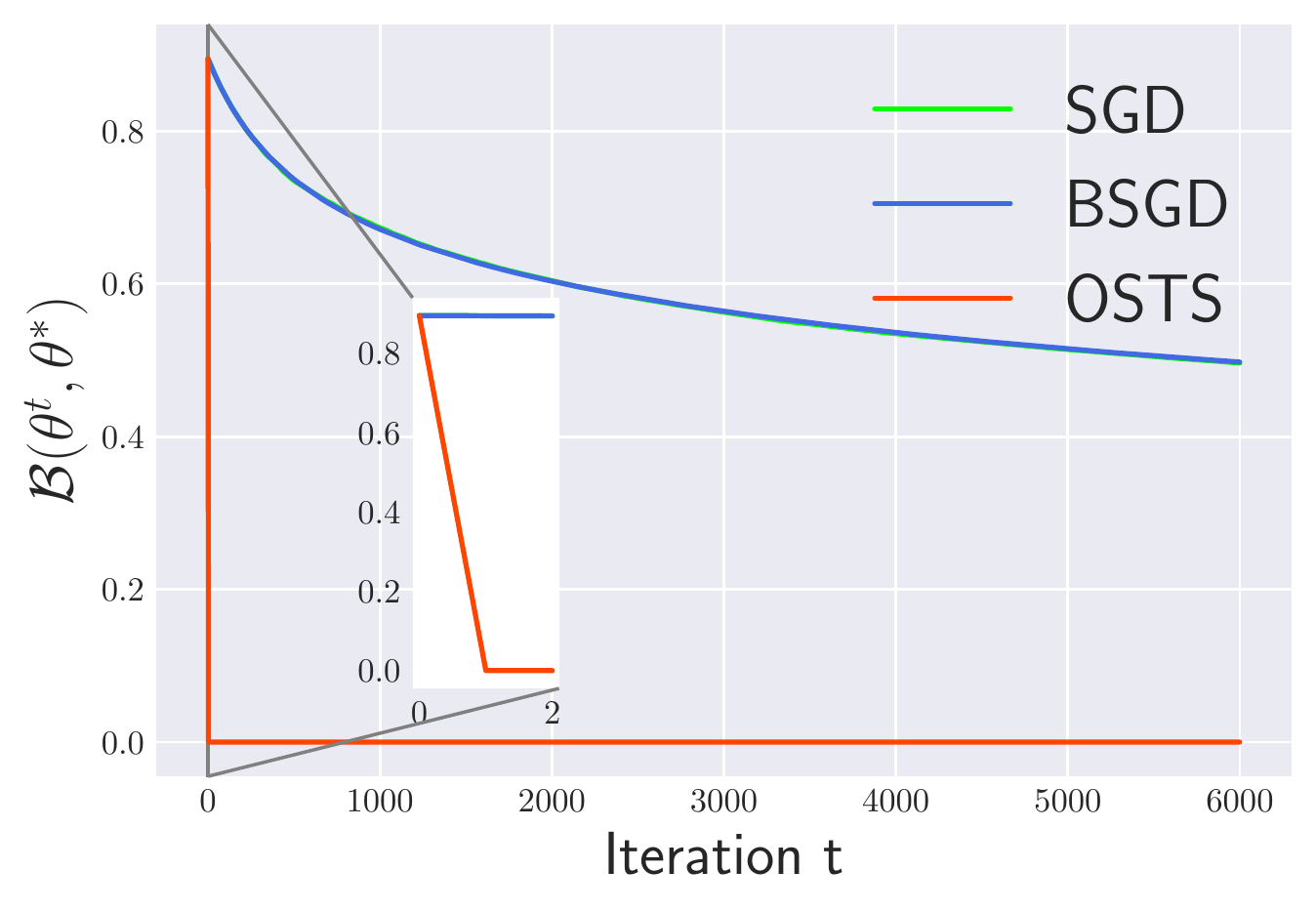}
			\includegraphics[width=0.32\linewidth]{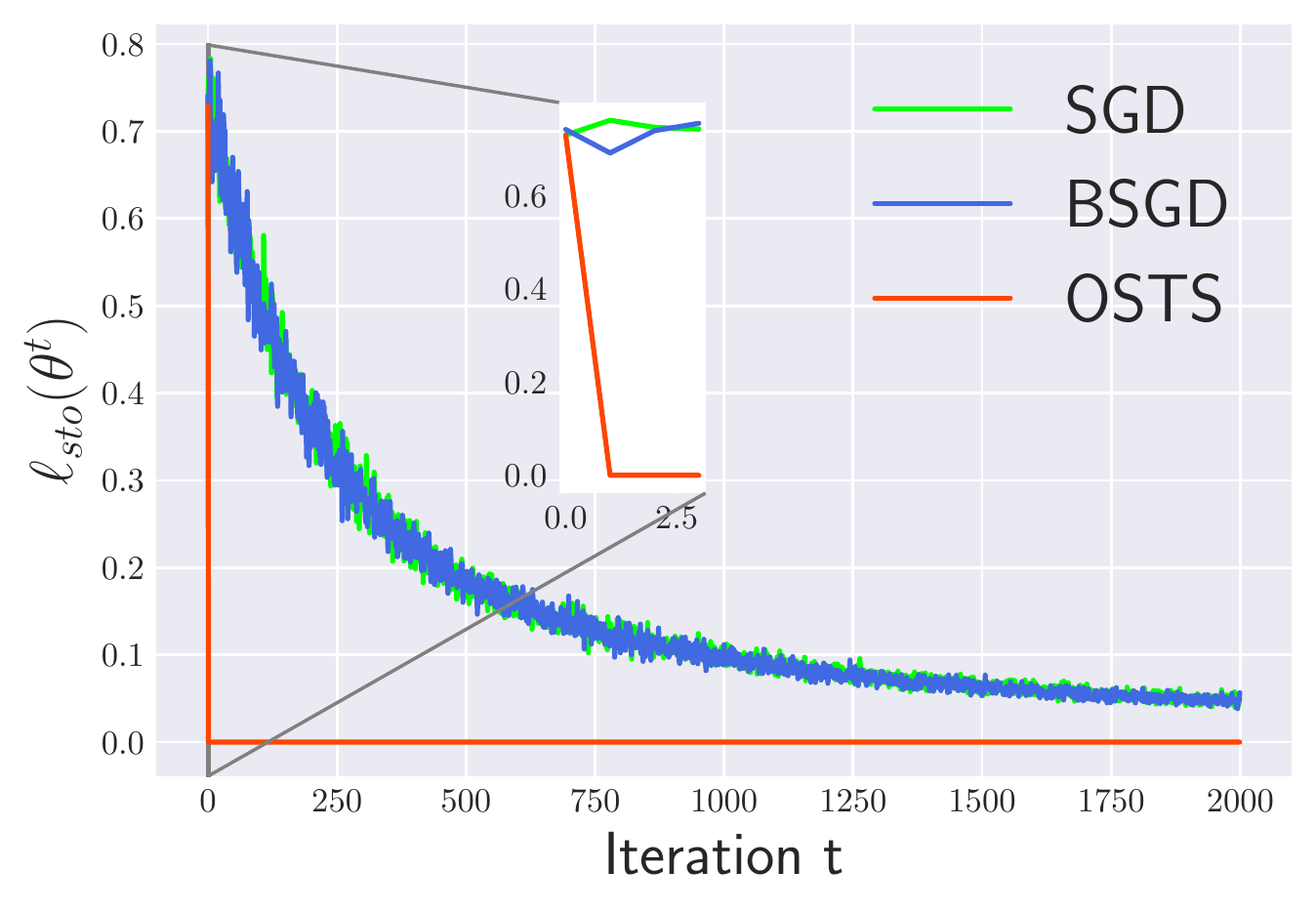}
			\includegraphics[width=0.3\linewidth]{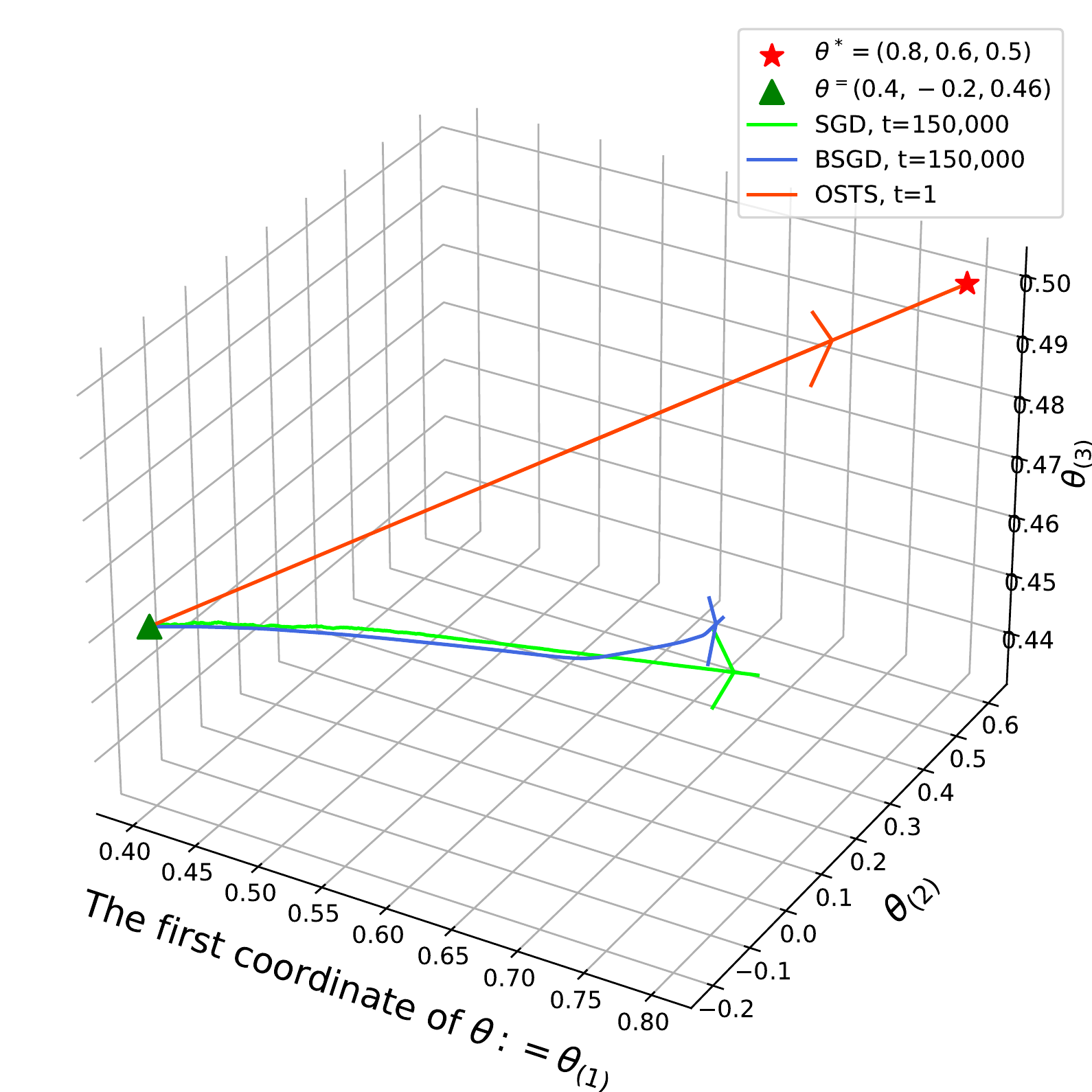}
		\end{minipage}
	}
	\caption{Teaching three typical learners on Gaussian data. Figures of $\mathcal{B}(\theta^t,\theta^*)$ and $\theta^t$ evolution of three learners show that OSTS arrives $\theta^*$ at one step while SGD and BSGD approach $\theta^*$ gradually. $\ell_{sto}(\theta)$ of OSTS drops significantly faster than that of SGD and BSGD.}
	\label{b_l_lp}
	\vskip -0.2in
\end{figure*}

We compare testing accuracy of LR learners guided by OSTS and machine learning strategies on MNIST \cite{lecun1998mnist} and CIFAR10 \cite{cifar10}. In detail, unsupervised baselines are K-means, hierarchical and spectral clustering, and supervised ones are active learning strategies including max-entropy maximizing the predictive entropy \cite{shannon2001mathematical} and mean-std maximizing the mean standard deviation \cite{kampffmeyer2016semantic,kendall2018segnet}. As expected, using one teaching example, OSTS surpass all these baselines. Specifically, OSTS has the highest testing accuracy, and it also costs less examples than other strategies to help learners converge to $\theta^*$.

Figure \ref{ml} shows that no matter binary or 10-class classification tasks, OSTS is the most efficient strategies in terms of the cost examples and the testing accuracy. This illustrates the necessity of teaching, that is, being taught can be faster than asking questions \cite{rivest1995being}. Further, such outperformance is increasingly significant as the complexity of classification tasks grows, which implies the higher value of OSTS in complicated cases. On the other hand, active learning strategies outperform unsupervised ones in complicated cases since they have more clear objective of example selection, towards which it can accelerate improvement of accuracy. Since we sample randomly for clustering, the curves of unsupervised strategies are winding.

\subsection{Comparing OSTS with iterative strategies}

When comparing OSTS with iterative strategies (SGD and BSGD), we further assess the performance including learner bias $\mathcal{B}(\theta^t,\theta^*)$ and testing loss $\ell(\theta^t)$. As expected, costing less, OSTS outperforms iterative baselines in terms of the speed of convergence and the testing accuracy on synthetic and real-world (MNIST and CIFAR10) data.

\begin{figure}[t]
	\vskip -0.1in
	\subfigbottomskip=-6pt
	\subfigcapskip=0pt
	\subfigure[0 VS. 1]{
		\begin{minipage}[c]{\linewidth}
			\includegraphics[width = 0.32\linewidth]{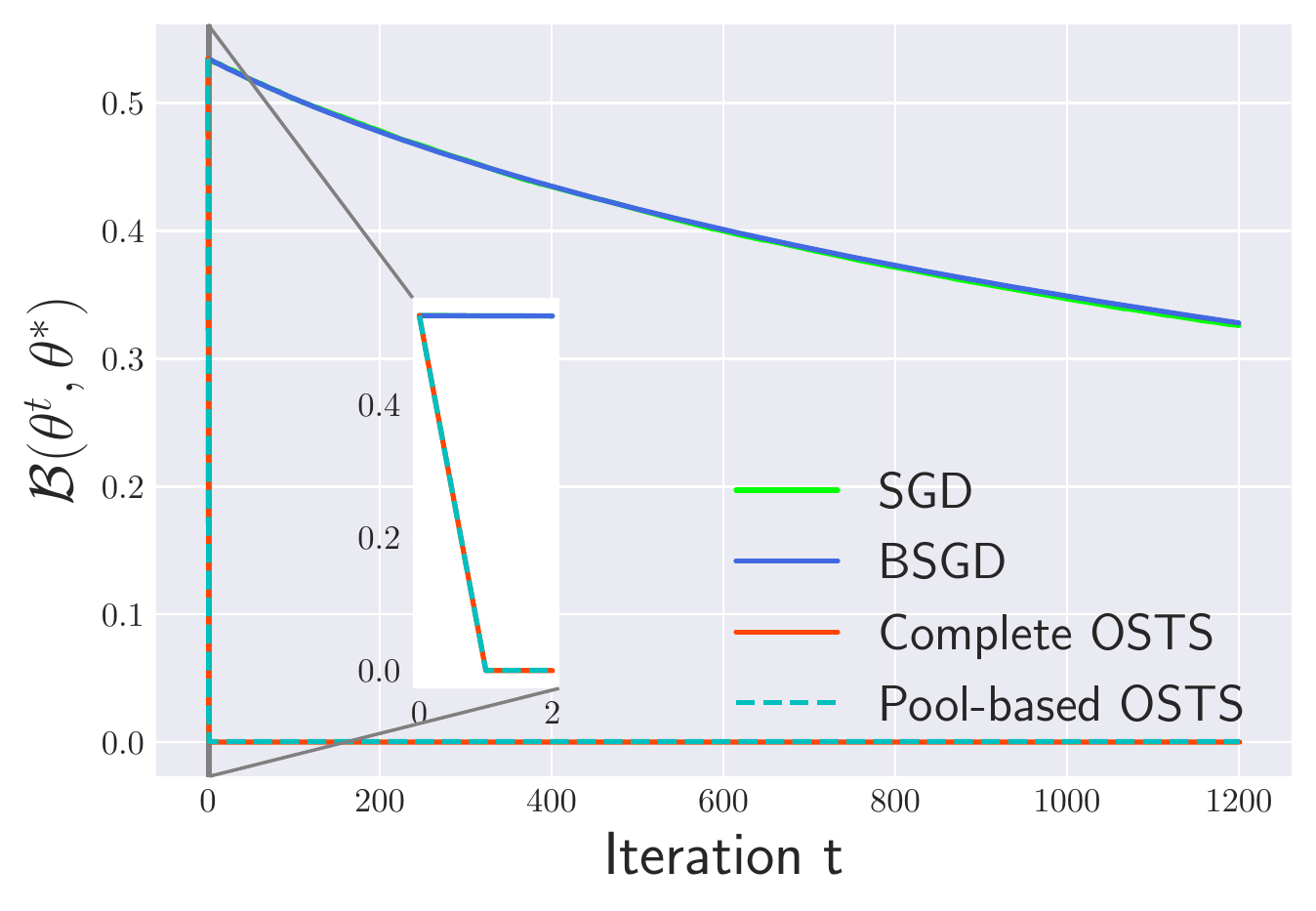}
			\includegraphics[width = 0.32\linewidth]{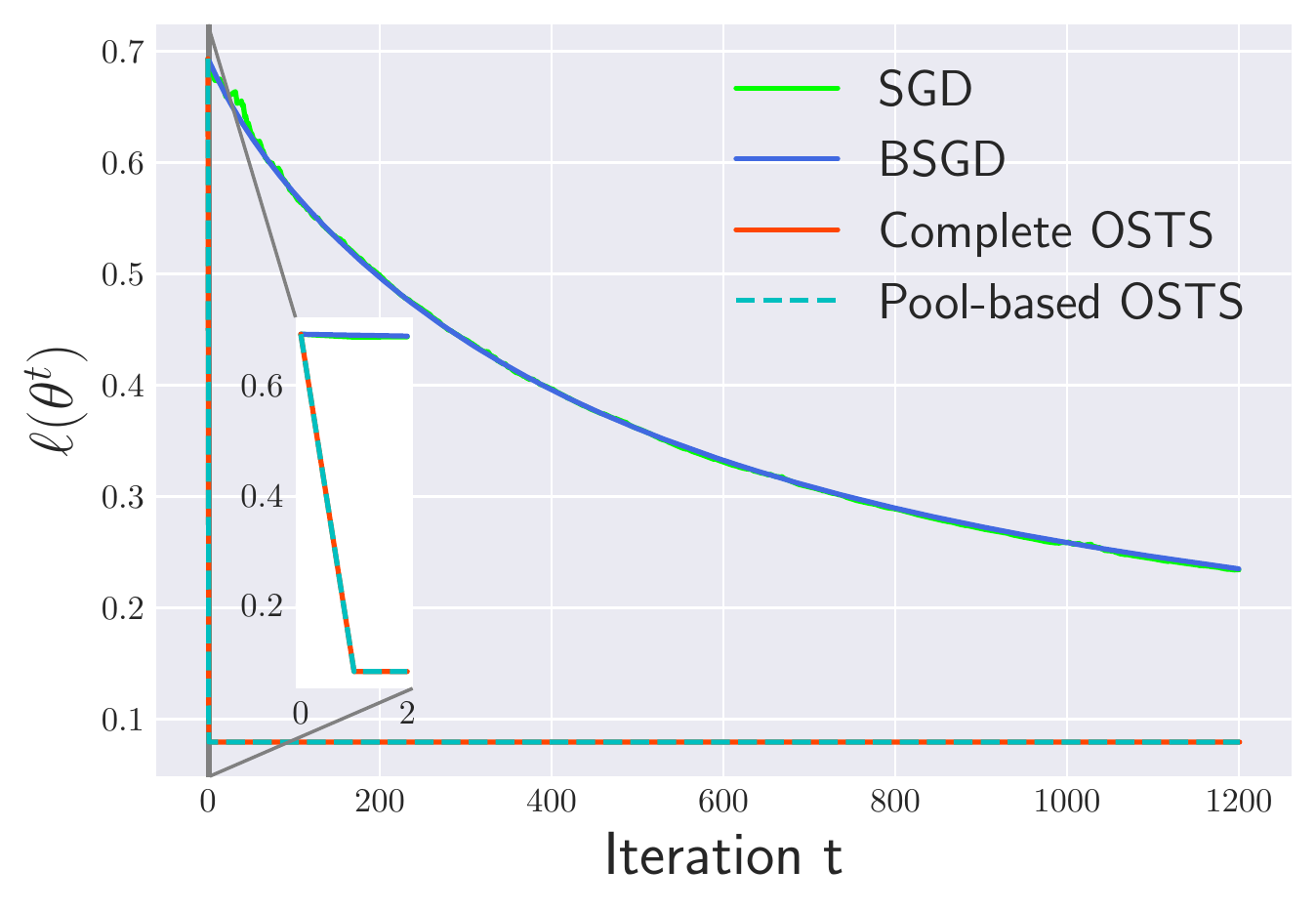}
			\includegraphics[width = 0.32\linewidth]{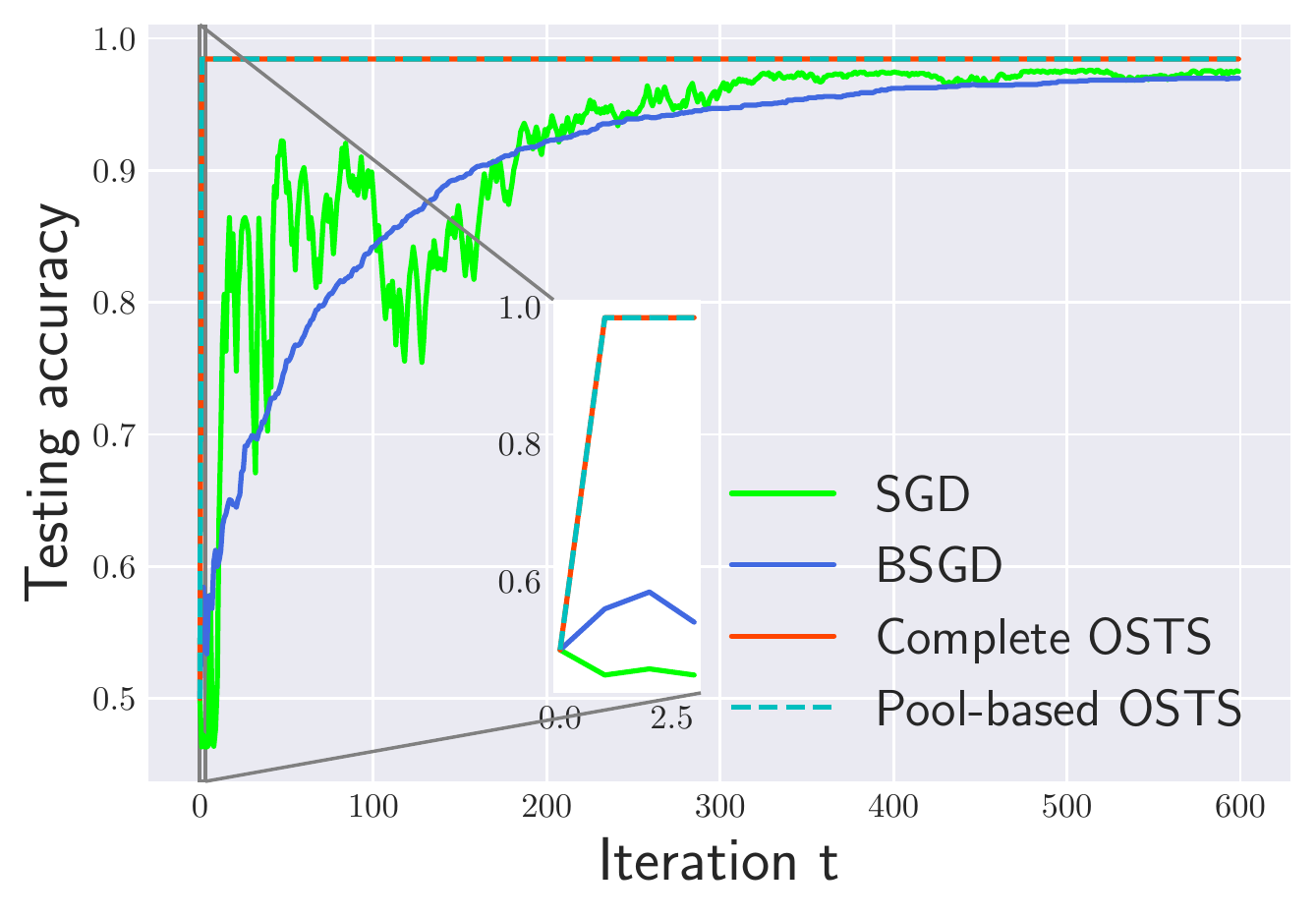}
		\end{minipage}
	}
	\hspace{-10pt}
	\subfigure[3 VS. 8]{
		\begin{minipage}[c]{\linewidth}
			\includegraphics[width = 0.32\linewidth]{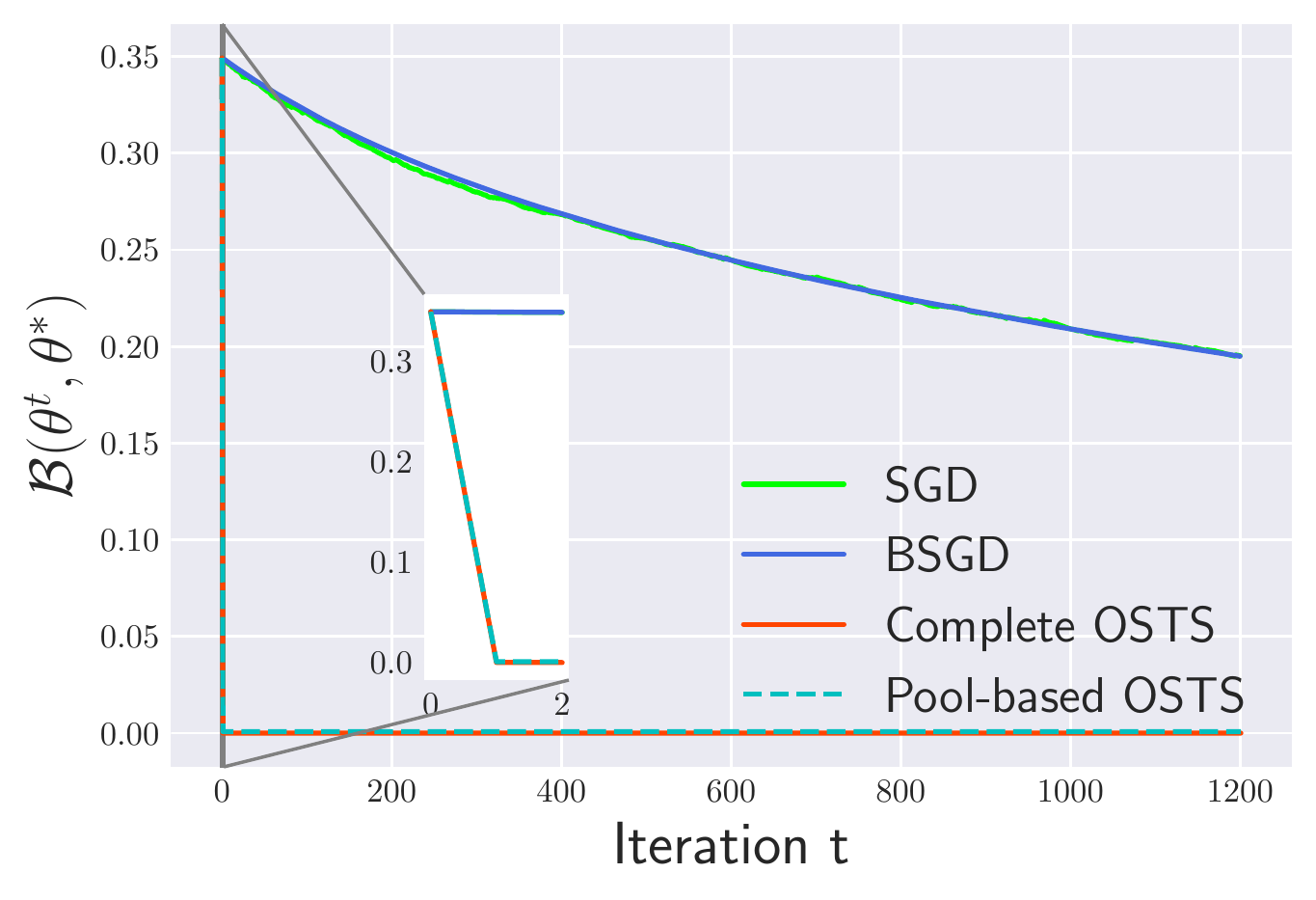}
			\includegraphics[width = 0.32\linewidth]{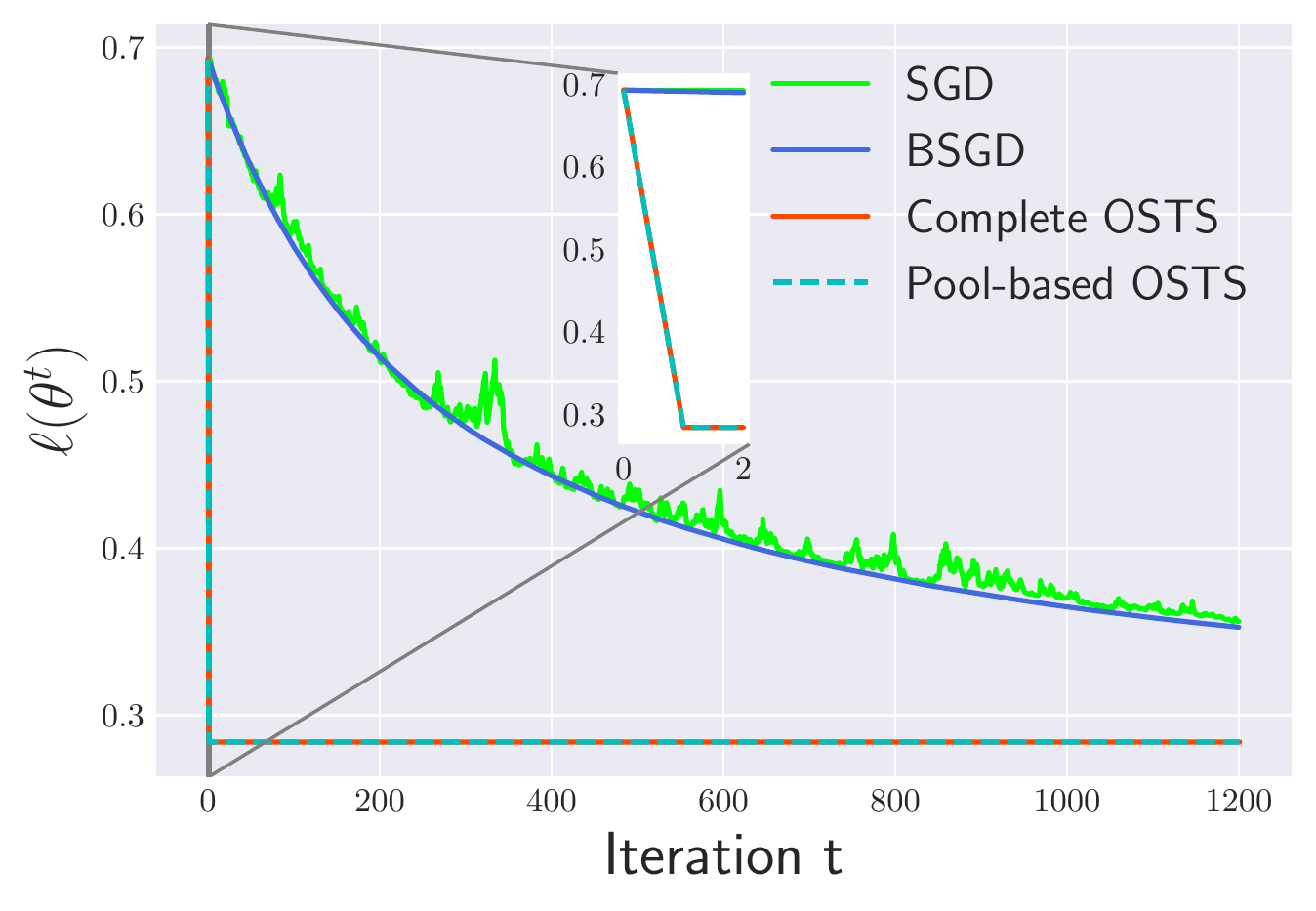}
			\includegraphics[width = 0.32\linewidth]{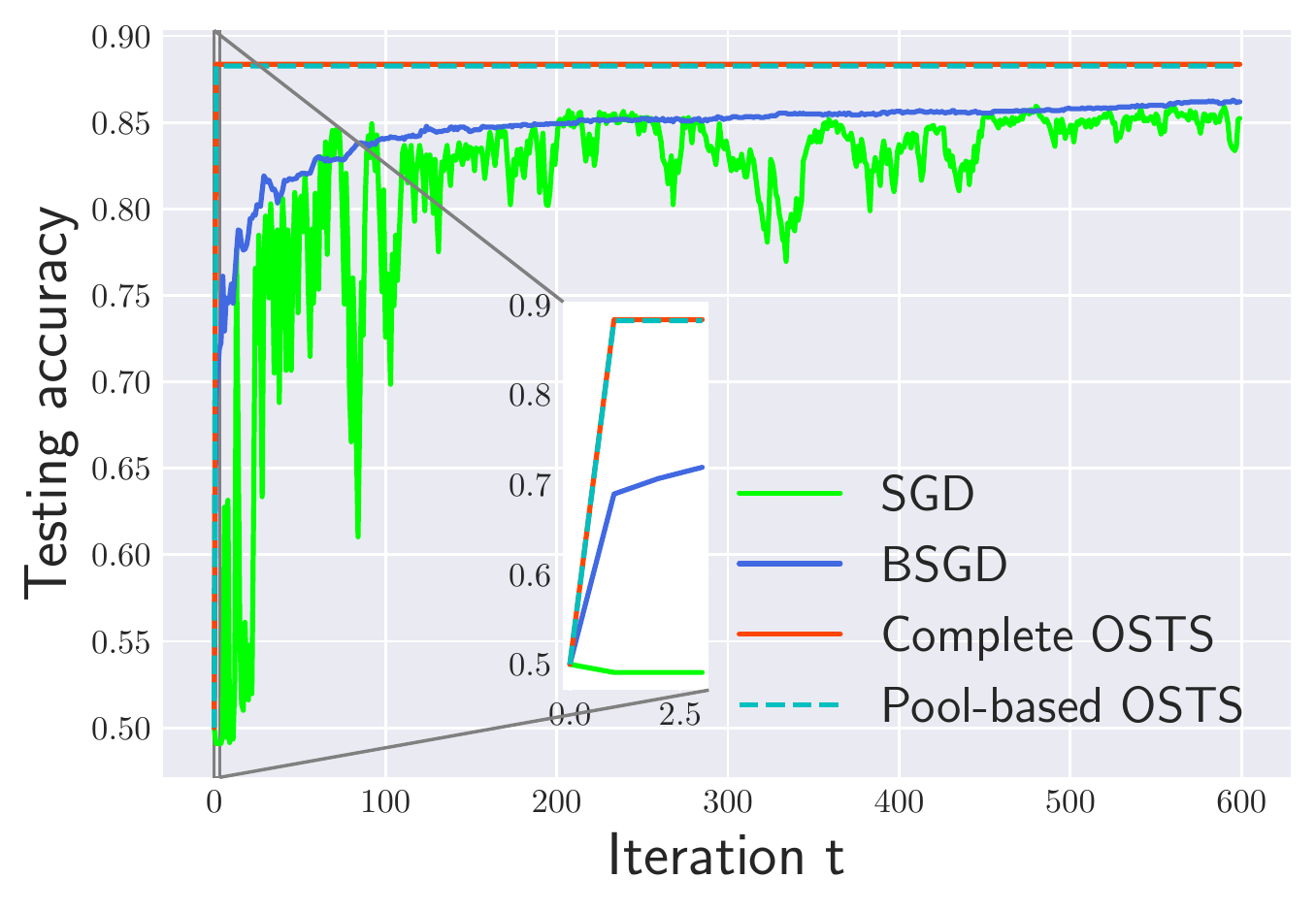}
		\end{minipage}
	}
	\caption{Teaching LR learners on MNIST dataset. Two $\mathcal{B}(\theta^t,\theta^*)$ figures illustrate that learners guided by OSTS under complete and pool-based teachers converge to $\theta^*$ at one step. Shown in testing loss and accuracy figures, OSTS under complete and pool-based teachers have lower testing loss and higher testing accuracy after one iteration, in contrast with gradual improvement of SGD and BSGD.}
	\label{b_l_ta_mnist}
\end{figure}

\subsubsection{Synthetic data}\label{atfd}

\begin{figure}[t]
	\begin{center}
		\advance\leftskip+1mm
		\renewcommand{\captionlabelfont}{\footnotesize}
		\vspace{-0.25in}  
		\includegraphics[width = .5\linewidth]{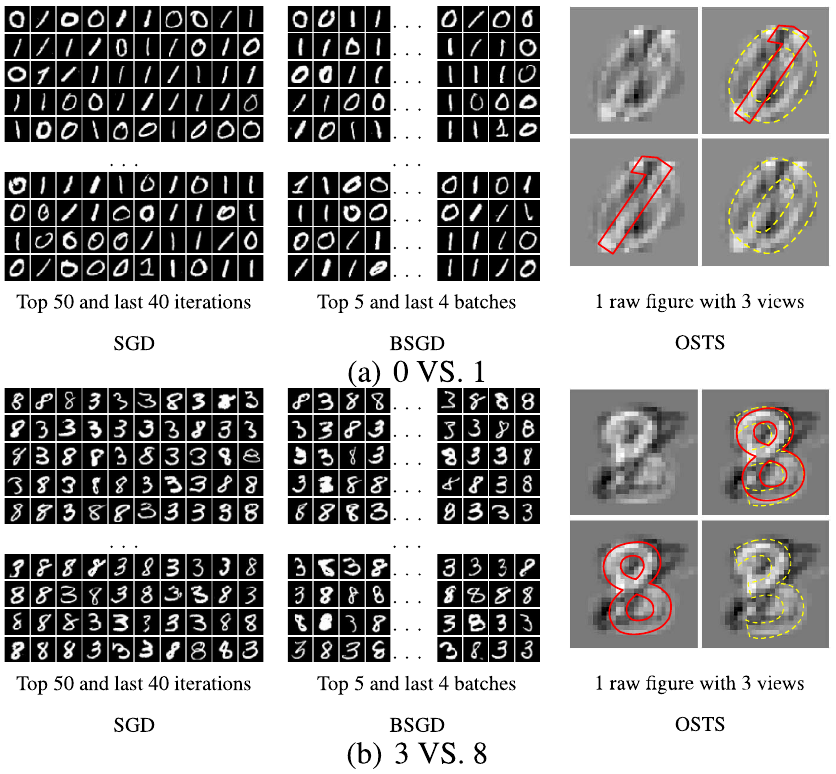}
		\vspace{-0.07in} 
		\caption{\footnotesize Teaching sets under different strategies on MNIST dataset in left-to-right and top-to-bottom ordering. There are no significant trends of the selection ordering under SGD and BSGD. Two optimal teaching examples $(x^*,y^*)$s are ambiguous.
		}
		\label{FOIPSinMNIST}
		\vspace{-0.11in} 
	\end{center}
\end{figure}

To show the general applicability of OSTS, we take LSR, SVM and LR learners mentioned in Section \ref{gl} as instances. LSR learners are taught regression tasks when SVM and LR learners are taught binary classification tasks. Besides, it is assumed that there are intercept terms in $\theta$ of these three learners. Out of robustness, we utilize stochastic loss $\ell_{sto}(\theta)$ on synthetic datasets defined as below. We set \texttt{Batch} in stochastic loss and BSGD as 100.

\begin{defn}[Stochastic Loss]
	Given a pool of examples, stochastic loss of a specific parameter $\theta$ is the average loss of a batch of stochastically chosen examples, which is evaluated at model $f(\theta,x)$.
\end{defn}

For instance, stochastic loss of $\theta^*$ is
$$\ell_{sto}(\theta^*)=\frac{1}{\text{batch}}\sum_{i=1}^{\text{batch}}\ell(f(\theta^*,x_i),y_i).$$ 

\textbf{LSR learner.} We set $\theta\in\Theta=\mathbb{R}^2$ so that we can visualize the training procedure of LSR learners. Specifically, $\theta^*=(-0.8,0.6)^T\text{ and }\theta^0=(0.1,0.3)^T$. Besides, teaching set of SGD and BSGD, $\mathcal{D}=\{(x_i,y_i)\}_{i=1}^{1000}$ contains 1000 examples where $x_i\sim \mathcal{U}(-10,10)$ and $y_i=\left<\theta^*,x_i\right>+\epsilon,\epsilon\sim \mathcal{N}(0,2^2)$. We use $\mathcal{U}$ and $\mathcal{N}$ to denote uniform and normal distribution, respectively. It follows from Theorem \ref{ost} that value of $\eta$ does not affect one-iteration convergence, we set $\eta$ for OSTS ($\eta_{\text{OSTS}}$) as 0.01 out of easy-visualized purpose, and $\eta$s for SGD and BSGD ($\eta_{\text{SGD\&BSGD}}$) are set 0.0001.

\textbf{SVM learner.} We set $\theta\in\Theta=\mathbb{R}^3$ for SVM learners with $\theta^*=(0.8,1,-0.6)^T \text{ and }
\theta^0=(-0.1,0.1,-0.7)^T$. Meanwhile, 500 positive and 500 negative teaching examples of SGD and BSGD are constructed as $x_+\sim \mathcal{N}\left((1.8,2.85)^T
,\text{I}_2\right)$ and $x_-\sim \mathcal{N}\left((-1.8,-1.65)^T,\text{I}_2\right)$ with 2-d identity matrix $\text{I}_2$. Similarly to LSR learners, $\eta_{\text{OSTS}}=0.1$ and static $\eta_{\text{SGD\&BSGD}}=0.0001$.

\textbf{LR learner.} $\Theta=\mathbb{R}^3$ as well. $\theta^*=(0.8,0.6,0.5)^T\text{ and }
\theta^0=(0.4,-0.2,0.46)^T$. Besides, 500 $x_+\sim \mathcal{N}\left((5,10)^T,\text{I}_2\right)$ and 500 $x_-\sim \mathcal{N}\left((-5,-10)^T,\text{I}_2\right)$. Same as previous two learners, $\eta_{\text{OSTS}}=0.1$ and static $\eta_{\text{SGD\&BSGD}}=0.0001$.

The performance of these all three learners is evaluated with learner bias $\mathcal{B}(\theta^t,\theta^*)$ and stochastic loss $\ell_{sto}(\theta^t)$ in combination with visualization of $\theta^t$ evolution. Learner bias and stochastic loss figures in Figure \ref{b_l_lp} clearly show that OSTS outperform than SGD and BSGD, demonstrating the effectiveness of our OSTS. In detail, $\mathcal{B}(\theta^t,\theta^*)$ and $\ell(\theta^t)$ of SGD and BSGD decrease gradually as iteration number grows thus, they are curves. More interestingly, since LSR, SVM and LR learners all are able to converge to $\theta^*$ within one iterative training on $(x^*,y^*)$ derived by OSTS, OSTS looks like seemingly vertical lines. Actually, the reason of this appearance is that the range of $x$ axis is just [0,1], which we have zoomed in. 2D and 3D figures of $\theta^t$ evolution are more vivid, which show that SGD and BSGD help learners approach $\theta^*$ progressively while learners of OSTS arrives $\theta^*$ at one step. Magical performance of OSTS shows the significant effectiveness of our theoretical proposition. Just as the fact in real world that there are many knowledgeable and sophisticated teachers qualified to help learners learn the knowledge with few efforts, OSTS qualifies every teacher to make it.

\begin{figure}[t]
	\subfigbottomskip=-6pt
	\subfigcapskip=0pt
	\centering
	\includegraphics[width = 0.32\linewidth]{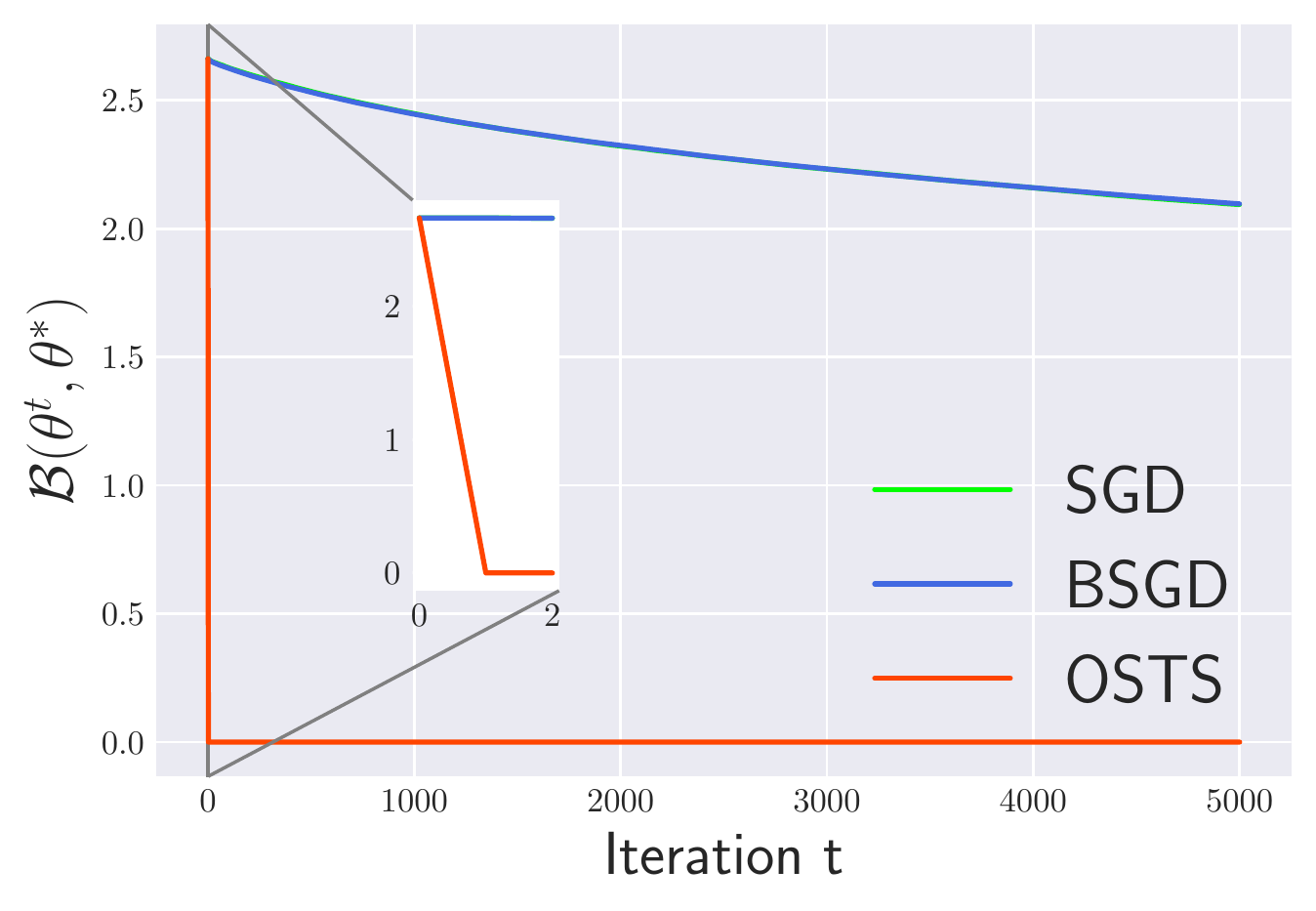}
	\includegraphics[width = 0.32\linewidth]{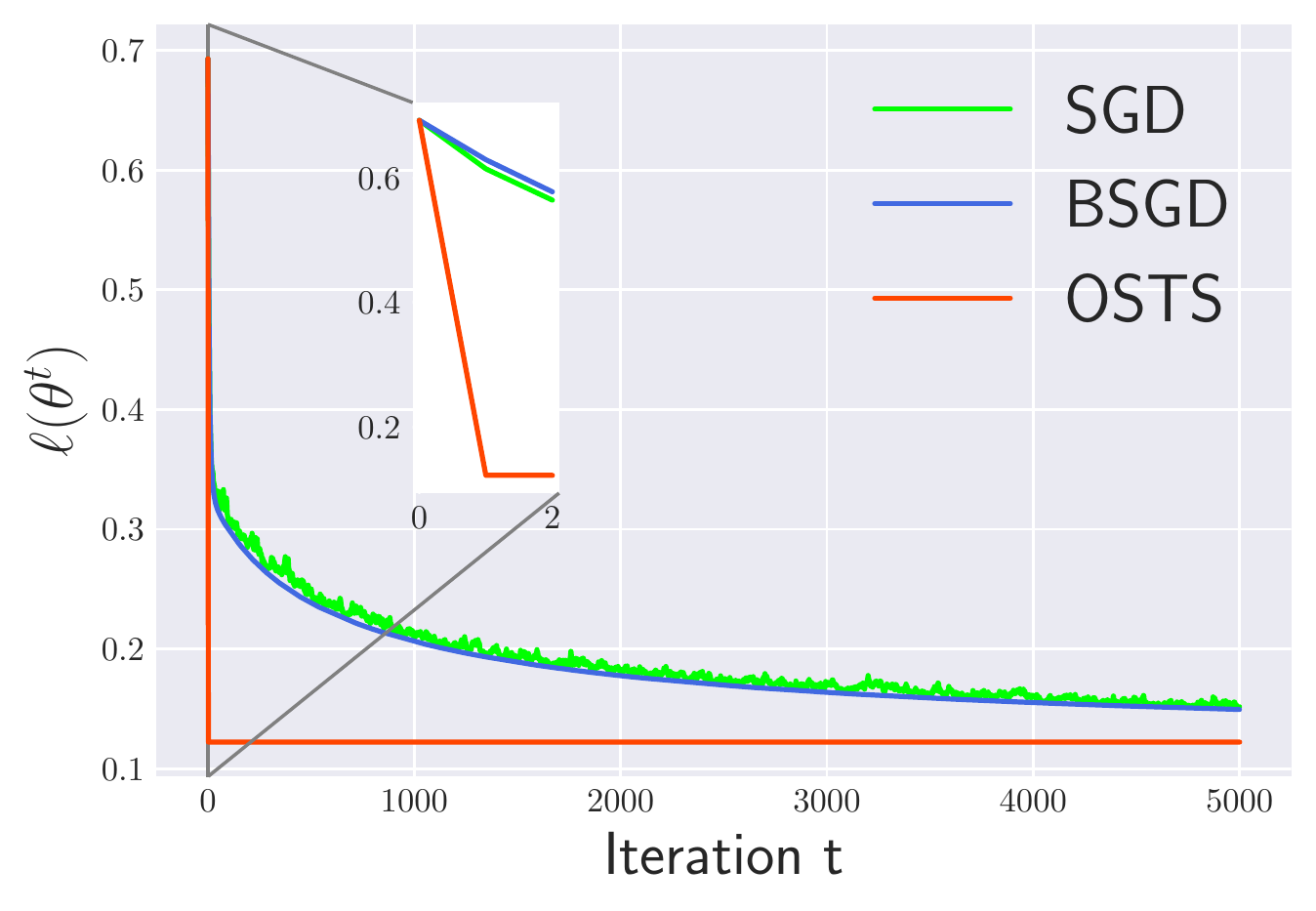}
	\includegraphics[width = 0.32\linewidth]{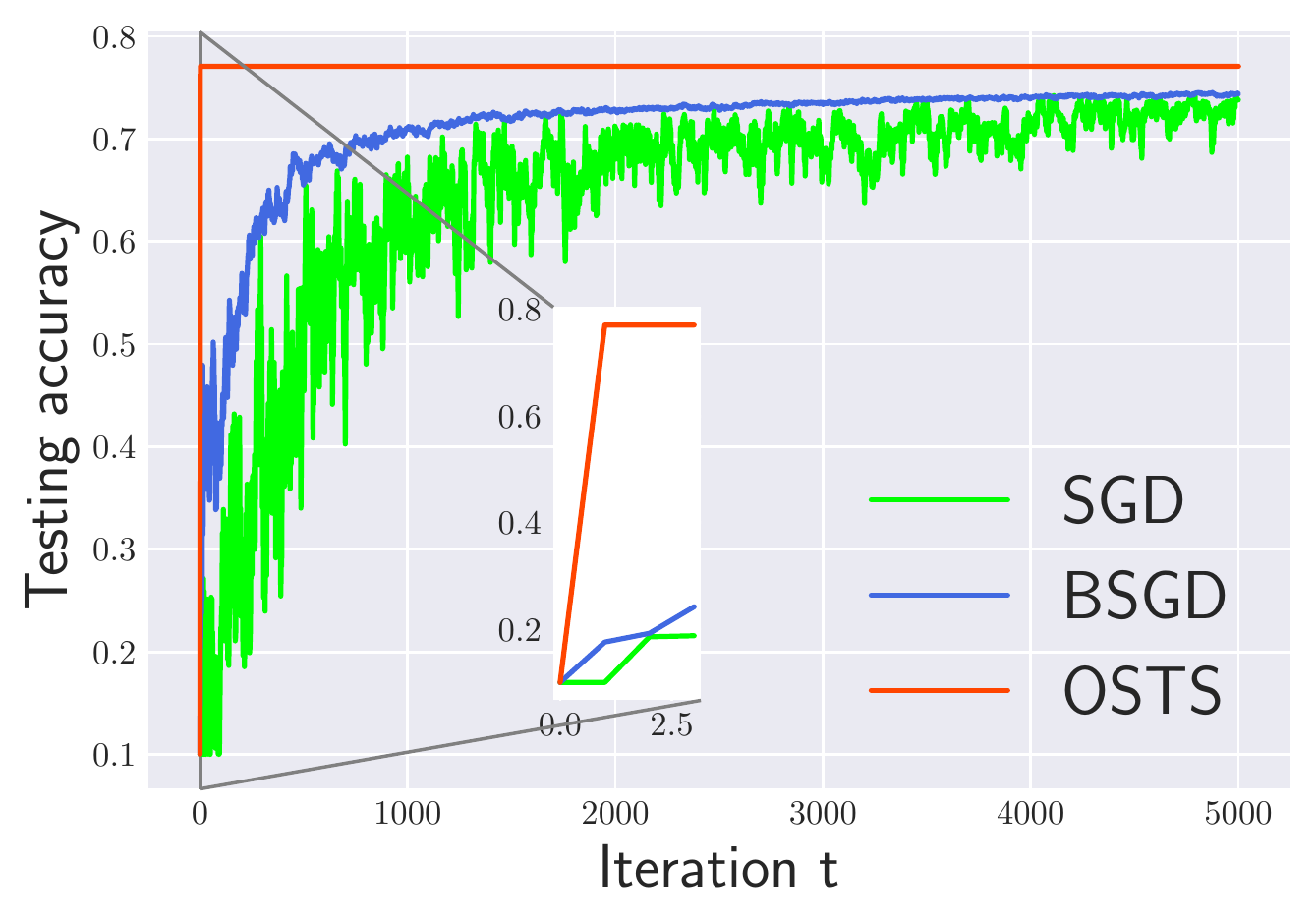}
	\vskip -0.15in
	\caption{Teaching LR learners on CIFAR10 dataset. $\mathcal{B}(\theta^t,\theta^*)$ curves show that SGD and BSGD approach $\theta^*$ slowly while OSTS reach $\theta^*$ at one step. Testing loss and accuracy figures illustrate that OSTS costs less (one iteration) than SGD and BSGD (more than 1000 iterations) to have competitive performance.}
	\label{b_l_ta_cifar10}
\end{figure}

\subsubsection{Real-world datasets}

\textbf{MNIST.} We consider binary classification tasks and LR learners, and test the performance of OSTS under both complete and pool-based teachers. We are mainly concerned with combinable teachers among pool-based teachers since it is more practical and flexible. First of all, random projection is hired to reduce 28*28 = 784D into 24D features (projection with a random matrix $\mathbb{R}^{784\times24}$). Based on 24D features, the LR learner with no primary knowledge is trained to do binary classification tasks, which are distinguishing 0 from 1 and 3 from 8. We randomly choose 48 teaching examples from the whole dataset of MNIST as the pool of combinable teacher $\mathcal{P}_{48}$, and the generative coefficient vector is 48D $\beta\in\mathbb{R}^{48}$. Besides, it is assumed that there are no intercept terms in $\theta$. $\eta_{\text{OSTS}}=\eta_{\text{SGD\&BSGD}}=0.0001$.

\begin{table}[b]
	\caption{Final results of different strategies on MNIST. "cOSTS" denotes OSTS under the complete teacher and "pOSTS" represents OSTS under the pool-based (combinable) teacher.}
	\label{mnisttb}
	\begin{center}
		\begin{small}
			\resizebox{\textwidth}{!}{
				\begin{tabular}{lcccccccc}
					\toprule
					&\multicolumn{4}{c}{0 VS. 1}   & \multicolumn{4}{c}{3 VS. 8}\\
					\cmidrule(lr){2-5}\cmidrule(lr){6-9}
					& SGD    & BSGD    & cOSTS    &pOSTS  & SGD    & BSGD   & cOSTS & pOSTS \\
					\midrule                        
					Learner Bias           & 0.3258 & 0.3279  & \textbf{0}       & 0.0003          & 0.1952 & 0.1950 & \textbf{0} & 0.0008\\
					Stochastic loss        & 0.2340 & 0.2350  & \textbf{0.0794}  & \textbf{0.0794} & 0.3560 & 0.3525 & \textbf{0.2837} & 0.2838\\
					Testing accuracy (\%)  & 97.68  & 97.49   & \textbf{98.44}   & \textbf{98.44}         & 86.44  & 87.00  & \textbf{88.36} & 88.26\\
					\bottomrule
				\end{tabular}
			}
		\end{small}
	\end{center}
	\vskip -0.1in
\end{table}

Figure \ref{b_l_ta_mnist} illustrates that SGD and BSGD help learners attain $\theta^*$ gradually while OSTS for both the complete and pool-based teacher did it at one step, which shows OSTS is more efficient. Interestingly, if teachers provide teaching sets at random, even they know $\theta^*$, learners usually fail to learn $\theta^*$. Instead, learners tend to converge to another suboptimal parameter ${\theta^*}'$, which results in higher testing loss and lower testing accuracy. These findings agree with the necessity of supervision from teachers. Final results are listed in Table \ref{mnisttb}. Besides, Figure \ref{FOIPSinMNIST} shows teaching sets of SGD and BSGD in combination with $(x^*,y^*)$s derived by OSTS. One can see examples in SGD and BSGD are randomly ordered. Aware of the linearity of random projection, we present $(x^*,y^*)$ of complete teachers which are ambiguous. The auxiliary lines of them offer two sights of $(x^*,y^*)$s in 0/1 or 3/8. Taking $(x^*,y^*)$ in 0/1 as an instance, it is like 1 and also close to 0, which is hard to classify. Actually, the label of $(x^*,y^*)$s are 1 and 8, respectively. Ambiguous appearance meets expectation since only after learning ambiguous examples, the learner is able to make a right judgment on indistinguishable samples, then we can say this learner is qualified to handle this classification problem. It is also consistent with active learning \cite{settles2009active} where learners query oracle with increasingly ambiguous examples.

\begin{table}[t]
	\footnotesize
	\caption{Structure of CNN.}
	\label{cnn}
	\vskip 0.15in
	\begin{center}
		\begin{small}
			\resizebox{\textwidth}{!}{
				\begin{tabular}{ccccccc}
					\toprule
					\multicolumn{2}{c}{Operation Layer} &  Filters Number & Filters Size & Stride & Padding & Output Size \\
					\midrule
					\multicolumn{2}{c}{Input image}        & -  &          -          & - & - & $3\times32\times32$\\
					\midrule
					\multirow{4}{*}{Block 1} & Convolution & 32 & $3\times3\times3$   & 1 & 1 & $32\times32\times32$\\
					& ReLU        & -  &          -          & - & - & $32\times32\times32$\\
					& Convolution & 64 & $32\times3\times3$  & 1 & 1 & $64\times32\times32$\\
					& ReLU        & -  &          -          & - & - & $64\times32\times32$\\
					& Max pooling & 1  &     $2\times2$      & 2 & 0 & $64\times16\times16$\\
					\midrule
					\multirow{4}{*}{Block 2} & Convolution & 128 & $64\times3\times3$   & 1 & 1 & $128\times16\times16$\\
					& ReLU        & -  &          -          & - & - & $128\times16\times16$\\
					& Convolution & 128 & $128\times3\times3$  & 1 & 1 & $128\times16\times16$\\
					& ReLU        & -  &          -          & - & - & $128\times16\times16$\\
					& Max pooling & 1  &     $2\times2$      & 2 & 0 & $128\times8\times8$\\
					\midrule
					\multirow{4}{*}{Block 3} & Convolution & 256 & $128\times3\times3$   & 1 & 1 & $256\times8\times8$\\
					& ReLU        & -  &          -          & - & - & $256\times8\times8$\\
					& Convolution & 256 & $256\times3\times3$  & 1 & 1 & $256\times8\times8$\\
					& ReLU        & -  &          -          & - & - & $256\times8\times8$\\
					& Max pooling & 1  &     $2\times2$      & 2 & 0 & $256\times4\times4$\\
					\midrule
					Final block              & Fully connected & -  &          -          & - & - & $10$\\
					\bottomrule
				\end{tabular}
			}
		\end{small}
	\end{center}
	\vskip -0.1in
\end{table}

\begin{table}[t]
	\caption{Final results of different strategies on CIFAR10.}
	\label{cifar10tb}
	\vskip 0.15in
	\begin{center}
		\begin{small}
			\begin{tabular}{lccc}
				\toprule
				& SGD    & BSGD    & OSTS             \\
				\midrule                        
				Learner Bias           & 2.0940 & 2.0965  & \textbf{0}       \\
				Testing loss           & 0.1517 & 0.1494  & \textbf{0.1222}  \\
				Testing accuracy (\%)  & 73.81  & 74.37   & \textbf{77.08}   \\
				\bottomrule
			\end{tabular}
		\end{small}
	\end{center}
	\vskip -0.1in
\end{table}

\textbf{CIFAR10.} We consider 10-class classification tasks and LR learners. We extend the shallow models to the deep ones, the convolutional neural network (CNN) whose structure is provided in Table \ref{cnn}. A CNN can be divided into feature extraction and classification parts. Specifically, feature extraction is comprised of blocks of convolutional, ReLU and pooling layers and classification refers to the final fully connected (FC) layers. We make use of feature extraction part to do feature engineering and derive input $x\in\mathcal{X}=\mathbb{R}^{4096}$. Based on 4096D features from feature extraction, the learner needs to learn the 10-class classifier identified by $\theta\in\Theta=\mathbb{R}^{4096\times10}$. Experiment results show that even with a large number of features, OSTS always works. We extract features at first 3 layer blocks of CNN, then train the fully connected layer on extracted features with the loss function (the sum of ten logistic loss functions for ten classes), finally teach this classifier to learners. $\eta_{\text{OSTS}}=\eta_{\text{SGD\&BSGD}}=0.001$.

Analogously, Figure \ref{b_l_ta_cifar10} illustrate that for 10-class classification tasks, OSTS teaches $\theta^*$ at one step while SGD and BSGD accomplish it step by step. Besides, it needs 10 distinct $(x^*,y^*)$ for OSTS to do that, and it needs more than 1000 teaching examples for SGD or BSGD. These show the high efficiency of OSTS. Besides, SGD has a poorer performance than BSGD since training on a single example (SGD) is not enough to distinguish 10 classes. The final results are shown in Table \ref{cifar10tb}.

\section{Conclusion}\label{cl}

In this paper, we propose a more intelligent machine teaching paradigm named one-shot machine teaching to achieve one-shot iteration, which is to search a singleton that can steer learners to converge after one iteration. Establishing a tractable mapping proved to be surjective from teaching sets to model parameters, the one-shot paradigm corroborates the existence of the optimal teaching set by which the optimal model can be learnt after one iteration. Relying on the surjective mapping, we develop a canonical design strategy of the optimal teaching set, namely one-shot teaching strategy whose teaching dimension and iterative teaching dimension are one, for both complete and pool-based (combinable, scalable and naive) teachers. We discuss the generalizability of our strategy w.r.t. the learner loss functions (e.g., square, hinge and logistic loss). The high efficiency of our strategy has been empirically verified, which also demonstrates the intelligence of this new teaching paradigm.

\section*{Acknowledgement}

This work was supported by the National Natural Science Foundation of China under Grant 62206108.

\bibliographystyle{elsarticle-num-names}
\bibliography{main.bib}

\end{document}